\pgfplotsset{compat=1.17}
\newcommand{\norm}[1]{\left\| #1 \right\|}
\newcommand{\abs}[1]{\left | #1 \right |}
\newcommand{\normdist}[2]{\mathcal{N}(#1, #2)}
\newcommand{\unifdist}[2]{\mathcal{U}(#1, #2)}
\newcommand{\di}[1]{\mathop{d#1}}
\begin{document}

\title{Robust training of implicit generative models for multivariate and heavy-tailed distributions with an invariant statistical loss}

% \author{\name José Manuel de Frutos Porras \email jofrutos@ing.uc3m.es \\
%     \addr Department of Signal Theory and Communications\\
%     Carlos III University of Madrid\\
%     Leganés, 28911 Madrid, Spain
%        \AND
%        \name Manuel A. Vázquez \email mavazque@ing.uc3m.es \\
%        \addr Department of Signal Theory and Communications\\
%        Carlos III University of Madrid\\
%        Leganés, 28911 Madrid, Spain
%        \AND \name Pablo M. Olmos \email pamartin@ing.uc3m.es \\
%        \addr Department of Signal Theory and Communications\\
%        Carlos III University of Madrid\\
%        Leganés, 28911 Madrid, Spain
%        \AND \name Joaquín Míguez \email jmiguez@ing.uc3m.es \\
%        \addr Department of Signal Theory and Communications\\
%        Carlos III University of Madrid\\
%        Leganés, 28911 Madrid, Spain
%        }

\author{%
  \name José Manuel de Frutos\textsuperscript{a,*} \email jofrutos@ing.uc3m.es \\ 
  \name Manuel A. Vázquez\textsuperscript{a}      \email mavazque@ing.uc3m.es   \\ 
  \name Pablo M. Olmos\textsuperscript{a}         \email pamartin@ing.uc3m.es  \\ 
  \name Joaquín Míguez\textsuperscript{a}         \email jmiguez@ing.uc3m.es   \\
  \addr \textsuperscript{a}Department of Signal Theory and Communications, Carlos III University of Madrid, Leganés 28911, Madrid, Spain
}
% Footnote for corresponding author
\def\thefootnote{\fnsymbol{footnote}}%
\footnotetext[1]{Corresponding author.}%
\def\thefootnote{\arabic{footnote}}%
\editor{My editor}

\maketitle

\begin{abstract}%   <- trailing '%' for backward compatibility of .sty file
% Traditional implicit generative models learn complex distributions using adversarial discriminators, often causing unstable dynamics and mode dropping. In this work, we build on the \textit{Invariant Statistical Loss} (ISL) from \cite{de2024training}—a rank‐based divergence that trains a generator by matching the ordering of real versus synthetic samples without a discriminator—and extend it to heavy‐tailed and multivariate data. To start, we formally define ISL as a divergence between continuous distributions and prove it is continuous and differentiable.

% First, many real‐world datasets exhibit heavy tails that standard methods cannot capture. To address this, we introduce Pareto‐ISL, which replaces Gaussian input noise with a Generalized Pareto distribution (GPD). In practice, Pareto‐ISL forces the generator to allocate mass into its tails, reproducing extreme events while still modeling central characteristics.

% Second, ISL was originally meant for one-dimensional (1D) data, and matching all \(n\) marginals in high dimensions is infeasible. Therefore, we propose ISL‐slicing, which projects data onto random 1D lines, computes a rank histogram per slice, and averages the loss over \(m\) projections to preserve joint dependencies. In experiments, ISL‐slicing scales to images and multivariate time series and can pretrain GANs to reduce mode collapse. Overall, we demonstrate Pareto‐ISL's tail fidelity and ISL‐slicing's robustness across hyperparameters.

Traditional implicit generative models rely on adversarial discriminators, which can lead to unstable optimization and mode collapse. In contrast, the \emph{Invariant Statistical Loss} (ISL) framework \citep{de2024training} defines a divergence between real and generated distributions by comparing empirical ranks. We provide a formal characterization of ISL as a proper divergence on continuous distributions, and prove that it is both continuous and differentiable, making it suitable for gradient‐based training without adversarial instability.

%Traditional implicit generative models learn complex distributions using adversarial discriminators, often causing unstable dynamics and mode dropping. We build on the \emph{Invariant Statistical Loss} (ISL) framework \citep{de2024training}, which sidesteps adversarial dynamics by comparing the ranks of real and generated samples—offering a stable, fully sample-based alternative. We formally define ISL as a divergence between continuous distributions, proving its continuity and differentiability.

To broaden ISL’s applicability, we address two key limitations. Many real-world distributions exhibit heavy tails that standard Gaussian-based generators fail to capture. We introduce Pareto-ISL, replacing the latent Gaussian noise with a generalized Pareto distribution, enabling the generator to better represent both central and extreme events.
We also extend ISL to multivariate data through ISL-slicing, a scalable approach that projects data onto random one-dimensional subspaces, computes rank-based losses per slice, and averages them. This preserves joint structure while remaining computationally efficient.

Together, these contributions make ISL a practical and robust objective across a range of settings. In experiments, Pareto-ISL improves tail fidelity, and ISL-slicing scales effectively to high-dimensional data while also serving as a strong pretraining strategy for GANs.

\end{abstract}

\begin{keywords}
  implicit generative models, deep generative models, mode collapse, heavy-tailed distributions, multivariate distributions
\end{keywords}

\section{Introduction}
\subsection{Motivation}
Generative modeling lies at the heart of many machine‐learning applications—from density estimation \citep{kingma2013auto, ho2020denoising, papamakarios2021normalizing} and data augmentation \citep{shorten2019survey} to unsupervised representation learning \citep{radford2015unsupervised, chen2016infogan}. Broadly speaking, these models fall into two categories (see \cite{mohamed2016learning}): \emph{prescribed models}, which posit and optimize an explicit density (e.g., variational autoencoders \citep{kingma2013auto} or diffusion-based models \citep{ho2020denoising}), and \emph{implicit models}, which learn to generate samples by transforming simple latent noise through a neural network, without ever evaluating a tractable likelihood, for example, generative adversarial networks (GANs).

Since the seminal introduction of GANs by \citet{goodfellow2014generative}, implicit generative modeling has become a cornerstone of modern unsupervised learning. GANs define a two-player minimax game between a generator, which transforms latent noise into samples, and a discriminator, which learns to distinguish real from generated data. While they are capable of producing sharp and high-fidelity outputs, early GANs were plagued by training instability and \emph{mode collapse}, where the generator fails to capture the full diversity of the data distribution (see \citet{arora2018principled}).

To address these issues, a wide range of architectural and theoretical strategies have been developed. The Wasserstein GAN (WGAN) \citep{arjovsky2017wasserstein} replaces the Jensen–Shannon divergence with the Earth Mover’s distance, resulting in more stable gradients and training dynamics. This requires enforcing Lipschitz continuity on the discriminator—initially via weight clipping and later improved with gradient penalties \citep{gulrajani2017improved}. Spectral normalization \citep{miyato2018spectral}, unrolled optimization \citep{metz2016unrolled}, and Jacobian regularization \citep{mescheder2018training} can be used to improve convergence and reduce oscillatory behavior. Multi-discriminator setups \citep{durugkar2016generative, choi2022mcl} and kernel-based alternatives like MMD-GAN \citep{li2017mmd} and Cramér GAN \citep{bellemare2017cramer} have also been explored to address mode collapse. At the architectural level, models such as BigGAN \citep{brock2018large} and StyleGAN \citep{karras2019style, karras2020analyzing, karras2021alias} introduce progressive growing, adaptive normalization, and class conditioning to improve sample quality and control.

While these advances improve training stability and sample realism, they primarily focus on distributions with bounded or light-tailed support. A few recent efforts have begun to tackle the problem of modeling heavy-tailed distributions. For example, ExGAN \citep{bhatia2021exgan} introduces a tail-adaptive training objective to emphasize rare, high-magnitude samples. Pareto GAN \citep{huster2021pareto} modifies the latent space by injecting non-Gaussian, Pareto-distributed noise to better capture extreme-value behavior. However, these models continue to rely on adversarial training, which becomes even more fragile in heavy-tailed settings and requires careful tuning and regularization.

Despite these advances, GAN training remains highly sensitive to hyperparameters and often requires heuristic stabilization techniques. Moreover, implicit models frequently struggle to capture extreme events and heavy-tailed behaviors—phenomena that are especially important in domains such as finance, climate modeling, and anomaly detection \citep{lu2023cm, gong2024testing, seo2024stabilized}.

To address these issues, we build on the \emph{Invariant Statistical Loss} (ISL) framework of \citet{de2024training}, which defines a statistical discrepancy based on the invariance properties of rank statistics. Unlike adversarial methods that rely on a learned critic to compare distributions, ISL replaces the discriminator entirely with a principled, distribution-free criterion.

At the core of ISL lies the following observation: suppose we draw \(K\) i.i.d. samples \(\tilde{y}_1, \dots, \tilde{y}_K \sim \tilde{p}\) from a generative model and take a real data point \(y \sim p\). If we count how many of the samples $\tilde{y}_i$ are less than $y$—a quantity called the rank of $y$—then this rank is uniformly distributed on $\{0, \dots, K\}$ if and only if the model and data distributions coincide, i.e., $p = \tilde{p}$ (see Figure \ref{fig:isl-intuition-cartoon}). This means that the rank statistic acts as an implicit test of distributional equality, without ever needing to evaluate or estimate the true data density \(p\).

Based on this observation, we can repeat the procedure with multiple data points and define a loss function by measuring how far the empirical distribution of ranks deviates from uniformity. This yields a tractable and robust discrepancy that can be estimated efficiently using only samples and optimized via a differentiable surrogate that mimics the rank histogram. The result is a fully sample-based divergence that is invariant with respect to the true form of \(p\), does not involve min-max games, and avoids the instability commonly found in adversarial training. Moreover, as we demonstrate in this work, ISL can be extended and adapted to tackle key limitations of current implicit methods—namely, their poor tail fidelity and tendency to collapse on modes.

While the original ISL formulation in \citet{de2024training} offers a compelling alternative to adversarial training, it is essentially constrained to univariate data and assumed light-tailed target distributions. Extending ISL to multivariate by aligning marginal distributions is ineffective, as it ignores dependencies and becomes computationally intractable in high dimensional distributions. Furthermore, using simple latent priors like Gaussians restricts the model capacity to capture extreme-value behavior. Moreover, the theoretical formulation and properties of ISL have not been rigorously established.
\vspace{0.5cm}
\begin{figure}[ht]
  \centering
  \begin{tikzpicture}[
    every node/.style={font=\sffamily\scriptsize},
    dot/.style={circle, fill=blue!40, inner sep=1pt},
    red/.style={circle, fill=red!60, inner sep=1.5pt},
    histbar/.style={fill=blue!30, draw=black, line width=0.15mm},
    arrow/.style={-{Latex[length=1.2mm]}, thick, opacity=0.4},
    scale=0.95
  ]
    % Panel A (left)
    \begin{scope}[shift={(0,0)}] 
      \draw[gray!50, thick] (0,10) -- (6,10);
      \foreach \x in {0.5,1.0,...,5.5} {
        \node[dot] at (\x,10) {};
      }
      \foreach \x in {0.8,1.3,2.2,2.8,3.5,4.0,5.0} {
        \node[red] at (\x,10) {};
      }
      \node at (3,9.7) {\scriptsize Real samples (red) ranked among model samples (blue)};

      \foreach \x in {0,...,5} {
        \draw[histbar] (\x,5.2) rectangle +(0.9,1.5);
      }
      \node at (3,4.8) {\scriptsize Rank Histogram};
      %\node at (3,6.1) {\scriptsize (Approximately Uniform)};
      \node at (3,4.2) {\textbf{(a) Ideal case: \(p = \tilde{p}\)}};

      % Arrow from top to histogram
      \draw[arrow] (3,9.2) -- (3,7.4);
    \end{scope} \label{fig:isl-intuition-cartoon a}

    % Panel B (right)
    \begin{scope}[shift={(8.5,0)}]  \label{fig:isl-intuition-cartoon b}
      \draw[gray!50, thick] (0,10) -- (6,10);
      \foreach \x in {0.3,0.7,1.1,1.5,1.9,2.3,2.7,3.1, 5.2} {
        \node[dot] at (\x,10) {};
      }
      \foreach \x in {1.1,4.2,4.5,4.8,5.1,3.4} {
        \node[red] at (\x,10) {};
      }
      \node at (3.0,9.7) {\scriptsize Real samples (red) fall in tails of model samples (blue)};

      \foreach \x/\h in {0/1.5, 1/2.0, 2/1.9, 3/0.6, 4/0.3, 5/0.1} {
        \draw[histbar] (\x,5.2) rectangle +(0.9,\h);
      }
      \node at (3,4.8) {\scriptsize Rank Histogram};
      %\node at (3,6.1) {\scriptsize (Deviates from Uniform)};
      \node at (3,4.2) {\textbf{(b) Model mismatch: \(p \neq \tilde{p}\)}};

      % Arrow from top to histogram
      \draw[arrow] (3,9.2) -- (3,7.4);
    \end{scope}
  \end{tikzpicture}

  \caption{\small High-level intuition behind ISL. When the distribution \(\tilde{p}\) (distribution associated to the generator neural network) matches the data distribution \(p\), the ranks of real samples \(y\) among generated samples \(\tilde{y}_1, \dots, \tilde{y}_K\) are uniformly distributed (Figure (a)). If the distributions differ, the rank histogram becomes unbalanced (Figure (b)), revealing a difference that ISL uses to detect how well the model fits the data. Red points correspond to real data samples from \(p\), while blue points represent samples from \(\tilde{p}\) (model samples).}
  \label{fig:isl-intuition-cartoon}
\end{figure}

\subsection{Contributions}

In this work, we address each of the limitations outlined above. We first formally define the ISL as a discrepancy between two probability distributions, and then prove that it constitutes a valid divergence by establishing its continuity and (almost everywhere) differentiability with respect to the generator parameters. To overcome the tail modeling issue, we introduce \emph{Pareto-ISL}, which replaces the latent noise with a Generalized Pareto distribution and leverages unbounded generators to capture the behavior of heavy-tailed distributions. For high-dimensional data, we propose \emph{ISL-slicing}, a scalable extension that yields a multivariate ISL by averaging over random one-dimensional projections that preserve statistical dependencies while avoiding marginal-factorization or adversarial discriminators. Collectively, we believe that these advances make ISL a viable and competitive framework for modern generative modeling across univariate, multivariate, and heavy-tailed settings. A more detailed description of the contributions of the work is provided below.
\begin{enumerate}
    \item Divergence guarantees. We establish that the ISL, denoted \(d_K(p, \tilde{p})\), defines a valid statistical divergence. Moreover, when the generator model \(g_\theta(z)\), with $\theta$ a vector of parameters and $z$ a random input, satisfies mild regularity conditions—such as continuity and a uniform Lipschitz property—\(d_K(p, \tilde{p}_\theta)\) becomes a Lipschitz-continuous function of \(\theta\), and hence differentiable almost everywhere. These properties justify the use of \(d_K\) as a theoretically sound and optimizable objective for training implicit generative models.
    
    \item Pareto‐ISL for heavy tailed data distributions. We extend ISL to model heavy-tailed distributions by replacing the standard Gaussian input with generalized Pareto noise, following principles from extreme value theory \citep{coles2001introduction}. 
    This adaptation enables the generator to more accurately capture extreme‐value behavior in the tails while maintaining fidelity in the central region of the distribution. Empirical results show that Pareto-ISL models distributions exhibiting heavy-tailed behavior more faithfully, improving on both Gaussian-latent ISL and adversarial schemes.

    \item Scalable multivariate training with ISL-slicing. We introduce \emph{ISL-slicing}, a scalable divergence for high-dimensional data that computes one-dimensional ISL ranks along \(m\) random projections on the $(d-1)$-dimensional hypersphere (where $d$ is the dimension of the data space) and averages the results. ISL-slicing avoids the need for marginal factorization or adversarial training, maintains sensitivity to joint structure, and runs in \(\mathcal{O}(mNK)\) time (where $m$ is the number of random projections, $N$ is the batch size, and $K$ is the number of fictitious samples), making it efficient for high-dimensional settings.
    
    \item Empirical validation and GAN pretraining. We demonstrate the effectiveness of ISL methods across a range of benchmarks, showing performance on par with state-of-the-art implicit models. Furthermore, we show that ISL-slicing also serves as an effective pretraining objective for GANs, significantly reducing mode collapse and enhancing sample diversity. In our experiments, ISL pretraining enables simple GANs to outperform more sophisticated (and computationally costly) adversarial models.
\end{enumerate}

\subsection{Organization of the paper}

In Section \ref{preliminaries} we recall basic material on rank statistics from \cite{de2024training}, introduce the ISL divergence and its smooth surrogate, and show that it defines a valid statistical divergence that is differentiable in the generator parameters. Section \ref{section 4} presents Pareto‐ISL for heavy‐tailed modeling. In Section \ref{section Random Projections} we define ISL‐slicing—averaging 1D ISL over random projections—to extend the proposed divergence to multivariate settings and demonstrate its effectiveness across various benchmarks. Section \ref{section Invariant Statistical Loss for Time Series Prediction} applies these techniques to time‐series forecasting. Finally, Section \ref{section 7} is devoted to conclusions.

\section{Rank statistics and the invariant loss function} \label{preliminaries}

\subsection{Discrete uniform rank statistics}\label{Discrete uniform rank statistics}

Let $\tilde{y}_1, \ldots, \tilde{y}_K$ be a random sample from a univariate real distribution with pdf $\tilde{p}$ and let $y$ be a single random sample independently drawn from another distribution with pdf $p$. We construct the set,
\begin{align*}
\mathcal{A}_{K} := \Big\{\tilde{y} \in \{\tilde{y}_{k}\}_{k=1}^{K} : \tilde{y} \leq y\Big\},
\end{align*}
and the rank statistic 
\begin{align}\label{eq: rank statistic}
    A_{K} := |\mathcal{A}_{K}|,
\end{align}
i.e., $A_{K}$ is the number of elements in $\mathcal{A}_{K}$. The statistic $A_K$ is a discrete r.v. that takes values in the set $\{0, \ldots, K\}$; and we denote its pmf as $\mathbb{Q}_K: \{0,\ldots,K\} \mapsto [0,1]$. This pmf satisfies the following key result.
\begin{theorem}\label{theorem1}
If $p=\tilde{p}$ then $\mathbb{Q}_{K}(n)=\frac{1}{K+1}$ $\forall n \in \{0,\ldots, K\}$, i.e., $A_{K}$ is a discrete uniform r.v. on  the set $\{0,\ldots,K\}$.
\end{theorem}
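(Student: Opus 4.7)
The plan is to exploit the exchangeability of the full sample $\{y,\tilde{y}_1,\ldots,\tilde{y}_K\}$ under the hypothesis $p=\tilde{p}$. First I would observe that when $p=\tilde{p}$, the $K+1$ random variables $y,\tilde{y}_1,\ldots,\tilde{y}_K$ are i.i.d.\ draws from a common continuous distribution, hence in particular exchangeable: for every permutation $\sigma$ of $\{0,1,\ldots,K\}$, the vector $(y_{\sigma(0)},y_{\sigma(1)},\ldots,y_{\sigma(K)})$ (with $y_0:=y$) has the same joint distribution as $(y,\tilde{y}_1,\ldots,\tilde{y}_K)$.

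Next I would dispose of ties. Because the common distribution is continuous, for any $i\neq j$ we have $\mathbb{P}(y_i=y_j)=0$, so with probability one all $K+1$ values are distinct and the notion of ``position of $y$ in the sorted order'' is unambiguous. Let $R$ denote the rank of $y$ within the full sample (the number of entries less than or equal to $y$, which equals $A_K+1$ almost surely). Exchangeability implies that $R$ is uniformly distributed on $\{1,2,\ldots,K+1\}$: for each $n\in\{1,\ldots,K+1\}$, the event $\{R=n\}$ corresponds to $y$ occupying a prescribed position in the order statistics, and the $K+1$ such events partition the probability space into equally likely pieces by the symmetry argument above.

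Finally I would translate back from $R$ to $A_K$. Since $A_K=R-1$ almost surely, we get
\begin{equation*}
    \mathbb{Q}_K(n)=\mathbb{P}(A_K=n)=\mathbb{P}(R=n+1)=\frac{1}{K+1},\qquad n\in\{0,1,\ldots,K\},
\end{equation*}
which is the desired conclusion.

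The only delicate step is handling the measure-zero set on which ties occur; everything else is a direct consequence of exchangeability. This is not really an obstacle provided the paper works with continuous distributions (as stated in the surrounding text), but it is worth flagging, since without continuity one would need to define $A_K$ more carefully (e.g.\ via randomization over tied values) in order for the uniformity conclusion to hold exactly.
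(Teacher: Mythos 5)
Your proof is correct. One point of comparison is in order: the paper does not supply its own argument for this theorem, but defers to the literature (\cite{elvira2021performance}, \cite{rosenblatt1952remarks}, \cite{djuric2010assessment}); the proof style implicit in those references, and in the machinery the paper itself builds in Appendix B (Eqs.\ for $h_{n,\theta}$ and $\mathbb{Q}_{K,\tilde p_\theta}$), is to condition on the data point $y$, note that $A_K \mid y$ is Binomial$(K,\tilde F(y))$, and then integrate over $y\sim p$: when $p=\tilde p$ the probability integral transform gives $F(y)\sim\mathcal U(0,1)$, so $\mathbb{Q}_K(n)=\binom{K}{n}\int_0^1 u^n(1-u)^{K-n}\,du = \binom{K}{n}B(n+1,K-n+1)=\tfrac{1}{K+1}$. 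Your exchangeability/symmetry argument reaches the same conclusion more elementarily, and your handling of ties is exactly right given that both laws are assumed to have densities (so ties are a null event and $\le$ versus $<$ is immaterial). What the conditional--binomial route buys, and why the paper organizes its appendix around it, is that it continues to describe $\mathbb{Q}_K$ when $p\neq\tilde p$, which is what the perturbation bound of Theorem 2 and the continuity/Lipschitz analysis of Theorem 4 require; the pure symmetry argument is specific to the case $p=\tilde p$. Your closing caveat about needing randomized tie-breaking in the non-continuous case is also accurate and consistent with the paper's restriction to distributions admitting pdfs.
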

\begin{proof}
    See \cite{elvira2021performance} for an explicit proof. This is a basic result that appears under different forms in the literature, e.g., in \cite{rosenblatt1952remarks} or \cite{djuric2010assessment}.
\end{proof}
%\subsection{Approximately uniform statistics} \label{Approximately uniform statistics}
The following result generalises Theorem 2 in \cite{de2024training}. It establishes the continuity of the rank statistic w.r.t. the \(L^1\) norm.

\begin{theorem}\label{theorem2}
If $\|p-\tilde{p}\|_{L^{1}(\mathbb{R})}\leq \epsilon$ then,
\begin{align*}
    \dfrac{1}{K+1}-\epsilon \leq \mathbb{Q}_{K}(n)\leq \dfrac{1}{K+1}+\epsilon, \;\forall n\in \{0, \ldots, K\}.
\end{align*}
\end{theorem}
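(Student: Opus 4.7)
The plan is to reduce the bound to a simple $L^1$ estimate by conditioning on $y$ and comparing the rank pmf under $p$ with the rank pmf one would obtain if $y$ were drawn instead from $\tilde p$ (which is uniform by Theorem \ref{theorem1}).

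First I would define the conditional probability $f_n(y) := \mathbb{P}(A_K = n \mid y)$, i.e., the probability that exactly $n$ of the i.i.d.\ samples $\tilde y_1,\ldots,\tilde y_K \sim \tilde p$ are $\leq y$. By independence, $f_n(y) = \binom{K}{n} \tilde F(y)^n (1-\tilde F(y))^{K-n}$, where $\tilde F$ is the cdf of $\tilde p$. The crucial observation is that $0 \leq f_n(y) \leq 1$ for every $y$ and every $n$.

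Next, by the tower property,
\begin{align*}
\mathbb{Q}_K(n) \;=\; \int_{\mathbb{R}} f_n(y)\, p(y)\,dy,
\qquad
\frac{1}{K+1} \;=\; \int_{\mathbb{R}} f_n(y)\, \tilde p(y)\,dy,
\end{align*}
where the second identity is exactly Theorem \ref{theorem1} applied to the setting in which $y$ is drawn from $\tilde p$. Subtracting and using $|f_n(y)| \leq 1$ together with the hypothesis $\|p-\tilde p\|_{L^1(\mathbb{R})} \leq \epsilon$, I would bound
\begin{align*}
\left|\mathbb{Q}_K(n) - \tfrac{1}{K+1}\right|
\;\leq\; \int_{\mathbb{R}} f_n(y)\,|p(y)-\tilde p(y)|\,dy
\;\leq\; \|p-\tilde p\|_{L^1(\mathbb{R})}
\;\leq\; \epsilon,
\end{align*}
which is exactly the two-sided inequality claimed in the theorem.

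There is no serious obstacle: the only delicate point is recognizing that Theorem \ref{theorem1} should be read as an identity valid for the \emph{integrated} conditional probability $\int f_n(y)\tilde p(y)\,dy$, so that the same kernel $f_n$ can be used to express both $\mathbb{Q}_K(n)$ and $\tfrac{1}{K+1}$. Once that is observed, the uniform bound $f_n \leq 1$ immediately turns the $L^1$ closeness of $p$ and $\tilde p$ into pointwise closeness of the rank pmfs, with a constant independent of $n$ and $K$.
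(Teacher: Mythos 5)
Your proof is correct and is essentially the argument the paper relies on: the paper's appendix proof merely observes the duality identity $\|p-\tilde p\|_{L^{1}(\mathbb{R})}=\sup_{f\in B_1(\mathbb{R})}\left|\int f\,p - \int f\,\tilde p\right|$ and then defers to Theorem 2 of \cite{de2024training}, whose content is exactly your kernel argument with $f_n(y)=\binom{K}{n}\tilde F(y)^n\bigl(1-\tilde F(y)\bigr)^{K-n}\in[0,1]$, $\mathbb{Q}_K(n)=\int f_n\,p$, and $\int f_n\,\tilde p=\tfrac{1}{K+1}$. In other words, you have written out, self-contained, the same bound the paper obtains by reduction to the cited result, so there is no gap.
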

\begin{proof}
See Appendix \ref{Proof of Theorem 2}.
\end{proof}

\begin{remark}
If \( p \) and \( \tilde{p} \) have compact support $\mathcal{K}\subset \mathbb{R}$, we can generalise the previous result. By Hölder's inequality \cite[Theorem 4.6]{brezis2011functional}, we readily see that
\begin{align*}
    \|p - \tilde{p}\|_{L^{1}(\mathbb{R})} = \|\mathbb{I}_{\mathcal{K}}\, (p - \tilde{p})\|_{L^{1}(\mathbb{R})} \leq \| \mathbb{I}_{\mathcal{K}} \|_{L^{q}(\mathbb{R})} \| p - \tilde{p} \|_{L^{q'}(\mathbb{R})},
\end{align*}
where \( \frac{1}{q} + \frac{1}{q'} = 1 \), \( q \in [1, \infty] \), \( \mathbb{I}_{\mathcal{K}} \) denotes the indicator function on \( \mathcal{K} \), and \( \| \mathbb{I}_{\mathcal{K}} \|_{L^{q}(\mathbb{R})} = \mathcal{L}(\mathcal{K})^{1/q} \), with \( \mathcal{L}(\mathcal{K}) \) being the Lebesgue measure of \( \mathcal{K} \). Therefore, if \( \|p - \tilde{p}\|_{L^{q'}(\mathbb{R})} \leq \epsilon/\mathcal{L}(\mathcal{K})^{1/q} \), then \( \|p - \tilde{p}\|_{L^{1}(\mathbb{R})} \leq \epsilon \). This implies that Theorem \ref{theorem2} holds for any $L^{q'}$ with $q'\in [1, \infty]$ and not only for $L^{1}$, provided that both $p$ and $\tilde{p}$ have compact support.
\end{remark}

So far, we have shown that if the pdf of the generative model, $\tilde{p}$, is close to the target pdf, $p$, then the statistic $A_{K}$ is close to uniform. A natural question to ask is whether $A_{K}$ displaying a uniform distribution implies that $\tilde{p}=p$. This result is also established in \cite{de2024training} and is reproduced here for convenience.

\begin{theorem}\label{theorem4}
    Let $p$ and $\tilde{p}$ be pdf's of univariate real r.v.'s and let $A_{K}$ be the rank statistic in Eq \ref{eq: rank statistic}. If $A_{K}$ has a discrete uniform distribution on $\{0, \ldots, K\}$ for every $K\in \mathbb{N}$ then $p=\tilde{p}$ almost everywhere (a.e.).
\end{theorem}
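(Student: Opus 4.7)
The plan is to reduce the hypothesis to a moment-matching condition for the probability-integral-transform variable $U := \tilde{F}(Y)$, where $\tilde{F}$ denotes the CDF of $\tilde{p}$ and $Y\sim p$, and then recover $p$ from the identity $U\sim\text{Unif}[0,1]$. First, conditioning on $Y=y$ and using that $\tilde p$ is a pdf (so $\mathbb{P}(\tilde y_i=y)=0$), the conditional law of $A_K$ given $Y=y$ is $\text{Binomial}(K,\tilde F(y))$, which yields
\begin{equation*}
\mathbb{Q}_K(n)\;=\;\binom{K}{n}\,\mathbb{E}_{Y\sim p}\!\left[\tilde F(Y)^{n}\bigl(1-\tilde F(Y)\bigr)^{K-n}\right].
\end{equation*}
Specializing to $n=K$ and invoking the hypothesis that $\mathbb{Q}_K$ is uniform for every $K$ gives $\mathbb{E}[U^K]=1/(K+1)$ for all $K\geq 0$.

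Next, since $U$ takes values in the compact interval $[0,1]$, the Hausdorff moment problem (or, equivalently, the Weierstrass approximation theorem applied to polynomials on $[0,1]$) guarantees that its distribution is uniquely determined by its moments. As $1/(K+1)=\int_0^1 u^K\,du$, this forces $U\sim\text{Unif}[0,1]$.

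Finally, I would deduce $p=\tilde p$ a.e.\ from $\tilde F(Y)\sim\text{Unif}[0,1]$. Let $F$ be the CDF of $p$. For each $t\in[0,1)$, set $b_t:=\sup\{x:\tilde F(x)\leq t\}$; continuity and monotonicity of $\tilde F$ give $\tilde F(b_t)=t$ and $\{\tilde F(Y)\leq t\}=\{Y\leq b_t\}$, so uniformity of $\tilde F(Y)$ yields $F(b_t)=t=\tilde F(b_t)$. Now fix any $y\in\mathbb{R}$ with $\tilde F(y)<1$ and put $t=\tilde F(y)$; atomlessness of the uniform law forces $\mathbb{P}(\tilde F(Y)=t)=0$, so $F$ is constant on the (possibly degenerate) flat segment $[y,b_t]$ on which $\tilde F\equiv t$, whence $F(y)=F(b_t)=\tilde F(y)$. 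A short limiting argument handles the remaining case $\tilde F(y)=1$. Differentiating $F=\tilde F$ then gives $p=\tilde p$ a.e. The main obstacle is precisely this last step: because $\tilde p$ may vanish on intervals, $\tilde F$ need not be injective, so the probability-integral transform cannot be inverted directly; the argument above circumvents this by showing that every flat segment of $\tilde F$ must also be a flat segment of $F$.
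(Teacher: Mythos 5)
Your proposal is correct. Note that the paper does not actually prove this statement itself—it reproduces it from \cite{de2024training}—so there is no in-text argument to compare against line by line; however, your opening step, conditioning on $Y=y$ to write $\mathbb{Q}_K(n)=\binom{K}{n}\,\mathbb{E}_{Y\sim p}\bigl[\tilde F(Y)^n(1-\tilde F(Y))^{K-n}\bigr]$, is precisely the binomial-mixture representation the paper itself employs in Appendix~\ref{Proof of Theorem 4}, and the continuation via the moment identities $\mathbb{E}[\tilde F(Y)^K]=\tfrac{1}{K+1}$, the Hausdorff moment problem on $[0,1]$, and the probability-integral transform is the expected route to this result. Your handling of the one genuinely delicate point—$\tilde F$ need not be injective when $\tilde p$ vanishes on intervals—is sound: the inclusion $\{y<Y\le b_t\}\subseteq\{\tilde F(Y)=t\}$ together with atomlessness of the uniform law does force $F$ to be constant on each flat segment of $\tilde F$, giving $F=\tilde F$ everywhere and hence $p=\tilde p$ a.e.\ by differentiation of two absolutely continuous distribution functions. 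As a side remark, your argument only uses the single entry $\mathbb{Q}_K(K)=\tfrac{1}{K+1}$ for each $K$, so it in fact establishes a slightly stronger statement than the one quoted.
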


\begin{remark}
If $p$ and $\tilde{p}$ are continuous functions then Theorem \ref{theorem4} implies that $p=\tilde{p}$ (everywhere).
\end{remark}

Figure \ref{fig:evolution_ISL} illustrates how the generator density and the empirical rank‐statistic pmf $\mathbb{Q}_K(n)$ evolve over the course of a training process aimed at aligning $\tilde{p}$ with the target distribution $p$. As optimization proceeds, the generator density increasingly resembles the target, and the rank‐statistic distribution transitions from a skewed shape to the uniform distribution expected when $\tilde{p} = p$.

\begin{figure}[hbt!]
    \centering
    \includegraphics[width=\linewidth]{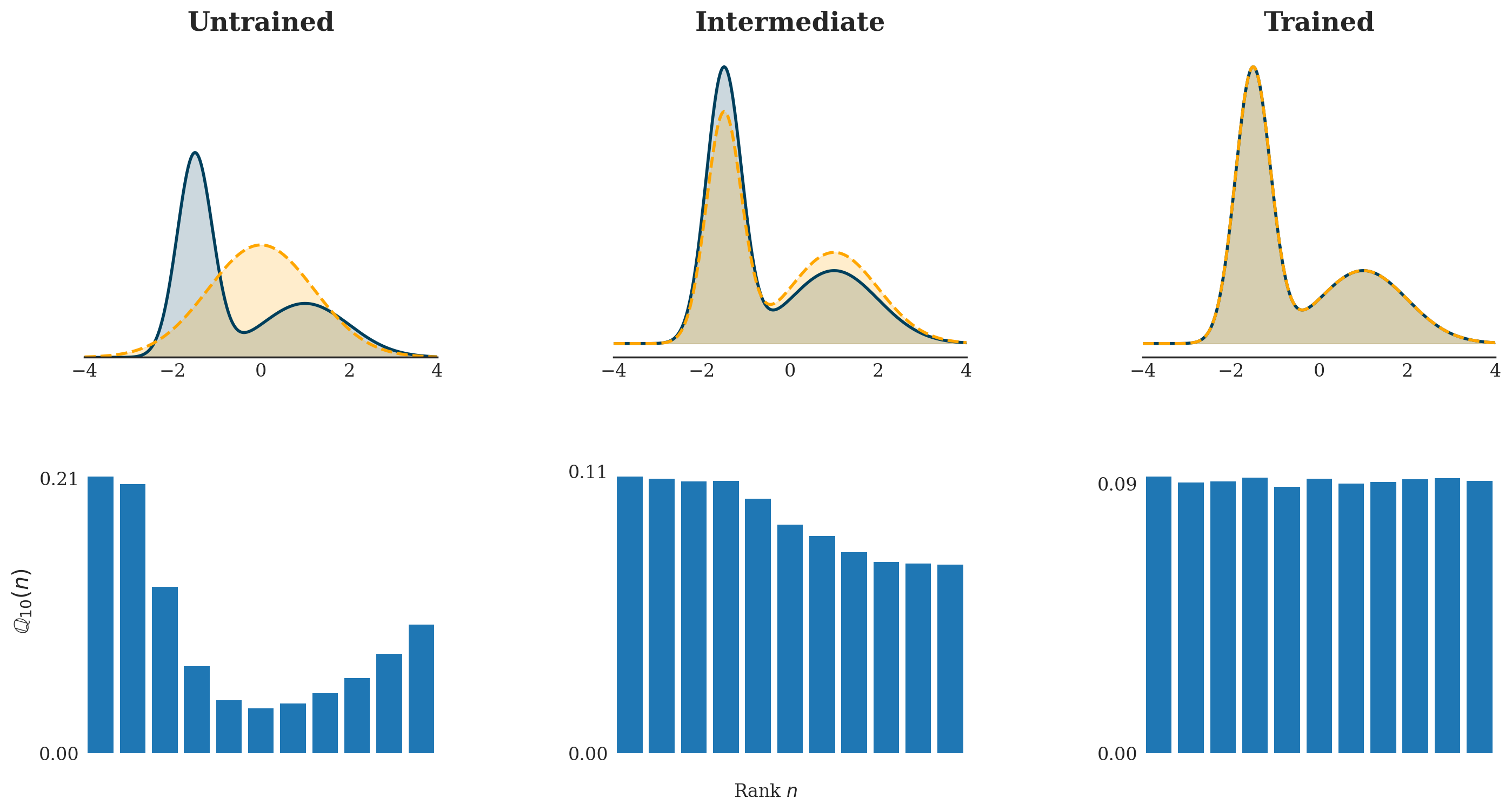}
    \caption{\small Evolution of generator density and rank‐statistic pmf over the course of a training process aimed to align $\tilde{p}$ with the target distribution $p$. Columns (left to right) show the untrained, intermediate, and trained stages.  \emph{Top row:} target density (solid blue) versus model density (dashed orange).  \emph{Bottom row:} empirical pmf \(\mathbb{Q}_K(n)\) of the rank statistic.  As training proceeds, the model density aligns with the target and \(\mathbb{Q}_K(n)\) converges to the uniform distribution.}
    \label{fig:evolution_ISL}
\end{figure}

\subsection{The invariant statistical loss function} \label{The invariant statistical loss function}
In this subsection we build upon previous results to introduce a new discrepancy function \(d_{K}(p, \tilde{p}): C(\mathcal{K}) \times C(\mathcal{K}) \mapsto [0, +\infty)\) between two continuous densities on a compact set $\mathcal{K}\subset \mathbb{R}$. This function measures the $\ell_1$-norm of the difference between the pmf $\mathbb{Q}_{K}$ associated with the statistic $A_{K}$ and the uniform pmf, i.e.,
\begin{align*}
    d_{K}(p, \tilde{p}) = \dfrac{1}{K+1}\norm{\dfrac{1}{K+1}\mathbf{1}_{K+1} - \mathbb{Q}_{K}}_{\ell_1} = \dfrac{1}{K+1}\sum_{n=0}^{K}\abs{\dfrac{1}{K+1}-\mathbb{Q}_{K}(n)}.
\end{align*}

It is clear that \( d_{K}(p, \tilde{p}) \geq 0 \) for any pair of pdf's \( p \) and \( \tilde{p} \). By Theorems \ref{theorem2} and \ref{theorem4}, $d_{K}(p,\tilde p)=0$ for arbitrarily large $K$ if and only if $p=\tilde p$. Thus, \( \lim_{K\to\infty} d_{K}(p, \tilde{p}) \) is a probability divergence \citep{chen2023sample, sugiyama2013direct}. Furthermore, Theorem \ref{theorem2} shows that $d_{K}$ is continuous w.r.t. the $L^{1}(\mathbb{R})$ norm, i.e., whenever $\|p-\tilde p\|_{L^{1}(\mathbb{R})}\le\epsilon$, one has $d_{K}(p,\tilde p)\le\epsilon$. This discrepancy measure serves as the theoretical loss function on which we build up a training procedure for implicit generative models.

The following theorem identifies two key regularity properties—continuity and differentiability—of the divergence $d_{K}(p,\tilde{p}_{\theta})$ w.r.t. the network parameters $\theta$ of $g_{\theta}$.

\begin{theorem} \label{theorem: props of d_k}
    Let $p:\mathcal{X}\to [0,\infty)$ be a pdf, where $\mathcal{X}\subseteq \mathbb{R}$. Let $Z$ be a real r.v. taking values in $\mathcal{Z}\subseteq \mathbb{R}$ and choose a function
    \begin{align*}
        g: \mathcal{Z} \times \mathbb{R}^{d} &\to \mathcal{X},\\
        (z, \theta) &\to g_{\theta}(z).
    \end{align*}
    %\begin{align*}
    %    g:&\mathcal{Z}\times \mathbb{R}^{d}\to \mathcal{X}\\
    %    &(z,\theta) \mapsto g_{\theta}(z).
    %\end{align*}
    Let $\tilde p_{\theta}$ denote the pdf of the r.v. $g_{\theta}(Z)$. Then,
    \begin{enumerate}
        \item If $g$ is continuous w.r.t. $\theta$ for almost every $x\in \mathcal{Z}$, then $d_{K}(p, \tilde{p}_{\theta})$ is also continuous w.r.t. $\theta$.
        \item Assume that $g_{\theta}(z)$ satisfies the Lipschitz condition w.r.t. $\theta$, i.e., $\left| g_{\theta}(z) - g_{\theta'}(z) \right| \leq L(z) \| \theta - \theta' \|$, and there is a constant  $L_{\max} < +\infty$ such that $L(z)<L_{\max}$ for almost every $z\in \mathcal{Z}$. If $g_{\theta}(z)$ is differentiable w.r.t. $z$ and there exists $m>0$ such that $\inf_{(z,\theta)\in \mathcal{Z}\times \mathbb{R}^{d}}\left| g_{\theta}'(z) \right| \geq m$, then $d_{K}(p, \tilde{p}_{\theta})$ is Lipschitz continuous w.r.t. $\theta$, and consequently, it is differentiable a.e.
    \end{enumerate}
\end{theorem}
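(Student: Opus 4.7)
The plan is to reduce both claims to regularity properties of the CDF $F_\theta(y):=\mathbb{P}(g_\theta(Z)\le y)$, exploiting the closed-form representation
\begin{equation*}
    \mathbb{Q}_K(n;\theta)=\binom{K}{n}\int_{\mathcal{X}}F_\theta(y)^n\bigl(1-F_\theta(y)\bigr)^{K-n}p(y)\,dy,
\end{equation*}
obtained by conditioning on the real sample $y$ and using the independence of the fictitious samples $\tilde y_i\sim\tilde p_\theta$. Since $\bigl|\,|a|-|b|\,\bigr|\le|a-b|$, the definition of $d_K$ gives
\begin{equation*}
    \bigl|d_K(p,\tilde p_\theta)-d_K(p,\tilde p_{\theta'})\bigr|\le\frac{1}{K+1}\sum_{n=0}^{K}\bigl|\mathbb{Q}_K(n;\theta)-\mathbb{Q}_K(n;\theta')\bigr|,
\end{equation*}
so everything reduces to controlling the rank-probability differences through $F_\theta-F_{\theta'}$.

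For part (1), continuity of $g(z,\cdot)$ for almost every $z$ gives $g_{\theta_n}(Z)\to g_\theta(Z)$ almost surely whenever $\theta_n\to\theta$, and hence convergence in distribution of the pushforwards. Because $\tilde p_\theta$ is a density, $F_\theta$ is continuous on $\mathbb{R}$ and therefore $F_{\theta_n}(y)\to F_\theta(y)$ for every $y$. The integrand in the representation above is dominated pointwise by $p(y)$, which is integrable, so the dominated convergence theorem yields $\mathbb{Q}_K(n;\theta_n)\to\mathbb{Q}_K(n;\theta)$ for every $n\in\{0,\ldots,K\}$, and the finite sum defining $d_K$ inherits continuity in $\theta$.

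For part (2), the key step is to bound $\|F_\theta-F_{\theta'}\|_\infty$ linearly in $\|\theta-\theta'\|$. Setting $\delta:=L_{\max}\|\theta-\theta'\|$, the Lipschitz hypothesis on $g$ gives the sandwich inclusions
\begin{equation*}
    \{g_{\theta'}(Z)\le y-\delta\}\subseteq\{g_\theta(Z)\le y\}\subseteq\{g_{\theta'}(Z)\le y+\delta\},
\end{equation*}
so that $|F_\theta(y)-F_{\theta'}(y)|\le F_{\theta'}(y+\delta)-F_{\theta'}(y-\delta)$. The lower bound $\inf|g_\theta'|\ge m$ and the change-of-variables formula yield the uniform density bound $\|\tilde p_{\theta'}\|_\infty\le\|p_Z\|_\infty/m$, and integrating over a slab of length $2\delta$ therefore produces $\|F_\theta-F_{\theta'}\|_\infty\le(2L_{\max}\|p_Z\|_\infty/m)\|\theta-\theta'\|$. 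Since $u\mapsto u^n(1-u)^{K-n}$ is Lipschitz on $[0,1]$ with constant bounded by $K$, the representation gives $|\mathbb{Q}_K(n;\theta)-\mathbb{Q}_K(n;\theta')|\le K\binom{K}{n}\|F_\theta-F_{\theta'}\|_\infty$, and summing over $n$ shows that $d_K(p,\tilde p_\theta)$ is Lipschitz in $\theta$. Differentiability almost everywhere then follows from Rademacher's theorem on $\mathbb{R}^d$. The main technical obstacle is precisely this CDF estimate: one cannot expect $\|\tilde p_\theta-\tilde p_{\theta'}\|_{L^1}$ to be linear in $\|\theta-\theta'\|$, since small perturbations of $g_\theta$ can shift mass to locations where the previous density was negligible; the sandwich argument, which estimates probabilities of thin slabs rather than pointwise density differences, is what allows both regularity hypotheses on $g$ to work in concert.
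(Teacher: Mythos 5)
Your proposal is correct, and the quantitative core coincides with the paper's: a binomial representation of $\mathbb{Q}_K$ through the cdf $\tilde F_\theta$, a Lipschitz estimate for $u\mapsto u^n(1-u)^{K-n}$ on $[0,1]$ (the paper's Lemma~\ref{lemma2}, via the mean value theorem, for which you supply the explicit constant $K$), a perturbation bound $\sup_y|\tilde F_\theta(y)-\tilde F_{\theta'}(y)|\le \tfrac{2L_{\max}\|p_z\|_{L^\infty}}{m}\|\theta-\theta'\|$ (the paper's Lemma~\ref{lemma3}, which bounds the Lebesgue measure of the set where the indicators of $\{g_\theta(z)\le y\}$ and $\{g_{\theta'}(z)\le y\}$ disagree — your sandwich inclusion is the same estimate phrased at the cdf level), and Rademacher's theorem. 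Two genuine differences are worth recording. First, you write $\mathbb{Q}_K(n;\theta)=\binom{K}{n}\int F_\theta^n(1-F_\theta)^{K-n}p\,dy$, conditioning on the real sample $y\sim p$, which is consistent with the definition of $A_K$ in Section~\ref{Discrete uniform rank statistics}; this makes the dominating function in the continuity step the fixed integrable density $p$, so dominated convergence applies verbatim, and the Lipschitz assembly needs no uniform bound on $\tilde p_\theta$ beyond the slab estimate. The paper's appendix instead integrates $h_{n,\theta}$ against $\tilde p_\theta$, which is why it must invoke the additional bound $\sup_{\theta,y}\tilde p_\theta(y)\le\|p_z\|_{L^\infty}/m$ in Part 2 and why its limit interchange in Part 1 uses a dominating function that varies with $m$. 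Second, your Part 1 derives pointwise convergence of the cdfs from almost-sure convergence of $g_{\theta_m}(Z)$ (convergence in distribution plus continuity of $\tilde F_\theta$, since $\tilde p_\theta$ is a density), whereas the paper proves this directly in Lemma~\ref{lemma1} through a measure estimate that implicitly assumes $p_z$ bounded; your route removes that assumption from Part 1 (boundedness of $p_z$ is still needed in Part 2, exactly as in the paper). In short: same theorem, same key cdf estimate, but a cleaner continuity argument, explicit constants, and a representation of $\mathbb{Q}_K$ that matches the main-text construction.
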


\begin{proof}
    The proof of the Theorem can be found in Appendix \ref{Proof of Theorem 4}. 
\end{proof}

Theorem \ref{theorem: props of d_k} shows that the discrepancy \(d_{K}(p, \tilde{p}_{\theta})\), which measures the difference between a fixed density \(p\) and a parametric family \(\tilde{p}_{\theta}\) generated by \(g_{\theta}(z)\), is continuous whenever \(g_{\theta}(z)\) is continuous in \(\theta\). Additionally, if \(g_{\theta}\) is Lipschitz continuous and monotonic, the discrepancy becomes differentiable a.e. However, since the dependence of the empirical distribution \(\mathbb{Q}_{K}\) on \(\theta\) is unknown, gradient-based methods cannot be directly use to minimise $d_{K}(p, \tilde{p}_{\theta})$ w.r.t. $\theta$.

\subsection{The surrogate invariant statistical loss function}\label{surrogate Invariant-Statistic Loss}

Since the divergence $d_{K}(p,\tilde{p}_{\theta})$ cannot be used in practice, we present a surrogate loss function that is tractable, in the sense that it can be optimised w.r.t. the network parameters $\theta$ using standard methods. The training dataset consists of a set of $N$ i.i.d. samples, $y_1,\ldots,y_N$, from the true data distribution, $p$. For each $y_n$, we generate $K$ i.i.d. samples from the generative model, denoted by $\tilde{\mathbf{y}} = [\tilde{y}_1, \ldots, \tilde{y}_K]^\top$, where each $\tilde{y}_i = g_{\theta}(z_i)$ with $z_i \sim \mathcal{N}(0,1)$. From $y_n$ we obtain one sample of the r.v. \(A_K\), that we denote as \(a_{K,n}\).

We replace $d_{K}(p, \tilde{p}_{\theta})$ by a differentiable approximation and refer to this surrogate function as invariant statistical loss (ISL) \citep{de2024training}. The ISL mimics the construction of a histogram from the statistics \(a_{K,1}, a_{K,2}, \ldots, a_{K,N}\). Given a real data point $y_n$, we can tally how many of the $K$ simulated samples in $\tilde {\bf y}$ are less than the $n$-th observation $y_n$. Specifically, one computes
\begin{align*}
\tilde{a}_{K,n}(y)=\sum_{i=1}^{K}\sigma_\alpha(y_n - \tilde{y}_{i}) = \sum_{i=1}^{K}\sigma_\alpha(y_n - g_{\theta}(z_{i})),
\end{align*}
where $z_i \sim p_z$ is a sample from a univariate distribution, and $\sigma_\alpha(x) :=\sigma(\alpha x)$, with $\sigma(x):=1/\left(1 + \exp(-x) \right)$ being the sigmoid function. As we can see, $\tilde{a}_{K,n}$ is a differentiable (w.r.t. $\theta$) approximation of the actual statistic $A_K$ for the observation $y_n$. The parameter $\alpha$ enables us to adjust the slope of the sigmoid function to better approximate the (discrete) `counting' in the construction of $\tilde{a}_{K,n}$.

A differentiable surrogate histogram is constructed from \(\tilde{a}_{K,1}, \ldots, \tilde{a}_{K,n}\) by leveraging a sequence of differentiable functions. These functions are designed to mimic the bins around \(k \in \{0, \ldots, K\}\), replacing sharp bin edges with functions that replicate bin values at \(k\) and smoothly decay outside the neighborhood of \(k\). In our particular case, we consider radial basis function (RBF) kernels $\{\psi_{k}\}_{k=0}^{K}$ centered at $k\in \{0, \ldots,K\}$ with length-scale $\nu^2$, i.e., $\psi_k(a)=\exp(-(a-k)^2/2\nu^2)$.  Thus, the approximate normalized histogram count at bin $k$ is given by
\begin{align}
    q[k] = \frac{1}{N}\sum_{i=1}^{N}\psi_k(\tilde{a}_{K,i}(y_i)),
\end{align}
for $k=0,\ldots,K$. %Note that our approximation does not strictly give rise to a proper pmf. 
The ISL is computed as the $\ell$-norm distance between the uniform vector $\frac{1}{K+1}\mathbf{1}_{K+1}$ and the vector of empirical probabilities $\mathbf{q}=\left[q[0], q[1],\ldots, q[K]\right]^\top$, namely,
\begin{align}\label{eq:ISL}
    \mathcal{L}_{ISL}(\theta,K)
    % ,\alpha,\nu)
    := \left|\left|\frac{1}{K+1}\mathbf{1}_{K+1}-\mathbf{q}\right|\right|_{\ell_{1}}.
\end{align}
\begin{remark} 
The ISL is a sum and composition of \(C^{\infty}(\mathbb{R})\) functions. It is smooth w.r.t. the fictitious samples \(\{\tilde{y}_{i}\}_{i=0}^{K}\) and the data \(y_n\). As a result, its regularity aligns with that of the neural network (as a function of both its parameters and the input noise).
\end{remark}
The hyperparameters for the ISL method include the number of samples \(K\), which is tunable, the activation function $\sigma_\alpha$, and the set of basis functions \(\{\psi_k\}_{k=0}^{K}\), specified as radial basis function (RBF) kernels with a length scale of \(\nu^2\). These parameters control the flexibility and behavior of the model during learning.

To better illustrate the transition from discrete to differentiable rank-based histograms, Figure \ref{fig:rank_rbf_adjusted_arrows} provides a visual comparison. In Section \ref{subsection: Comparison of Surrogate and Theoretical Loss Performance}, we present a numerical study assessing how accurately the surrogate loss $\mathcal{L}_{\mathrm{ISL}}$ approximates the true divergence $d_{K}(p, \tilde{p}_{\theta})$. Finally, Figure \ref{fig:isl_1d_pipeline_merged} outlines the full 1D ISL training pipeline, summarizing the steps involved in computing and optimizing the surrogate loss.

\begin{figure}[ht]
  \centering
  \begin{tikzpicture}[
    scale=0.53,                 
    transform shape,           
    every node/.style={font=\sffamily\normalsize},
    numberline/.style={thin},
    bar/.style={fill=blue!50, draw=blue!70},
    dot/.style={circle, fill=blue!50, minimum size=6pt, inner sep=0pt},
    red/.style={circle, fill=red!70, minimum size=8pt, inner sep=0pt},
    arrow/.style={-{Latex[length=1.2mm,width=0.6mm]}, draw=gray!70, semithick}
  ]

    % Number line of sorted \tilde y's
    \begin{scope}[shift={(0,0)}]
      \draw[numberline] (-0.2,0) -- (5.2,0);
      \foreach \i/\lbl in {
        0/$\tilde y_1$,
        1/$\tilde y_2$,
        2/$\tilde y_3$,
        3/$\ldots$,
        4/$\tilde y_{K-1}$,
        5/$\tilde y_K$
      } {
        \node[dot] at (\i,0) {};
        \draw (\i,0) -- ++(0,-0.08) node[below] {\lbl};
      }
      \node[red] (num) at (1.4,0) {};
        \node[above=2pt of num] {\small $y$};
    \end{scope}

    % Discrete histogram (smaller)
    \begin{scope}[shift={(7,0)}]
      \foreach \pos/\lbl in {
        0/0, 1/1, 2/2, 3/{\dots}, 4/{$K-1$}, 5/{$K$}
      } {
        \draw (\pos,0) -- ++(0,-0.06) node[below] {\lbl};
      }
      \draw[bar, opacity=0.4] (-0.4,0) rectangle ++(0.8,3.4);
      \draw[bar, opacity=0.4] (0.6,0) rectangle ++(0.8,2.7);
      \draw[bar, opacity=0.4] (1.6,0) rectangle ++(0.8,2.3);
      \draw[bar, opacity=0.4] (2.6,0) rectangle ++(0.8,0.2);
      \draw[bar, opacity=0.4] (3.6,0) rectangle ++(0.8,2.5);
      \draw[bar, opacity=0.4] (4.6,0) rectangle ++(0.8,3.2);
    \end{scope}

    % Continuous histogram (with fill)
    \begin{scope}[shift={(16,0)}]
      \draw[thin] (-1.2,0) -- (6.2,0);
      \foreach \pos/\lbl in {
        0/0, 1/1, 2/2, 3/{\dots}, 4/{$K-1$}, 5/{$K$}
      } {
        \draw (\pos,0) -- ++(0,-0.06) node[below] {\lbl};
      }
      \begin{scope}[yscale=3.5]
        \filldraw[
          fill=blue!20,
          draw=blue!60,
          smooth,
          domain=-1.2:6.2,
          samples=100,
          opacity=0.8
        ] 
          plot (\x, {  
            1.0*exp(-((\x-0)^2)/(2*(0.3)^2))
            + 0.8*exp(-((\x-1)^2)/(2*(0.3)^2))
            + 0.6*exp(-((\x-2)^2)/(2*(0.3)^2))
            + 0.8*exp(-((\x-4)^2)/(2*(0.3)^2))
            + 1.0*exp(-((\x-5)^2)/(2*(0.3)^2))
          })
          -- (6.2,0) -- (-1.2,0) 
          -- cycle;
      \end{scope}
    \end{scope}

    % Arrow: discrete → continuous with eqn
    \coordinate (histCent) at (10,1.0);
    \draw[arrow, bend right=20]
      ($(num.north)+(2,-0.8)$) to 
      node[midway, below, font=\sffamily\Large]
        {Differentiable (soft‐count) histogram: $q[n] = \tfrac{1}{N}\sum_{j=1}^N(\psi_n\,\cdot\,\tilde a_K(y_j)),\quad n=0,\dots,K$}
      ++(16, -0.2);

    % Arrow: num → discrete (with label)
    \draw[arrow, bend left=35]
      ($(num.north)+(0,0.7)$) to 
      node[midway, above, font=\sffamily\Large]
        {Non differentiable (hard-count) histogram: $a_{K}(y)=\{\tilde y_i : \tilde y_i < y\}$}
      ++(7,1);

  \end{tikzpicture}
  \caption{\small
  Hard‐ vs. Soft‐Count Histograms via RBF‐Binning.
   \emph{Left:} Sorted model samples \(\tilde y_1 \le \cdots \le \tilde y_K\), with the observed value \(y\) highlighted in red.
  \emph{Middle:} Standard hard‐count histogram: each bin \(i\) tallies the number of model samples below \(y\), i.e.
  $a_{K}(y)\;=\;\bigl|\{\tilde y_j : \tilde y_j < y\}\bigr|$, yielding discrete, integer counts per bin.  \emph{Right:} Differentiable soft‐count histogram obtained by RBF‐binning, where each \(\tilde y_j\) contributes fractionally to every bin according to $q[n] \;=\;\frac{1}{N}\sum_{j=1}^N\psi_n\bigl(\tilde y_j,y\bigr)$ for each $n\in \{0,\ldots,K\}$, giving a smooth, continuous pmf.}
  \label{fig:rank_rbf_adjusted_arrows}
\end{figure}

\begin{figure}[ht]
  \centering
  \begin{tikzpicture}[
    scale=0.60, transform shape,
    node distance=1.2cm and 0.6cm,
    every node/.style={font=\sffamily\normalsize},
    box/.style={
      rectangle,
      rounded corners,
      draw=gray!70,
      thick,
      inner sep=6pt,
      align=center
    },
    arrow/.style={
      -{Latex[length=2mm,width=1mm]},   % small arrowhead
      draw=gray!70,
      semithick,                        % thinner line
      shorten >=1pt,                    % small end‐gaps
      shorten <=1pt
    }
  ]

    % Combined Step 1+2: Data & Latent Sampling
    \node[box, fill=cyan!20] (dataLatent) {
      \textbf{1) Data \& Latent Sampling}\\
      Real: $\{\,y_i \sim p_{\mathrm{target}}\}_{i=1}^N$\\
      Latent: Draw $\{\,z_j \sim p_z\}_{j=1}^K$
    };

    % Step 3a: Soft‐Rank Computation
    \node[box, fill=yellow!20, right=of dataLatent] (soft) {
      \textbf{2) Soft‐Rank Computation}\\
      For each $y_i$ and $z_j$, compute 
      $\tilde y_{i,j} = g_\theta(z_j)$.\\[4pt]
      %Define $\sigma_\alpha(u)=1/(1+e^{-\alpha\,u})$.\\
      Soft count: 
      $\displaystyle \tilde A_K(x_i)
         = \sum_{j=1}^K 
           \sigma_\alpha\bigl(x_i - \tilde y_{i,j}\bigr)
         \;\in\;[0,K].$
    };

    % Step 3b: RBF‐Binning to PMF
    \node[box, fill=teal!20, below=of soft] (rbf) {
      \textbf{3) RBF‐Binning to PMF}\\
      For $n=0,\dots,K$, 
      %let $\psi_n(t)=\exp\!\bigl(-\tfrac{(t-n)^2}{2\nu^2}\bigr).$
      \\[4pt]
      Soft histogram:
      $\displaystyle 
        q[n] \;=\; \frac{1}{N}\sum_{i=1}^N \psi_n\!\bigl(\tilde A_K(x_i)\bigr).
      $
    };

    % Step 4: Compute ISL Loss
    \node[box, fill=green!20, right=of soft] (loss) {
      \textbf{4) Compute ISL Loss}\\
      $\displaystyle 
        \mathrm{ISL}(\theta) 
        = 
        \sum_{n=0}^{K}
          \Bigl|\;q[n] - \tfrac{1}{K+1}\Bigr|
        \quad$
    };

    % Step 5: Update Generator
    \node[box, fill=purple!20, right=of loss] (update) {
      \textbf{5) Update Generator}\\
      $\theta \;\leftarrow\; \theta - \eta\,\nabla_{\theta}\mathrm{ISL}$
    };

    % Draw arrows: 1+2 → 3a → 3b → 4 → 5
    \draw[arrow] (dataLatent) -- (soft);

    \draw[arrow, draw opacity=0.5] (soft.south) to 
      node[midway] {compute $\tilde A_K$} (rbf.north);

    \draw[arrow, draw opacity=0.5] (rbf.east) -- node[midway] {$\{q[n]\}$} (loss.south west);

    \draw[arrow] (loss) -- (update);

    % Feedback loop: step 5 → back to step 1+2
    \draw[arrow, bend right=30]
      (update.north) to node[midway, above] {\small repeat} (dataLatent.north);

    % Background grouping around steps 3a & 3b
    \begin{scope}[on background layer]
      \node[draw=gray!40, thick, fill=gray!5, rounded corners, inner sep=6pt,
            fit=(soft) (rbf)] {};
      \node at ($(soft.north)+(0,0.8)$) {\small \emph{Soft Count \& Binning}};
    \end{scope}

  \end{tikzpicture}
  \caption{\small 1D ISL Training Pipeline. First, real data samples $y_i\sim p_{\mathrm{target}}$ and latent inputs $z_j\sim p_z$ are drawn (Step 1). Next, for each pair $(y_i,z_j)$, the network output $\tilde y_{i,j}=g_\theta(z_j)$ is compared to $y_i$ via a smooth indicator function to produce \emph{soft counts} $\tilde A_K(x_i)\in[0,K]$ (Step 2). These counts are then converted into a \emph{continuous pseudo‐PMF} $q[n]$ over ranks $n=0,\dots,K$ using RBF kernels (Step 3). The ISL loss is computed as the $L^1$-distance between $q[n]$ and the uniform distribution (Step 4). Finally, $\theta$ is updated by gradient descent on this loss (Step 5), and the process repeats. This pipeline yields a fully differentiable surrogate for the invariant statistical loss suitable for end‐to‐end training.}\label{fig:isl_1d_pipeline_merged}
\end{figure}

\subsection{Progressively training by increasing $K$}

The training procedure can be made more efficient by performing it in a sequence of increasing values of \( K \) (see \cite{de2024training}). Specifically, one can select \( K^{(1)} < K^{(2)} < \cdots < K^{(I)} \), where \( I \) is the total number of stages and \( K^{(I)} = K_{\text{max}} \), the maximum admissible value of \( K \). The iterative training scheme is outlined in Algorithm \ref{pseudocode Progressively Training by Increasing K}. The gain in efficiency of the progressive training procedure compared to a scheme with fixed $K$ is illustrated in Appendix \ref{subsection: Efficiency Gains from Progressive $K$ Training vs. Fixed $K$}.

\begin{algorithm}
\setstretch{1.2}
\caption{Progressively training by increasing $K$} \label{pseudocode Progressively Training by Increasing K}
\begin{algorithmic}[1]
\STATE \textbf{Input} Neural network $g_{\theta}$; hyperparameters $N$; number of epochs; training data $\{y_{i}\}_{i=1}^{N}$; $K_{\max}$ maximum admissible value of $K$.
\STATE \textbf{Output} Trained neural network $g_{\theta}$.
\STATE $K=K^{(1)}$
\STATE \textbf{For} $t=1,\ldots, \operatorname{epochs}$ \textbf{do}
    \STATE \hspace{0.15in} Train $g_{\theta}$ using ISL loss function: $\mathcal{L}_{\text{ISL}}(\theta, K^{(i)})$
    \STATE \hspace{0.15in} Compute $\{a_{K^{(i)},1}, \ldots, a_{K^{(i)},N}\}$
    \STATE \hspace{0.15in} Compute Pearson $\chi^2$ test against $\mathbb{Q}_{K^{(i)}}$ using $\{a_{K^{(i)},1}, \ldots, a_{K^{(i)},N}\}$
        \STATE \hspace{0.15in} \textbf{If} hypothesis "$A_K$ is uniform" is accepted \textbf{ and } $K^{(i)} < K_{\max}$\textbf{ do}
            \STATE \hspace{0.30in} Set $K=K^{(i+1)}$
    \STATE \textbf{return} $g_{\theta}$
\end{algorithmic}
\end{algorithm}

\section{Pareto-ISL}  \label{section 4}

As shown in \cite{de2024training}, ISL outperforms other generative methods in learning the central regions of typical 1D distributions. However, Figure \ref{ISL_pareto_cauchy_Mixture} indicates that when standard Gaussian noise is used as an input, NNs struggle to capture the tails of Cauchy mixtures, since compactly supported inputs cannot produce unbounded pdfs (which we will refer to as unbounded distributions). This issue can be addressed by using input noise from a generalized Pareto distribution (GPD). In this section, we introduce Pareto-ISL, which utilizes a GPD for input noise, and demonstrate its effectiveness in learning heavy-tailed distributions.

\subsection{Tail distributions and extreme value theory}

The conditional excess distribution function \( F_u(y) \) provides a key tool for analyzing the tail of a distribution by focusing on exceedances over a specified threshold \( u \). By conditioning on large values, it isolates the behavior of the distribution in its tail, where extreme or rare events are more likely to occur. In the context of extreme value theory, for sufficiently high thresholds \( u \), the conditional excess distribution function converges to the GPD. The GPD, parameterized by the tail index \( \xi \) and scaling parameter \( \sigma \), provides a flexible model for the tail, allowing us to characterize its heaviness and the probability of extreme values.

The following definitions are taken from \cite{huster2021pareto} and provided here for convenience.

\begin{definition}
    The conditional excess distribution function with threshold $u\in \mathbb{R}$ is defined as
    \begin{align*}
        F_{u}(y) = \mathbb{P}(X - u \leq y | X > u ) = \dfrac{F(u+y) - F(u)}{1 - F(u)}.
    \end{align*}
\end{definition}

\begin{definition}
    The GPD, parametrized by tail index $\xi\in \mathbb{R}$ and scaling parameter $\sigma\in \mathbb{R}^{+}$, has the following complementary cumulative distribution function (CCDF)
    \begin{align*}
        S(z;\xi, \sigma) = \begin{cases}
(1 + \xi z/\sigma)^{-1/\xi}, & \text{for } \xi \neq 0, \\
e^{-z/\sigma}, & \text{for } \xi = 0.
\end{cases}
    \end{align*}
\end{definition}

Lipschitz continuous functions map bounded distributions to bounded ones \cite[Chapter 3]{evans2018measure}, limiting the ability of NNs to model heavy-tailed distributions. To address this, unbounded input distributions are required.

To construct unbounded NN generators, recall that piecewise linear (PWL) functions, (which include operations like rectified linear unit (ReLU), leaky ReLU, linear layers, addition, and batch normalization), are closed under composition \cite[Theorem 2.1]{arora2016understanding} and are unbounded, making them ideal for constructing generators that approximate heavy-tailed distributions.

We then define a Pareto-ISL generator, \( g^{PWL} \), as a piecewise linear generator driven by an input from a GPD with tail index \( \xi \), and trained using ISL. Estimators such as Hill’s \citep{resnick1997smoothing} can be used to estimate \( \xi \), aiding in the accurate modeling of heavy-tailed behavior.

\begin{definition}[Pareto-ISL]
Let \( z_{\xi} = (U^{-\xi} - 1)/\xi \), where \( U \sim \unifdist{0}{1} \), be a GPD r.v. with tail index \( \xi \) and CCDF \( S(x;\xi,1) \). 
%A Pareto-ISL generator \( x_{\xi} \), parameterized by \( \xi \), is defined as $x_{\xi} = g_{\theta}^{PWL}(z_{\xi})$.
A Pareto-ISL generator is $ g_{\theta}^{PWL} $, with an input distribution $z_{\xi} $ parameterized by $ \xi $, and output distribution $ y_{\xi} = g_{\theta}^{PWL}(z_{\xi}) $.
\end{definition}

The Pickands-Balkema-de Haan theorem \citep{balkema1974residual} states that for a wide range of probability distributions, the conditional excess distribution function converges to the GPD as the threshold \( u \) increases. This applies to distributions like Gaussian, Laplacian, Cauchy, Lévy, Student-t, and Pareto. Building on this result and \cite[Theorem 2]{huster2021pareto}, a generator constructed using \( g^{PWL} \) with a GPD input can effectively approximate the conditional excess distribution of heavy-tailed distributions. Specifically, if \( y_{\xi} \) has unbounded support, the conditional excess distribution \( F_u(y) \) of \( y_{\xi} \) converges to \( S(y; \xi, \sigma) \) for some \( \sigma \in \mathbb{R}^{+} \). This indicates that ISL-Pareto is especially well-suited for these types of problems, outperforming other implicit methods, including ISL with non-Pareto noise, as we demonstrate in the following subsection.

\subsection{Comparison of Pareto-ISL and standard ISL in learning a Cauchy mixture}

In Figure \ref{ISL_pareto_cauchy_Mixture}, we compare Pareto-ISL against standard ISL where the data distribution is a two-component Cauchy mixture with locations at $-1.0$ and $1.0$, and scales $0.7$ and $0.85$. All generators use a four-layer multilayer perceptron (MLP) with 35 units per layer and ReLU activation. Generators are trained with ISL using $K=20$, $N=1000$, and a learning rate of $10^{-3}$. For Pareto-ISL, the tail parameter is set to $\xi = 1$, aligning GPD noise with the tail index of the Cauchy mixture. Introducing GPD noise improves tail approximation and enhances the modeling of the central part of the target distribution, as demonstrated in the logarithmic-scale (bottom row) and linear-scale (top row) of Figure \ref{ISL_pareto_cauchy_Mixture}. 

In \ref{subsection: Multi-dimensional distributions heavy-tailed}, we present a multidimensional heavy-tailed distribution and compare Pareto-ISL to ISL with Gaussian noise (results shown in Figure \ref{fig:Approximanting_Multi-dimensional_distributions}).

\begin{figure}[h!]
    \centering
    \setlength{\tabcolsep}{2pt} % Adjust padding between columns
    \renewcommand{\arraystretch}{1} % Adjust vertical space between rows

    \begin{tabular}{c c c c c c}
        % ISL Label
        \multicolumn{2}{c}{\textbf{ISL}}
        % \cite{de2024training}
        &
        \multicolumn{2}{c}{\textbf{Pareto-ISL}} \\[0.2cm]
        
        % First row: Linear Scale for ISL and Pareto ISL
        \includegraphics[height=2.7cm]{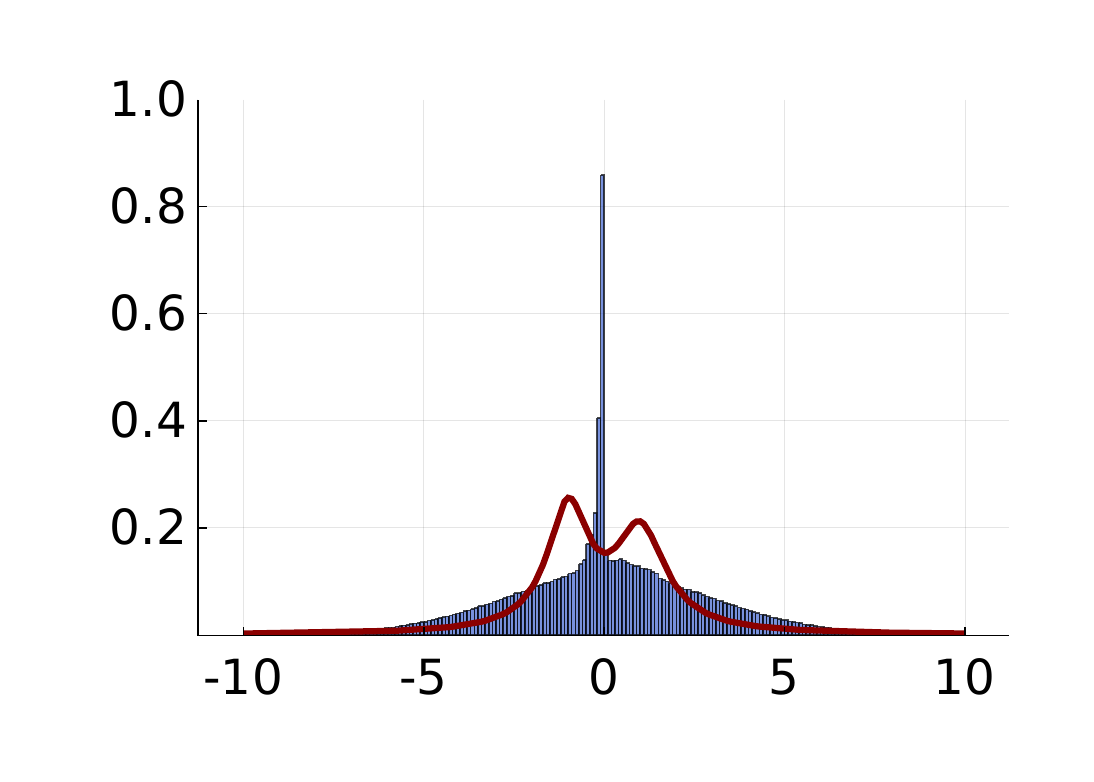} &
        \includegraphics[height=2.7cm]{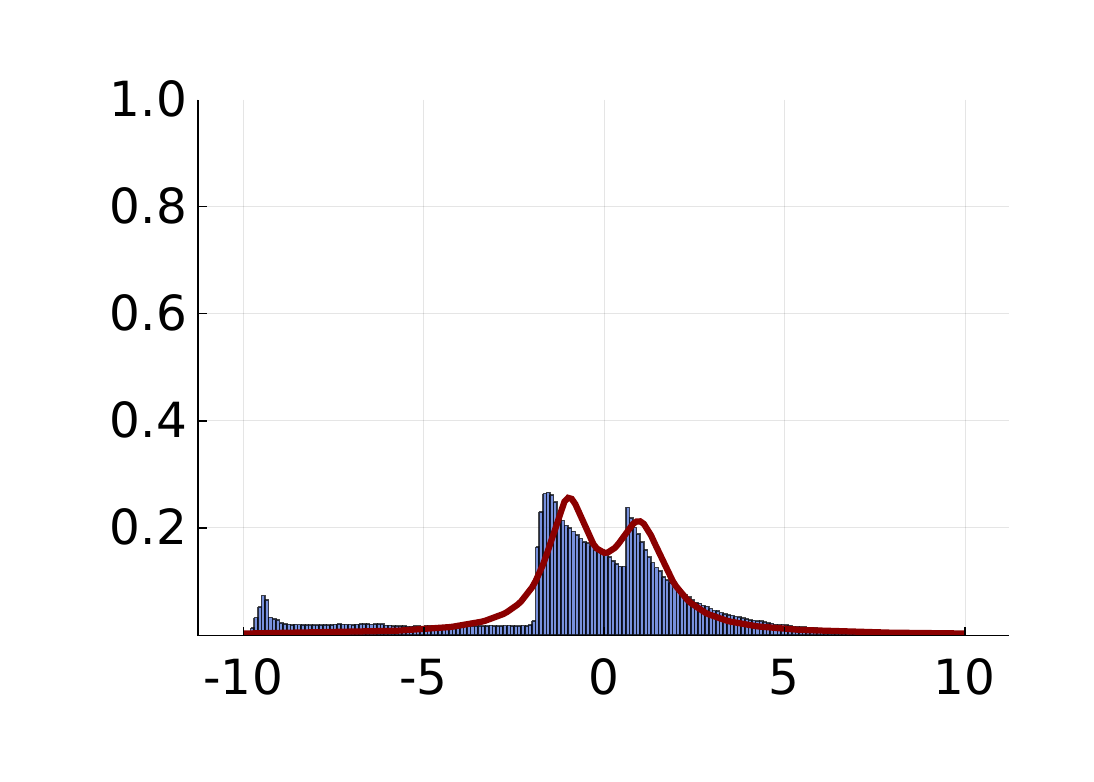} &
        \includegraphics[height=2.7cm]{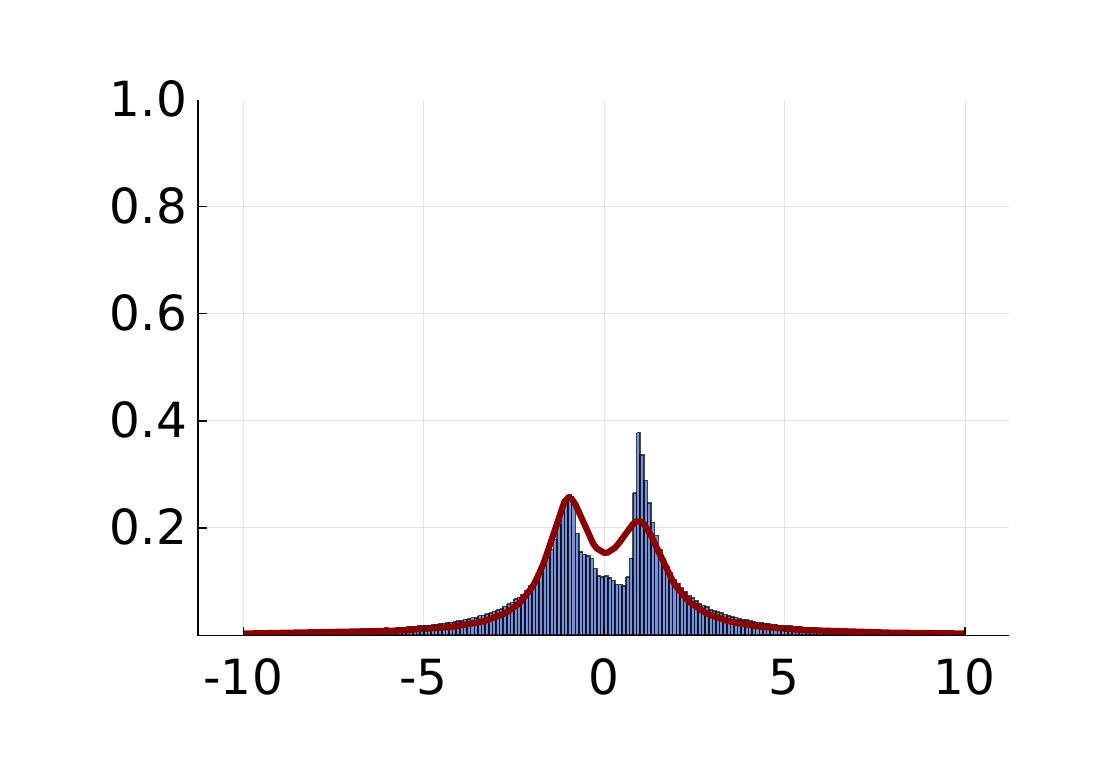} & \\

        % Second row: Log Scale for ISL and Pareto ISL
        \includegraphics[height=2.7cm]{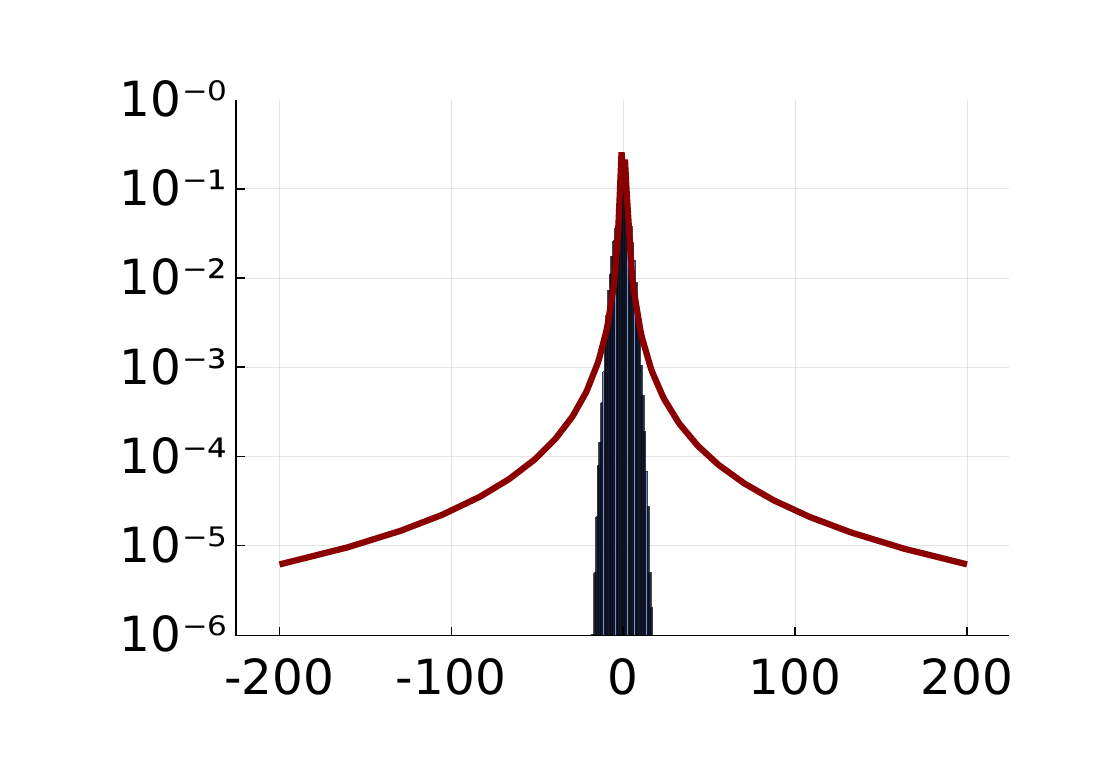} &
        \includegraphics[height=2.7cm]{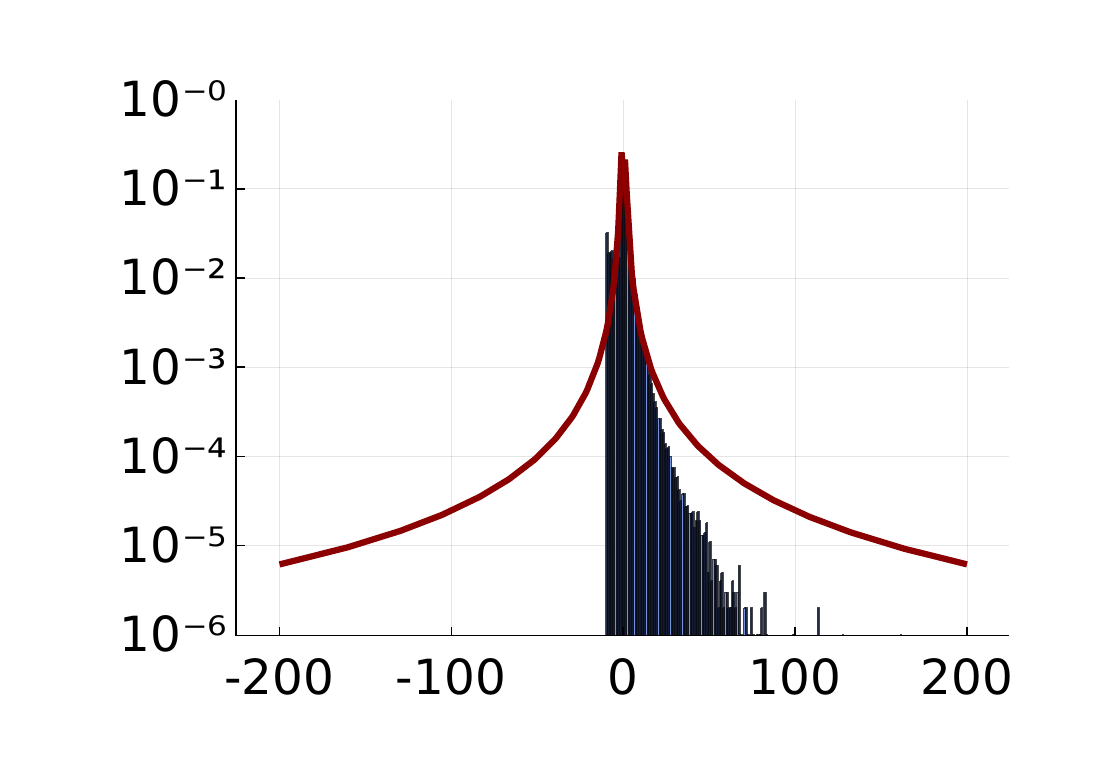} &
        \includegraphics[height=2.7cm]{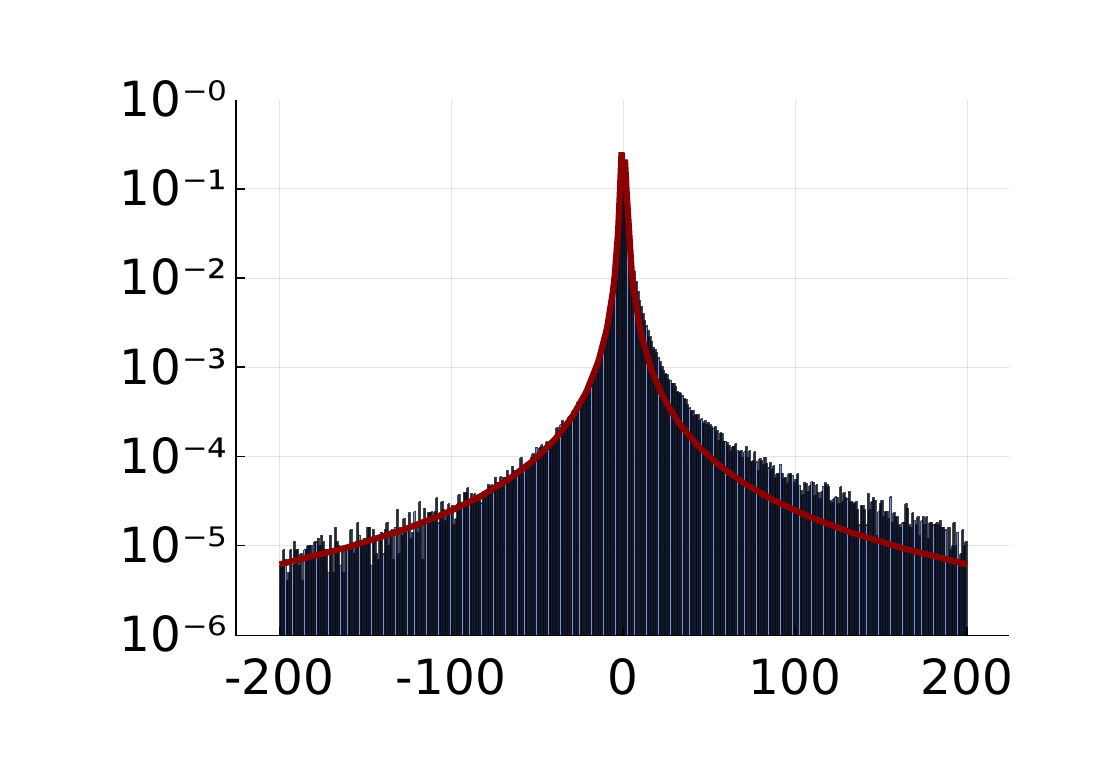} & \\[0.2cm]
        
        % Labels for each type of noise
        \small $\normdist{0}{1}$ & \small $\operatorname{LogNormal}(1, 1)$  & \small $\operatorname{GPD}(1)$ & \\

    \end{tabular}

    \caption{\small Pdfs approximated by generators with various tail behaviors. The noise input for each subplot is given by the subcaption of the corresponding column. Generators are trained on a mixture of Cauchy distributions, with the true density displayed in red. The top row presents the central region of the distributions on a linear scale, while the bottom row highlights the tails on a logarithmic scale.}
    \label{ISL_pareto_cauchy_Mixture}
\end{figure}

\subsection{Performance evaluation of Pareto-ISL compared to other implicit generative models}
In a second experiment, we evaluate the performance of Pareto-ISL as compared to different GANs. For this comparison, we consider four data distributions, including a Cauchy distribution with location parameter $1$ and scale parameter $2$ (labeled $\text{Cauchy}(1,2)$), a Pareto distribution with scale parameter $1$ and shape parameter $1$ (labeled $\text{Pareto}(1,1)$) and two mixture distributions, labeled $\text{Model}_3$ and $\text{Model}_4$. $\text{Model}_3$ is a mixture of a \(\mathcal{N}(-5, 2)\) distribution and a \(\text{Pareto}(5, 1)\) distribution, while $\text{Model}_4$ is a mixture of a $\text{Cauchy}(-1, 0.7)$ distribution and a $\text{Cauchy}(1, 0.85)$ distribution.

As generator NN, we use a $4$-layer MLP with $7, 13, 7$ and $1$ units at the corresponding layers. As activation function we use a ReLU. We train each setting up to $10^{4}$ epochs with $10^{-2}$ learning rate using Adam. We compare Pareto-ISL, GAN, Wasserstein GAN (WGAN) from \citep{arjovsky2017wasserstein}, and maximum mean discrepancy GAN (MMD-GAN) proposed in \cite{li2017mmd} using KSD (Kolmogorov-Smirnov distance), MAE (mean absolute error), and MSE (mean squared error) error metrics, defined as follows
{\footnotesize
\[
  \mathrm{KSD} = \sup_{x\in\mathbb R}\bigl|F(x) - \tilde F(x)\bigr|,\quad
  \mathrm{MAE} = \int_{\mathbb R}\bigl|g(z) - g_{\theta}(z)\bigr|\,p_{z}(z)\,\mathrm{d}z,\quad
  \mathrm{MSE} = \int_{\mathbb R}\bigl(g(z) - g_{\theta}(z)\bigr)^{2}\,p_{z}(z)\,\mathrm{d}z.
\]
}%
where \( F \) and \( \tilde{F} \) represent, respectively, the cdfs of the data distribution and the r.v. generated by the neural network \( g_{\theta} \) with input \( z \sim p_{z} \). Moreover, \( g \) denotes the optimal transformation of the data distribution.
The results are detailed in Table \ref{Results ISL table}.

\begin{table}[h]
  \centering
  \scriptsize
  \setlength{\tabcolsep}{3pt}
  \rowcolors{2}{gray!15}{white}
  \sisetup{detect-weight=true, table-number-alignment=center}
  \begin{tabular}{
    l
    S[table-format=1.2e1, scientific-notation=engineering, round-precision=2] % Pareto-ISL KSD
    S[table-format=1.2,          round-precision=2]                       % Pareto-ISL MAE
    S[table-format=2.2,          round-precision=2]                       % Pareto-ISL MSE
    S[table-format=1.2,          round-precision=2]                       % GAN KSD
    S[table-format=2.2,          round-precision=2]                       % GAN MAE
    S[table-format=6.2,          round-precision=2]                       % GAN MSE
    S[table-format=1.2,          round-precision=2]                       % WGAN KSD
    S[table-format=2.2,          round-precision=2]                       % WGAN MAE
    S[table-format=6.2,          round-precision=2]                       % WGAN MSE
    S[table-format=1.2,          round-precision=2]                       % MMD-GAN KSD
    S[table-format=2.2,          round-precision=2]                       % MMD-GAN MAE
    S[table-format=6.2,          round-precision=2]                       % MMD-GAN MSE
  }
    \toprule
    \multirow{2}{*}{\textbf{Target}}
      & \multicolumn{3}{c}{\textbf{Pareto-ISL}}
      & \multicolumn{3}{c}{\textbf{GAN}}
      & \multicolumn{3}{c}{\textbf{WGAN}}
      & \multicolumn{3}{c}{\textbf{MMD-GAN}} \\
    \cmidrule(lr){2-4} \cmidrule(lr){5-7} \cmidrule(lr){8-10} \cmidrule(lr){11-13}
      & {KSD} & {MAE} & {MSE}
      & {KSD} & {MAE} & {MSE}
      & {KSD} & {MAE} & {MSE}
      & {KSD} & {MAE} & {MSE} \\
    \midrule
    Cauchy(1,2)
      & {\bfseries 1.90e-3} & {\bfseries 1.42} & {\bfseries 15.78}
      & 0.08              & 8.96             & 8207.00
      & 0.03              & 10.57            & 8127.00
      & 0.03              & 9.68             & 57975.00       \\
    Pareto(1,1)
      & {\bfseries 5.30e-3} & {\bfseries 1.16} & {\bfseries 2.41}
      & 0.10              & 12.64            & 114970.00
      & 0.49              & 7.64             & 7062.00
      & 0.50              & 9.02             & 10674.00      \\
    Model$_3$
      & {\bfseries 0.02}    & 0.47            & {\bfseries 0.33}
      & 0.19              & {\bfseries 0.45} & 1.61
      & 0.30              & 3.04             & 13.13
      & 0.56              & 3.20             & 21.52        \\
    Model$_4$
      & {\bfseries 0.01}    & {\bfseries 0.61} & {\bfseries 1.05}
      & 0.02              & 0.77             & 2.75
      & 0.03              & 2.63             & 66.06
      & 0.02              & 0.66             & 2.60         \\
    \bottomrule
  \end{tabular} 
  \caption{\small Comparison of Pareto-ISL results with vanilla GAN, WGAN, and MMD-GAN when input noise is a standard Gaussian, $K_{\max}=10$, epochs=1000, and $N=1000$. The best result for each metric is highlighted in bold.}\label{Results ISL table}
\end{table}

\subsection{Assessment of Pareto-ISL on heavy-tailed datasets}

In this third experiment, we demonstrate the effectiveness of the Pareto-ISL scheme on two univariate heavy-tailed datasets:
\begin{itemize}
    \item 136 million keystrokes (\texttt{Keystrokes}): this dataset includes inter-arrival times between keystrokes for a variety of users.
    \item Wikipedia web traffic (\texttt{Wiki Traffic}): this dataset includes the daily number of views of Wikipedia articles during $2015$ and $2016$. 
\end{itemize}
Our assessment uses two metrics. First, we compute the KSD to compare the data distribution and the generated distribution. Then, we calculate the area $\text{A}_{CCDF}$ between the log-log plots of the CCDFs of data and generated samples, indicating how well the tails of the distributions match. For $n$ real samples, we have
\begin{align*}
   \text{A}_{CCDF} = \sum_{i=1}^{n} \left[ \log \left( F_{p}^{-1} \left( \frac{i}{n} \right) \right) - \log \left( \tilde{F}_{\tilde{p}}^{-1} \left( \frac{i}{n} \right) \right) \right] \log \left( \frac{i+1}{i} \right),
\end{align*}
where \(F^{-1}_{p}\) and \(\tilde{F}^{-1}_{\tilde{p}}\) are the inverse empirical CCDFs for the data distribution \(p\) and the generated distribution \(\tilde{p}\), respectively.

We use a common network architecture and training
procedure for all experiments. The network consists
of $4$ fully connected layers with $32$ hidden units per layer and ReLU activations. Results are shown in Table \ref{Keystrokes_Wiki Traffic}.

\begin{table}[h]
  \centering
  % alternating row colors: start shading on row 2
  \rowcolors{2}{gray!15}{white}
  \sisetup{
    detect-weight=true,
    table-number-alignment = center,
    round-mode=places,
  }
  \begin{tabular}{
    l
    S[table-format=1.3, round-precision=3]
    S[table-format=2.1, round-precision=2]
    S[table-format=1.3, round-precision=3]
    S[table-format=2.2, round-precision=2]
  }
    \toprule
    \multirow{2}{*}{\textbf{Method}}
      & \multicolumn{2}{c}{\textbf{Keystrokes}}
      & \multicolumn{2}{c}{\textbf{Wiki Traffic}} \\
    \cmidrule(lr){2-3} \cmidrule(lr){4-5}
      & {KS} & {$A_{\mathrm{CCDF}}$}
      & {KS} & {$A_{\mathrm{CCDF}}$} \\
    \midrule
    Uniform (ISL)
      & 0.087 & 6.2   & 0.025  & 10.3   \\
    Normal (ISL)
      & 0.090 & 2.7   & 0.023  & 8.6    \\
    Lognormal (ISL)
      & 0.096 & 1.7   & 0.019  & 9.5    \\
    Pareto GAN \citep{huster2021pareto}
      & 0.013 & 21.1  & 0.017  & 4.5    \\
    Gamma–Weibull KDE \citep{markovich2016light}
      & 0.050 & 1.7   & 0.075  & 1.5    \\
    \addlinespace
    \textbf{Pareto‐ISL}
      & {\bfseries 0.006} & {\bfseries 1.4}
      & {\bfseries 0.017} & {\bfseries 1.19} \\
    \bottomrule
  \end{tabular}
    \caption{\small Pareto‐ISL outperforms ISL variants (Uniform, Normal, Lognormal), Pareto GAN \citep{huster2021pareto}, and Gamma–Weibull KDE \citep{markovich2016light} in tail estimation ($A_{\mathrm{CCDF}}$), while also achieving lower KS distance.}
  \label{Keystrokes_Wiki Traffic}
\end{table}

\section{ISL-slicing: A random projections-based approach}  \label{section Random Projections}

Machine learning datasets are often multi-dimensional. Building on \citet{bonneel2015sliced} and \citet{kolouri2019generalized}, we extend the one-dimensional loss function $\mathcal{L}_{ISL}$ to a general metric for higher dimensions. We do this by randomly projecting high-dimensional data onto various 1D subspaces, specifically in all possible directions \(s \in \mathbb{S}^{d}\), where \(\mathbb{S}^{d}\) is the unit hypersphere in \(d+1\)-dimensional space.

Let $x$ be a $(d+1)$-dimensional r.v. and let $s\in \mathbb{R}^{d+1}$ be a deterministic vector. We denote by $s\#p$ the pdf of the real r.v. $y=s^\top x$. Using this notation we define the \textit{sliced ISL distance} between distributions with pdfs $p$ and $\tilde{p}$ as 
%We denote by $s\#p$ the probability density function (pdf) of the random variable \(s^{T} \cdot x\), where \(x \sim p\). Since \(s^{T}x\) is a linear transformation, the pdf is well-defined. The \textit{sliced ISL distance} is then given by:
\begin{align*}
 d^{\;\mathbb{S}^{d}}_{K}(p, \tilde{p}) := \int_{s \in \mathbb{S}^{d}} d_{K}^{s}(p,\tilde{p}) \di s,   
\end{align*}
where $d_{K}^{s}(p,\tilde{p})=d_{K}(s\#p,s\#\tilde{p})$.

Since the expectation in the definition of the sliced ISL distance is computationally intractable, we approximate it using Monte Carlo sampling. Specifically, we choose a pdf \(q\) on \(\mathbb{S}^{d}\) and sample directions \(s_i \sim q\), for \(i = 1, \ldots, m\). Then, the Monte Carlo approximation of the sliced ISL distance is
\begin{align}\label{eq: sliced ISL distance MC}
    \tilde{d}_{K}^{\;\mathbb{S}^{d}}(p, \tilde{p}) = \dfrac{1}{m} \sum_{i=1}^{m} d_{K}(s_{i}\#p, s_{i}\#\tilde{p}).
\end{align}

If $\tilde{p}\equiv \tilde{p}_{\theta}$ is the pdf of the r.v. $y=g_{\theta}(z)$, $z\sim p_{z}$, i.e., the output of a NN with random input $z$ and parameters $\theta$, then one can use $\tilde{d}_{K}^{\;\mathbb{S}^{d}}(p, \tilde{p}_{\theta})$ as a loss function to train the NN.

\begin{remark}
In practice, we use the surrogate loss (see Section \ref{surrogate Invariant-Statistic Loss}) to approximate $d_K(s_{i}\#p, s_{i}\#\tilde{p}_{\theta})$ in Eq.\ref{eq: sliced ISL distance MC}, and sample random vectors from the unit sphere. %While it might seem that these vectors could be linearly dependent, making projections redundant, the following remark shows that in high-dimensional spaces this concern is unfounded.
Randomly chosen vectors from the unit sphere in a high-dimensional space are typically almost orthogonal. More precisely, (see \citet{gorban2018blessing})
\begin{align*}
\mathbb{P}\left(\left| \frac{v_{1}^{\top}v_{0}}{\|v_{1}\|_{2}\|v_{0}\|_{2}}\right| < \epsilon \right) > 1 - 2\,e^{-\frac{1}{2}d\epsilon^{2}},
\end{align*}
where \( v_0 \) and \( v_1 \) are uniformly distributed random vectors, and \( \epsilon \) is a small positive constant.
\end{remark}

The pseudocode for training implicit models using the proposed random projection method, referred to as the \textit{ISL-slicing algorithm}, is provided in Algorithm \ref{ISL Slicing Algorithm pseudocode}.

\begin{algorithm}[htb]
\setstretch{1.2}
\caption{ISL-slicing algorithm}\label{ISL Slicing Algorithm pseudocode}

\begin{algorithmic}[1]
\STATE \textbf{Input} Neural network $g_{\theta}$; hyperparameter $K$; number of epochs $N$; batch size $M$; training data $\{y_i\}_{i=1}^N$; number of randomly chosen projections $m$.
\STATE \textbf{Output} Trained neural network $g_{\theta}$.
\STATE \textbf{For} $t=1,\ldots, \operatorname{epochs}$ \textbf{do}
\STATE \hspace{0.15in} \textbf{For} $\text{iteration}=1,\ldots, N/M$ \textbf{do}
\STATE \hspace{0.30in} $L = 0$
\STATE \hspace{0.30in} Sample uniformly distributed random projection directions $\hat{\mathbb{S}}^{d} = \{s_{1:m}\}$
\STATE \hspace{0.30in} Select $M$ samples from $\{y_{j}\}_{j=1}^{N}$ at random
\STATE \hspace{0.30in} \textbf{For} each $s\in \hat{\mathbb{S}}^{d}$ \textbf{do}
%\{y_{j}\}_{j=1}^{N}\subset \textbf{Training Data}$
%\STATE \hspace{0.15in} $\mathbf{q}= \zeroVector{K+1}$
%\STATE \hspace{0.15in} \textbf{For} $y_i$ in $\operatorname{\mathbf{window}(training\_data)}$ \textbf{do}
\STATE \hspace{0.45in} $\{z_{i}\}^{K}_{i=1} \sim \normdist{\mathbf{0}}{I}$
%\STATE \hspace{0.30in} $\{\tilde{y}^{k}\}_{k=1}^{K} \leftarrow \operatorname{NN}_{\theta}(\{z^{k}\})$
\STATE \hspace{0.45in} $\mathbf{q} = \frac{1}{M}\sum_{j=1}^{M}\psi_k\Big(\sum_{i=1}^{K}\sigma_{\alpha}(s^{\top} y_{j} - s^{\top} g_{\theta}(z_i))\Big)$%\tilde{a}_K(y_i)

\STATE \hspace{0.45in} $\operatorname{L} \leftarrow 
 \operatorname{L} + \nabla_{\theta}\left|\left|\frac{1}{K+1}\mathbf{1}_{K+1}-\mathbf{q}\right|\right|_2$
%\STATE \hspace{0.15in} $\nabla_{\theta}\operatorname{L} \leftarrow \nabla_{\theta}L$ %\{\mathbf{a}_{K+1}{ \frac{1}{K+1}\cdot \oneVector{K+1}}$
\STATE \hspace{0.30in} $\text{Backpropagation}(g_{\theta}, \nabla_{\theta}\operatorname{L}/m)$
\STATE \textbf{return} $g_{\theta}$
\vspace{0.15in}
%\STATE $\operatorname{\mathbf{get\_A_K}}(\tilde{\mathbf{y}}, y)$ \textbf{do}
%\STATE \hspace{0.15in}\textbf{return} $\Big( (\psi_{0} \circ \phi_y)(\tilde{\mathbf{y}}), \ldots, (\psi_{K} \circ \phi_y)(\tilde{\mathbf{y}})\Big)$
\end{algorithmic}
\end{algorithm}

\subsection{Random projections vs. marginals on high-dimensional data}

Following \cite{saatci2017bayesian}, we conduct experiments on a multi-modal synthetic dataset. Specifically, we generate \(D\)-dimensional synthetic data from the model

\begin{equation*}
\begin{cases}
\begin{aligned}
\bm{z} &\sim \mathcal{N}(0, 10 \cdot I_{d}), \quad \bm{A} \sim \mathcal{N}(0, I_{D \times d}), \quad \epsilon \sim \mathcal{N}(0, 0.01 \cdot I_{D}), \\
\bm{x} &= \bm{A}\bm{z} + \epsilon, \quad d \ll D.
\end{aligned}
\end{cases}
\end{equation*}

In these experiments, we fit a standard GAN to a dataset where \( D = 100 \) and \( d = 2 \). The generator is a 3-layer fully connected neural network with $10$, $1000$, and $100$ units, respectively, using ReLU activations throughout. %We set the  dimensionality of \( \bm{z} \) to $10$.

Figure \ref{fig:Synthetic Dataset res combined} illustrates the performance of random projections (ISL-slicing) when compared with that of the marginal pdf-based method from \cite{de2024training}. Our results show that even with a limited number of projections, ISL-slicing achieves a lower Jensen-Shannon divergence (\texttt{JS-Divergence}) w.r.t. the true distribution, as estimated by kernel density. This approach also significantly reduces computation time per iteration compared to ISL marginals, since the number of projections is a fraction of the total marginals. Specifically, it took $10$x and $5$x less execution time, respectively, to obtain the results shown in Figures \ref{fig:Synthetic Dataset res a} and \ref{fig:Synthetic Dataset res b} using the slicing method compared to the marginals method.

\begin{figure}[h]
    \centering
    \begin{subfigure}[t]{0.32\textwidth} % Use "t" to align at the top
        \centering
        \includegraphics[width=\textwidth]{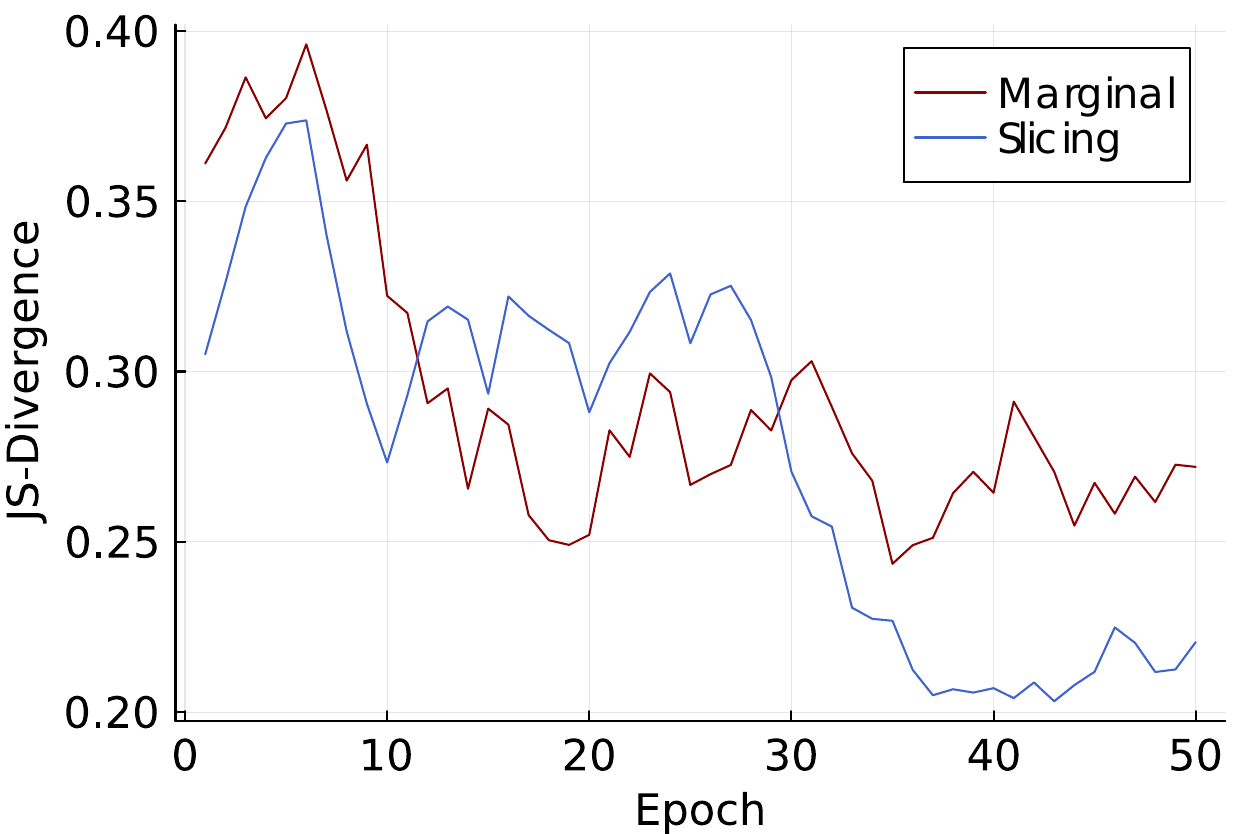}
        \caption{\small $m=10$ random projections.}
        \label{fig:Synthetic Dataset res a}
    \end{subfigure}
    \hfill
    \begin{subfigure}[t]{0.32\textwidth} % Use "t" to align at the top
        \centering
        \includegraphics[width=\textwidth]{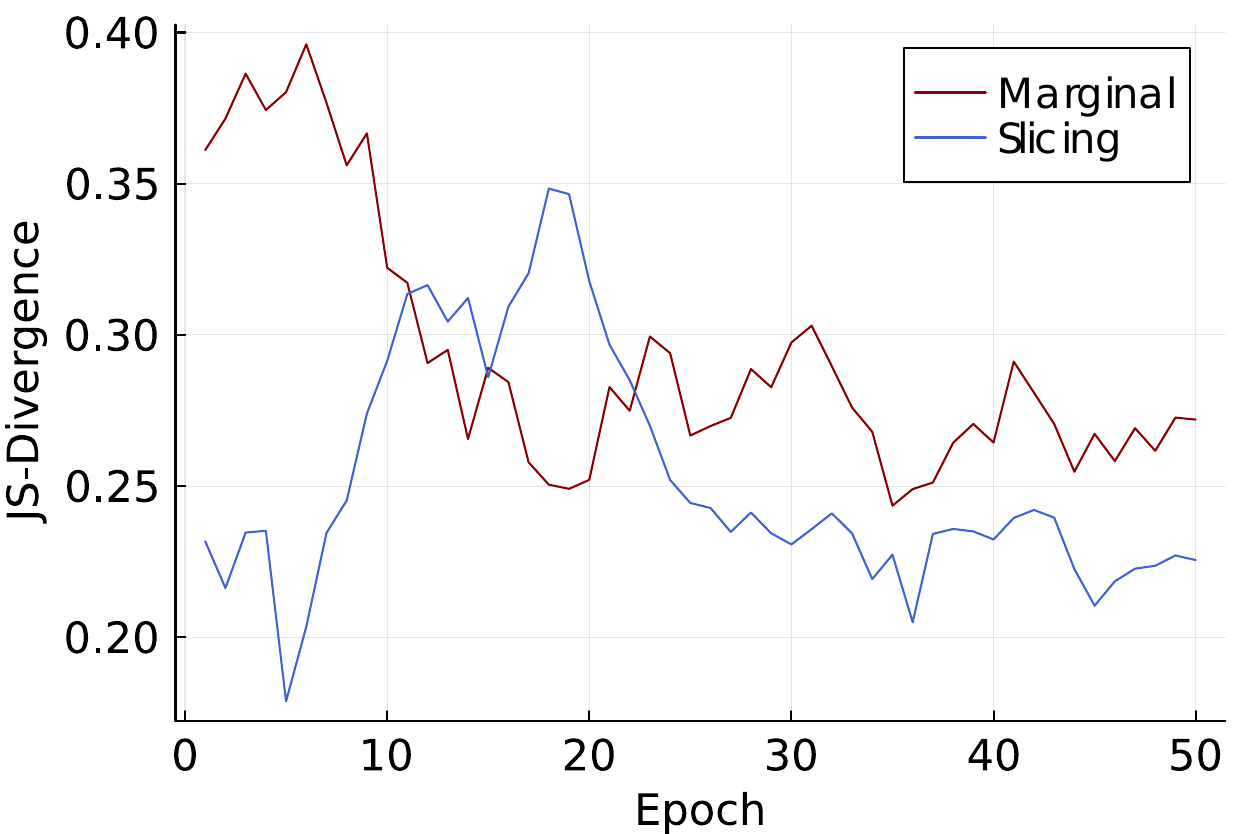}
        \caption{\small $m=20$ random projections.}
        \label{fig:Synthetic Dataset res b}
    \end{subfigure}
    \hfill
    \begin{subfigure}[t]{0.32\textwidth} % Use "t" to align at the top
        \centering
        \includegraphics[width=\textwidth]{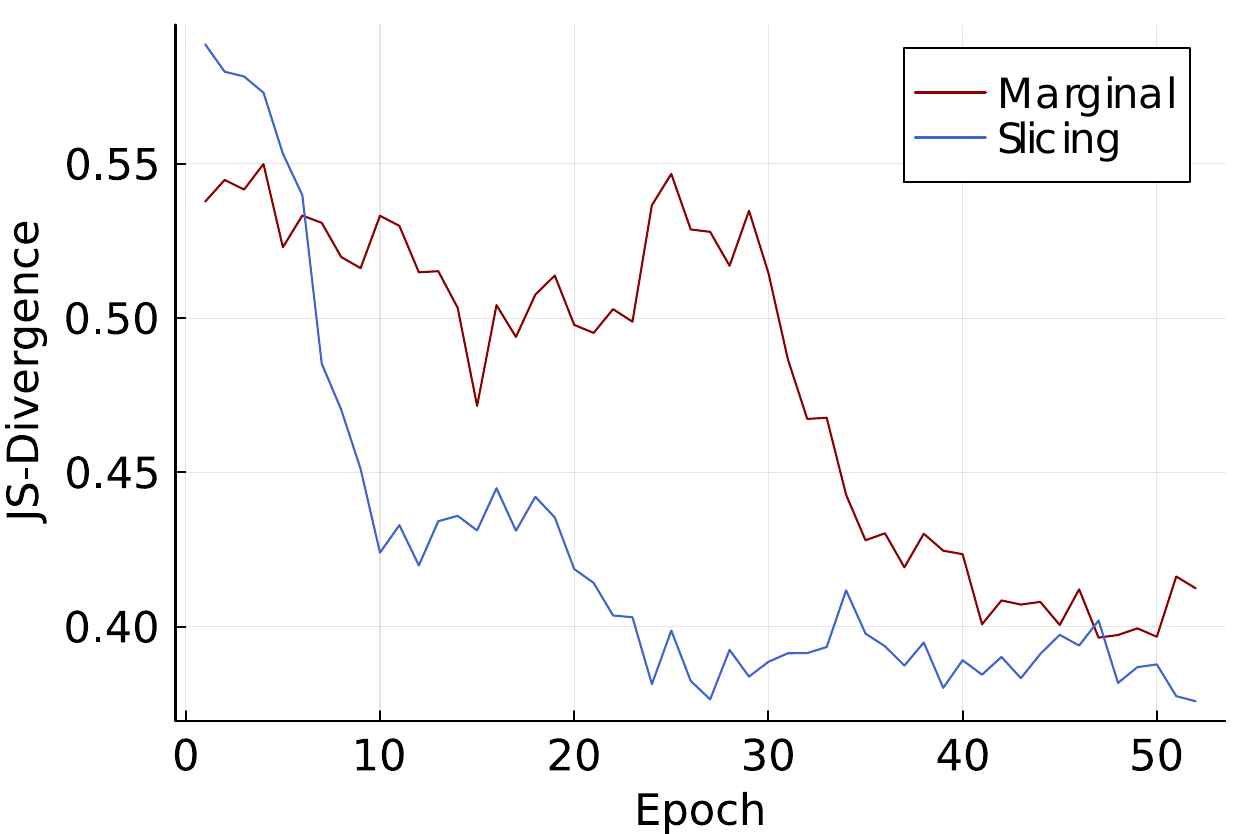}
        \caption{\small $D=500$, $m=10$ random projections.}
        \label{fig:image1}
    \end{subfigure}
    
    \caption{\small Performance evaluation of ISL-slicing vs ISL marginal methods on a synthetic dataset. Plots (a) and (b) correspond to \( D=100 \) with different random projections (\(m=10\) and \(m=20\)). Plot (c) corresponds to $D=500$ and $m=10$. The hyperparameters are \(K=10\), \(N=1000\) and learning rate of $10^{-4}$.}
    \label{fig:Synthetic Dataset res combined}
\end{figure}

\subsection{Experiments on 2D distributions} \label{Experiments on 2-D distributions}

We begin by examining simple 2D distributions characterized by different topological structures: one distribution with two modes, another with eight modes, and a third featuring two rings. Our objective is to assess the ability of the ISL-slicing method to fully capture the support of these distributions. We compare our approach to normalizing flows and GANs, using KL-divergence and visual assessment as metrics.

For GAN, WGAN, and ISL, we use a 4-layer MLP generator with a 2D input sampled from a standard normal distribution. Each layer has 32 units and uses the hyperbolic tangent activation. The discriminator is an MLP with 128 units per layer, using ReLU activations except for the final sigmoid layer. We used a batch size of 1000 and optimized the critic-to-generator update ratio over \(\{1\!:\!1,\,2\!:\!1,\,3\!:\!1,\,4\!:\!1,\,5\!:\!1\}\)
 for GAN and WGAN. The learning rate was chosen from \(\{10^{-2}, 10^{-3}, 10^{-4}, 10^{-5}\}\). For ISL, we set \(K=10\), \(N=1000\), $m=10$ random projections, and a learning rate of $10^{-3}$. For the normalizing flow model, we used the RealNVP architecture from \cite{dinh2016density}, with 4 layers of affine coupling blocks, parameterized by MLPs with two hidden layers of 32 units each. The learning rate was set to $5\cdot 10^{-5}$, using the implementation from \cite{Stimper2023}. All methods were trained for 1000 epochs, with optimization performed using the ADAM algorithm.

Figure \ref{experiments in 2d distributions figures} highlights the challenges of training GANs, particularly their susceptibility to mode collapse. In contrast, normalizing flow methods preserve topology via invertibility constraints but struggle to model complex structures, often forming a single connected component due to density filaments. Our method overcomes this by capturing the full distribution support and distinguishing between connected components, as seen in the \texttt{Dual Moon} example. However, ISL can fill regions between modes, as seen in the \texttt{Circle of Gaussians}, an issue mitigated by increasing \(K\). Alternatively, combining a pretrained network with ISL (trained for 100 epochs) and a GAN yielded the best results (method denoted as ISL+GAN), capturing full support while excluding zero-density regions. This approach is detailed further in Section \ref{ISL-Pretrained GANs for Robust Mode Coverage in 2D Grids}. We also estimate the KL-divergences between the target and model distributions, as listed in Table \ref{table: experiments in 2d distributions figures}. In all cases, the ISL method, and particularly the ISL+GAN approach, outperform the respective baselines.

\begin{figure}[!h]
  \centering
  \footnotesize
  \setlength{\tabcolsep}{1.5pt}       % tighten up column spacing
  \renewcommand{\arraystretch}{0.95}  % slightly less row height
  % shrink images a bit
  %\setkeys{Gin}{height=1.4cm,keepaspectratio}
  \setkeys{Gin}{height=1.5cm}

  \begin{tabular}{@{} r | *{6}{c} @{}}
    \toprule
    \rowcolor{gray!30}
      & \textbf{Target}
      & \textbf{Real NVP}
      & \textbf{WGAN}
      & \textbf{GAN}
      & \textbf{ISL}
      & \textbf{ISL+GAN} \\
    \midrule

    \rotatebox{90}{\scriptsize\textbf{Moon}}
      & \includegraphics{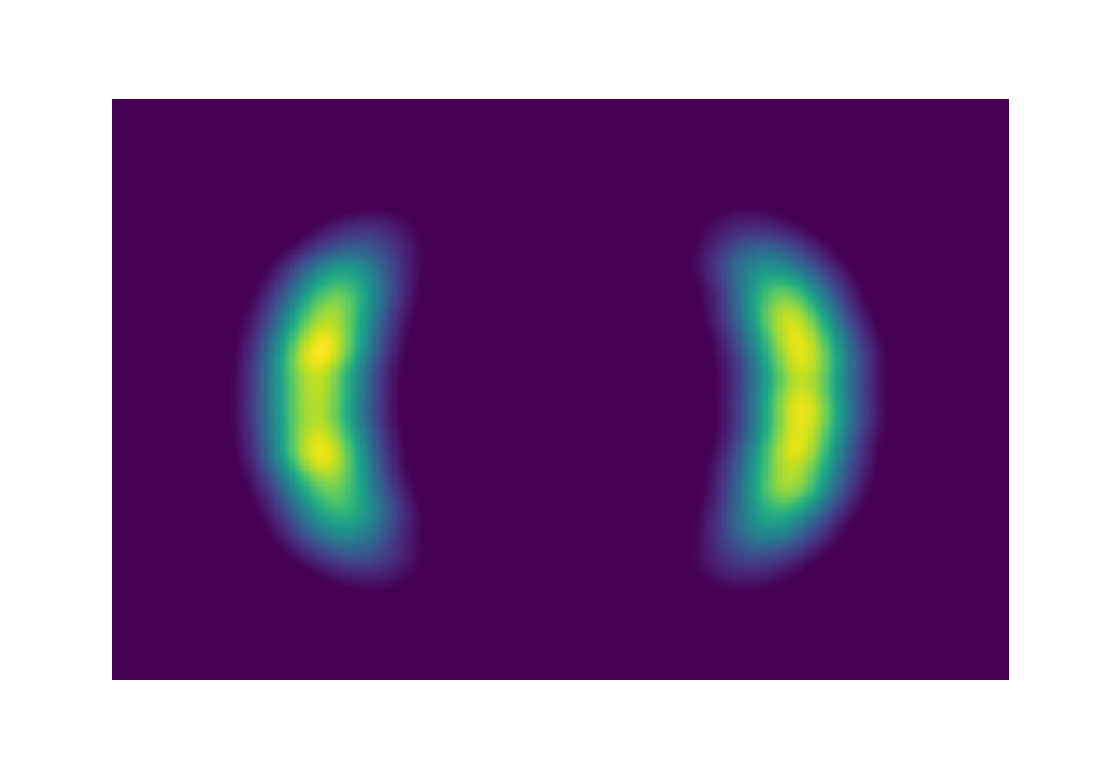}
      & \includegraphics{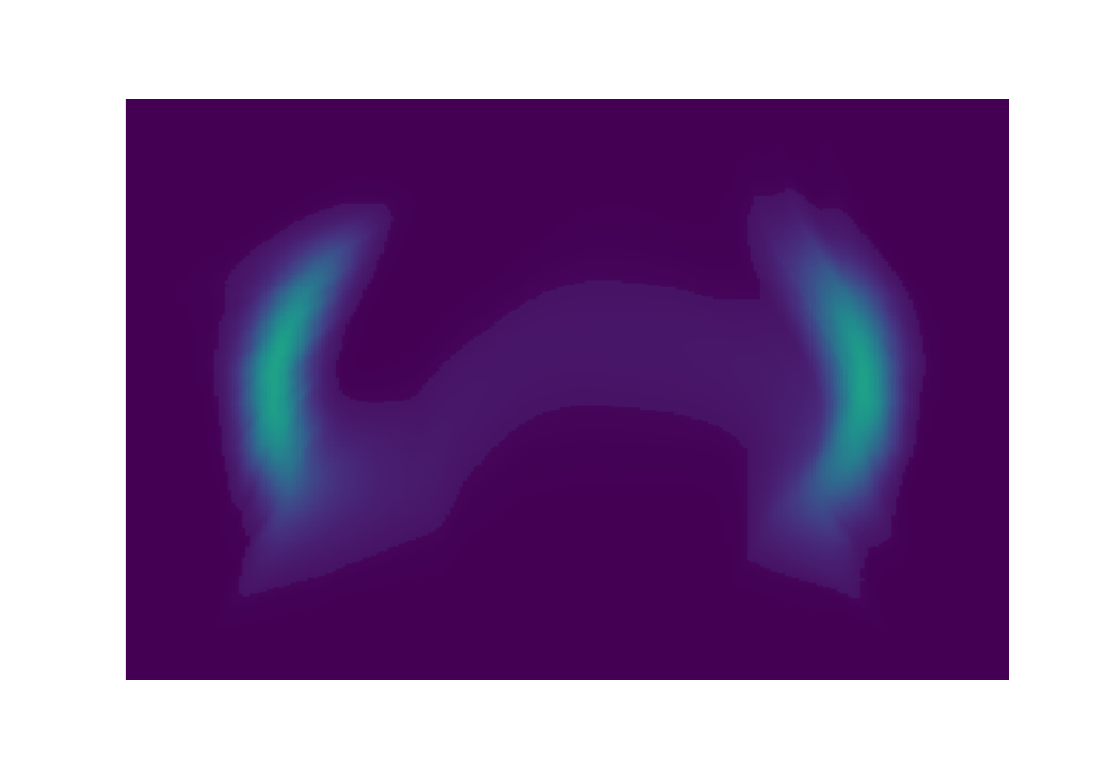}
      & \includegraphics{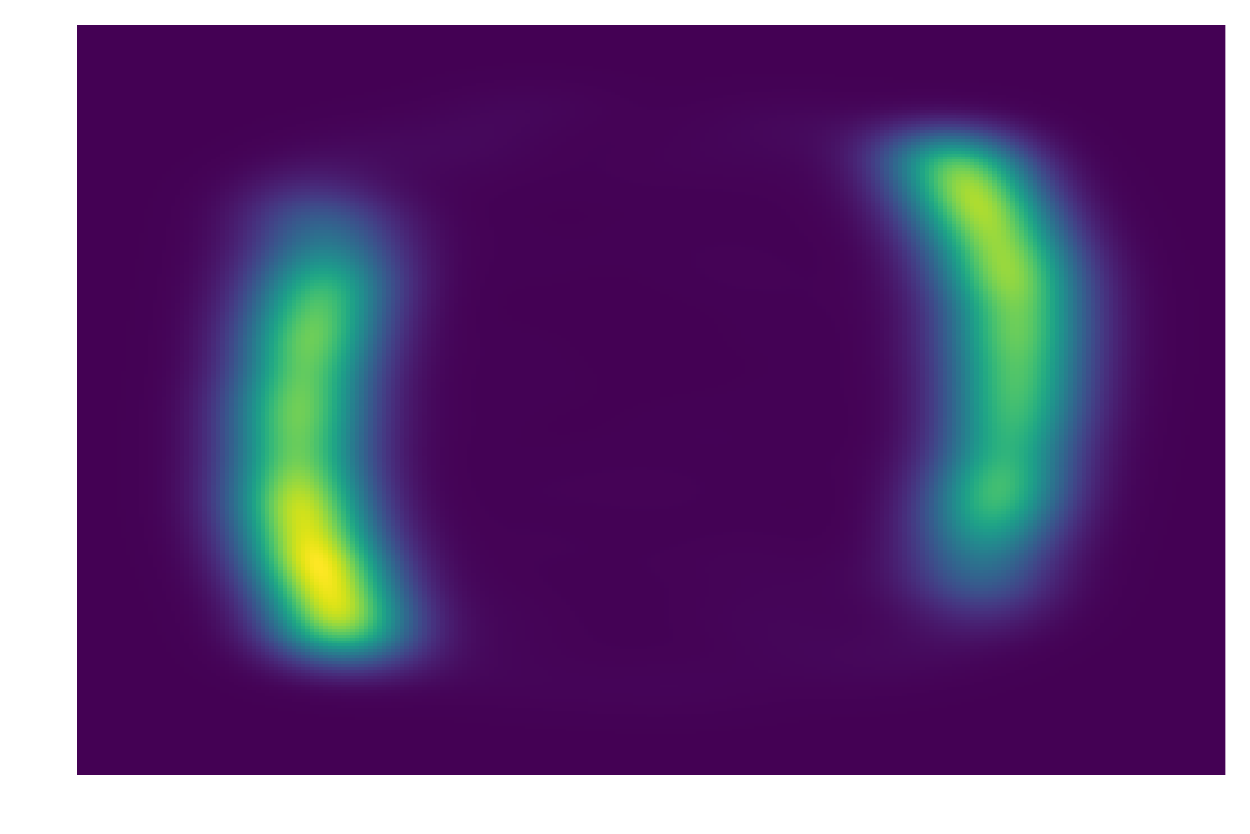}
      & \includegraphics{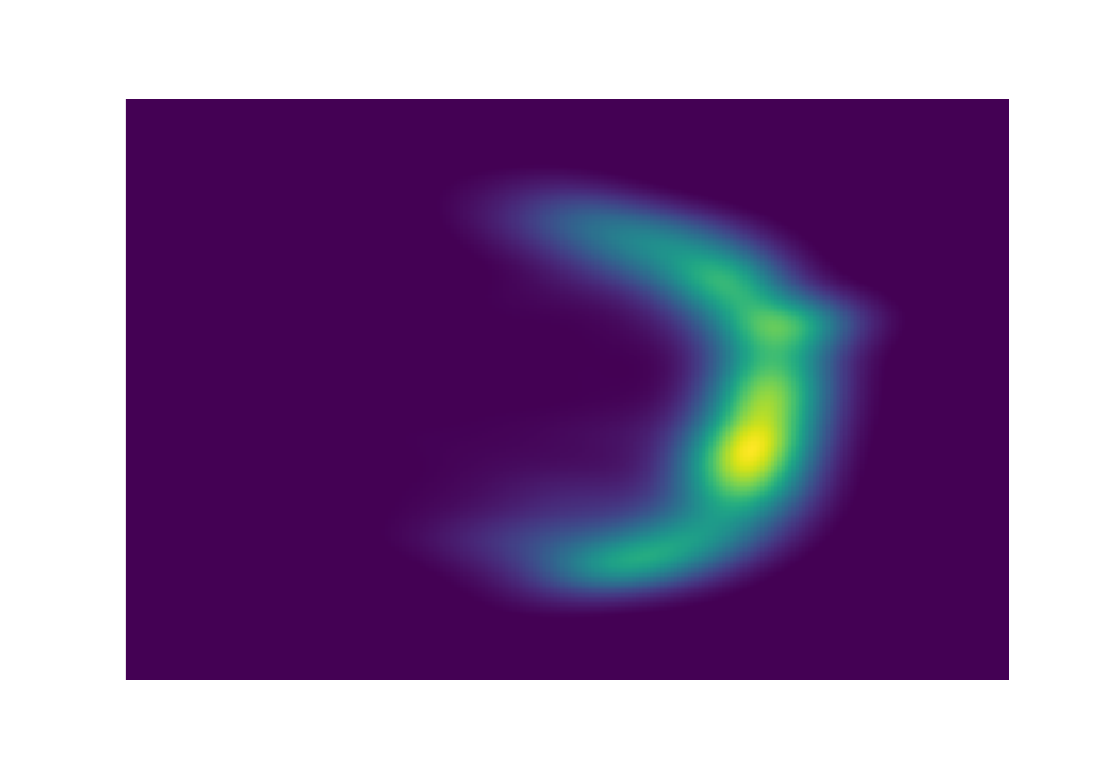}
      & \includegraphics{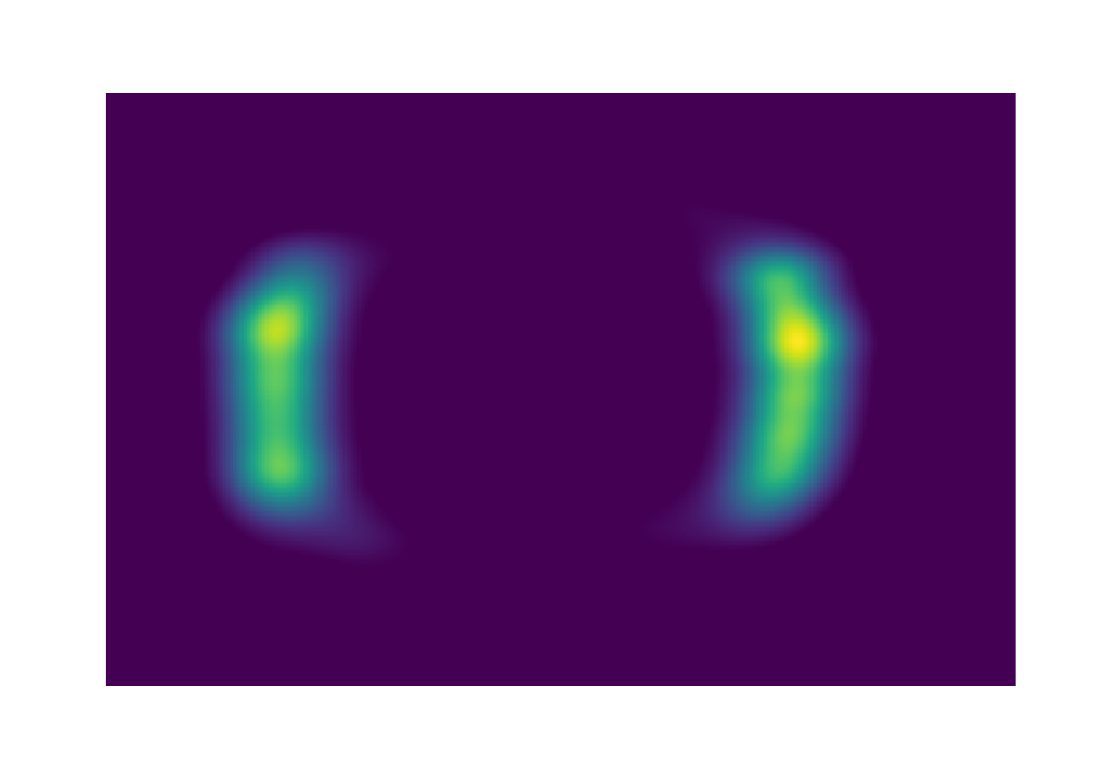}
      & \includegraphics{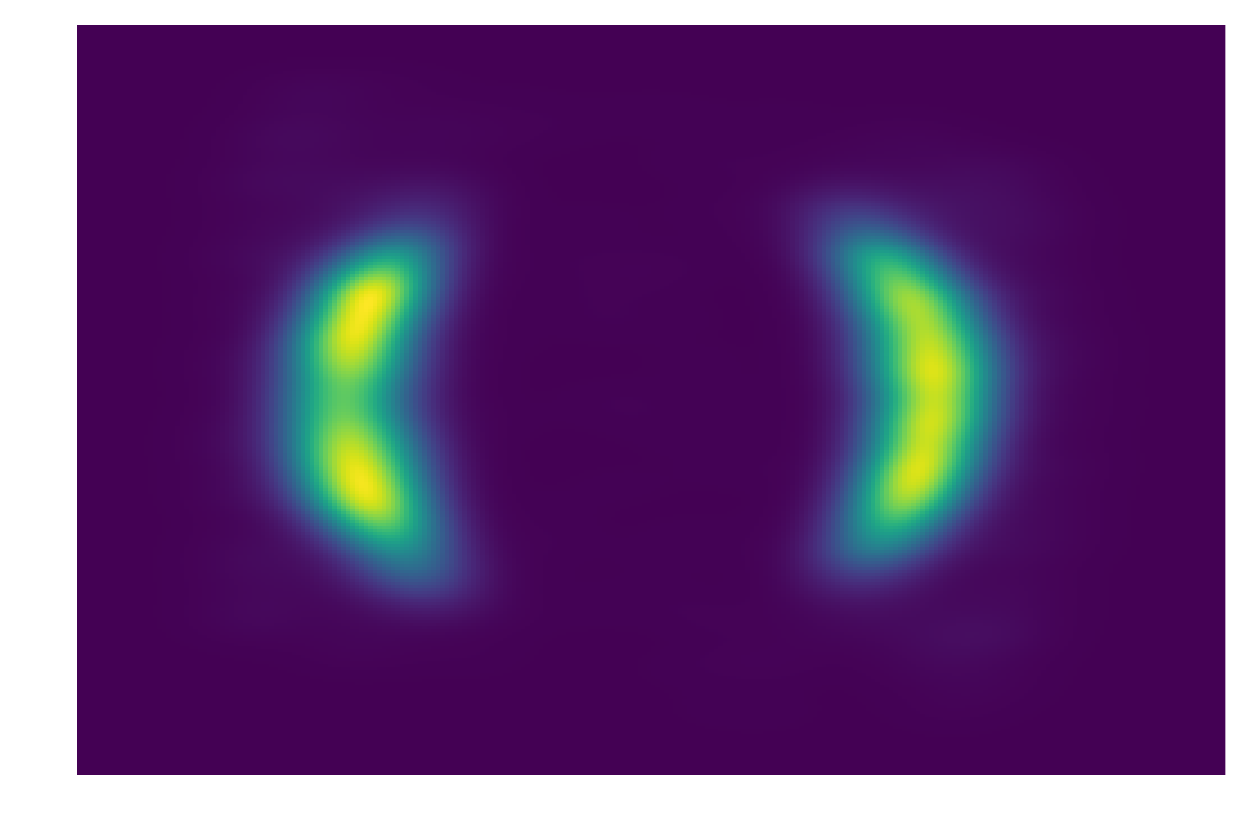}
    \\[0pt]

    \rotatebox{90}{\scriptsize\textbf{Gauss}}
      & \includegraphics{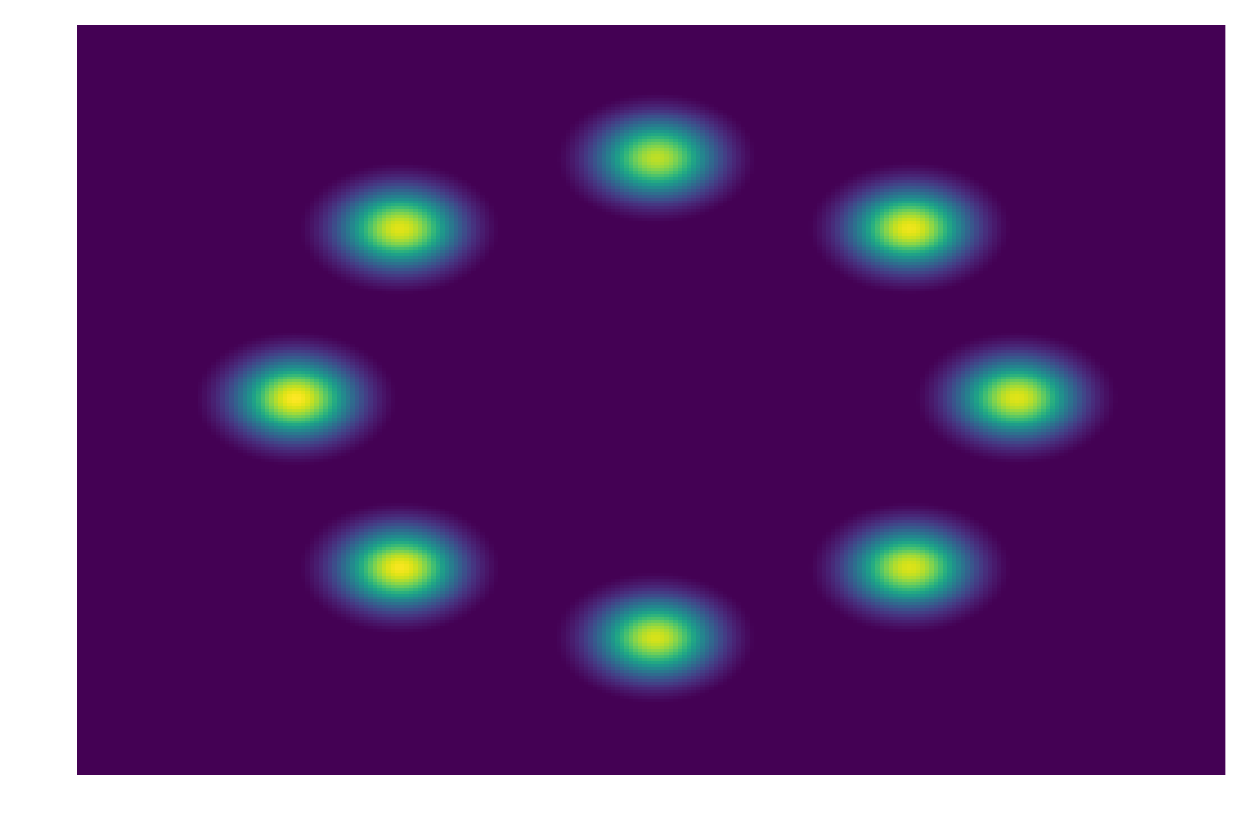}
      & \includegraphics{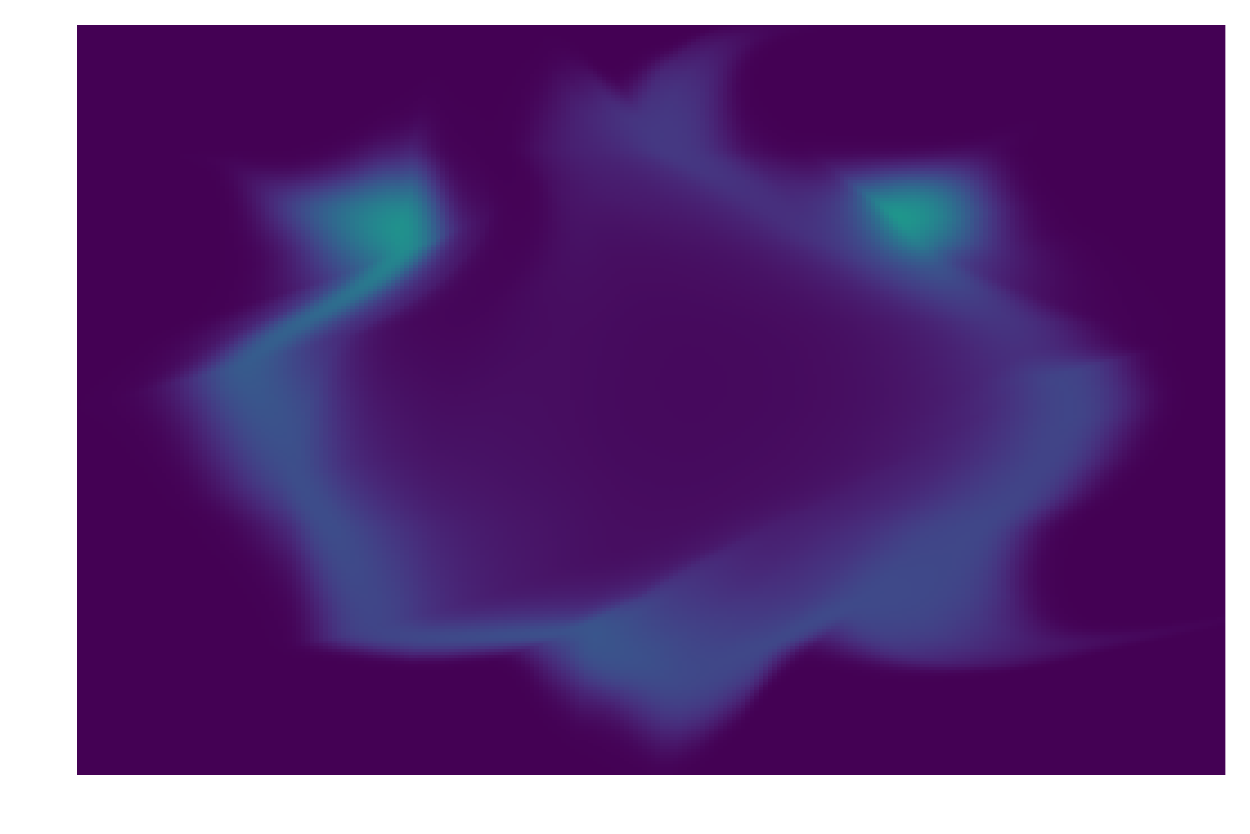}
      & \includegraphics{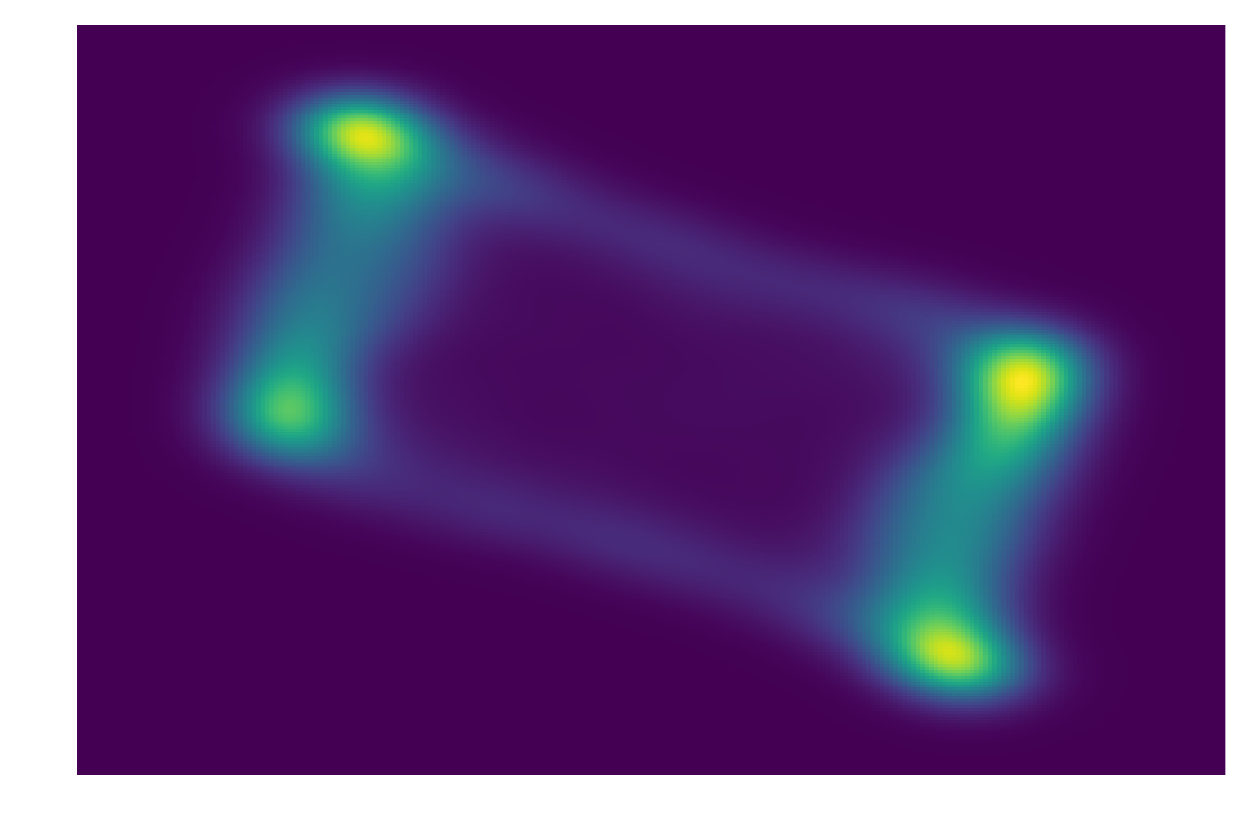}
      & \includegraphics{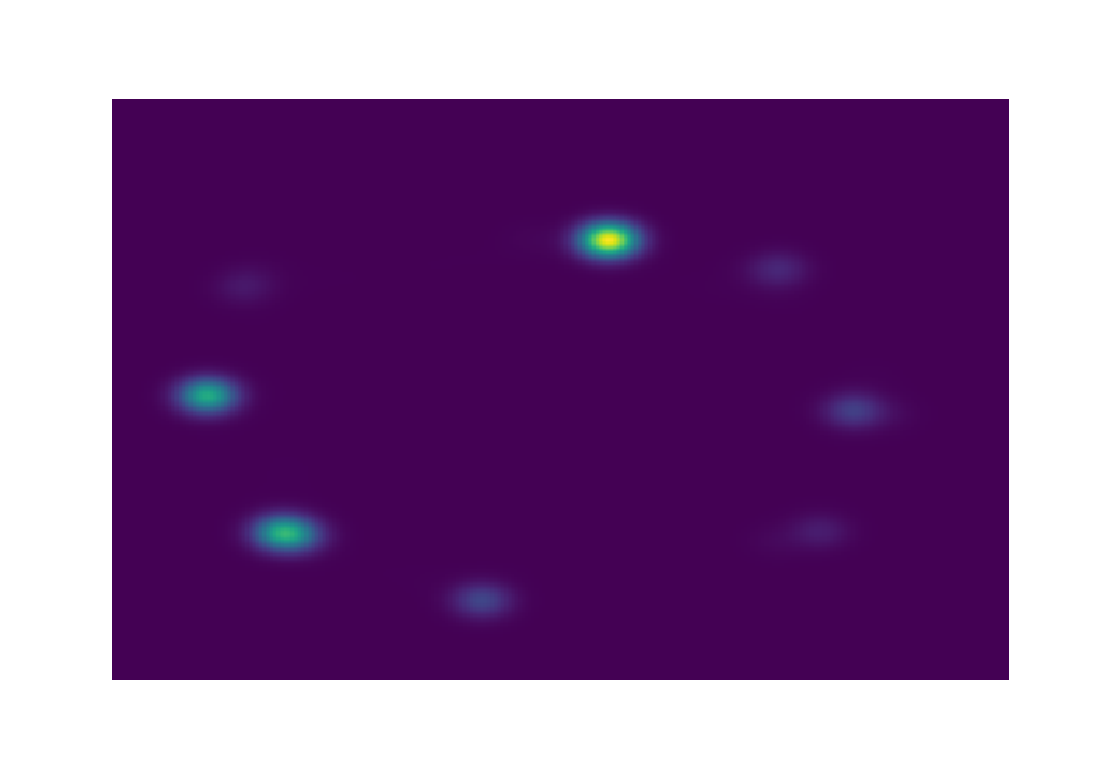}
      & \includegraphics{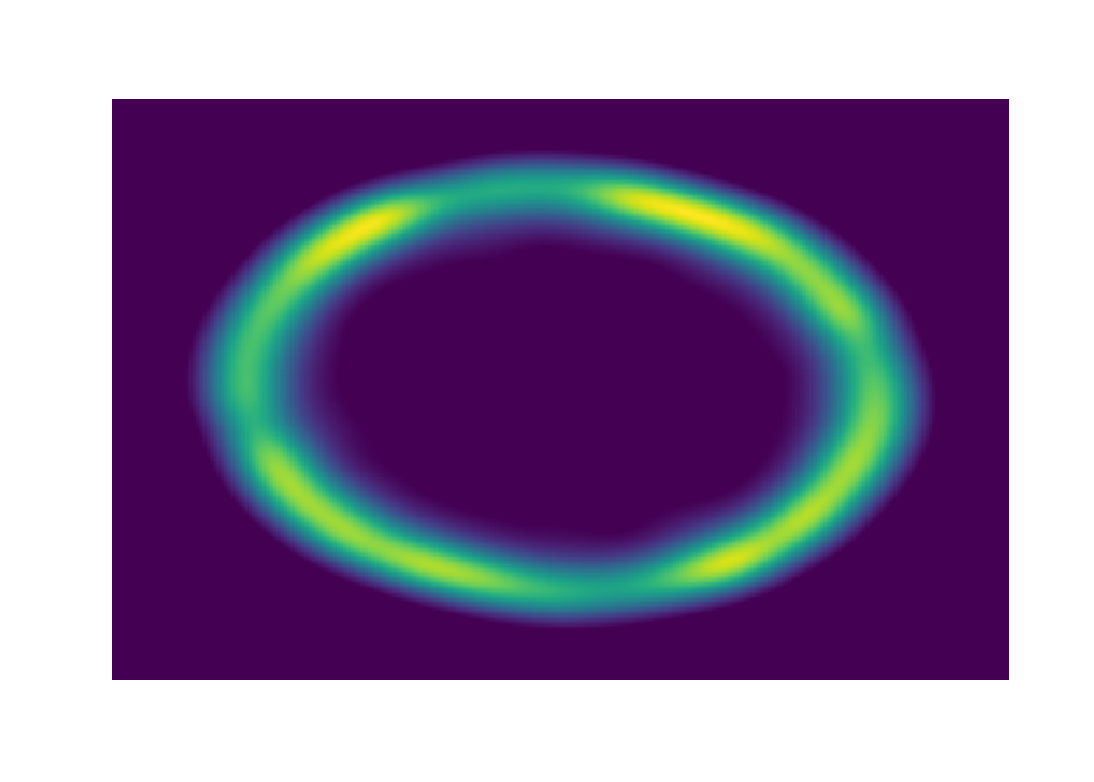}
      & \includegraphics{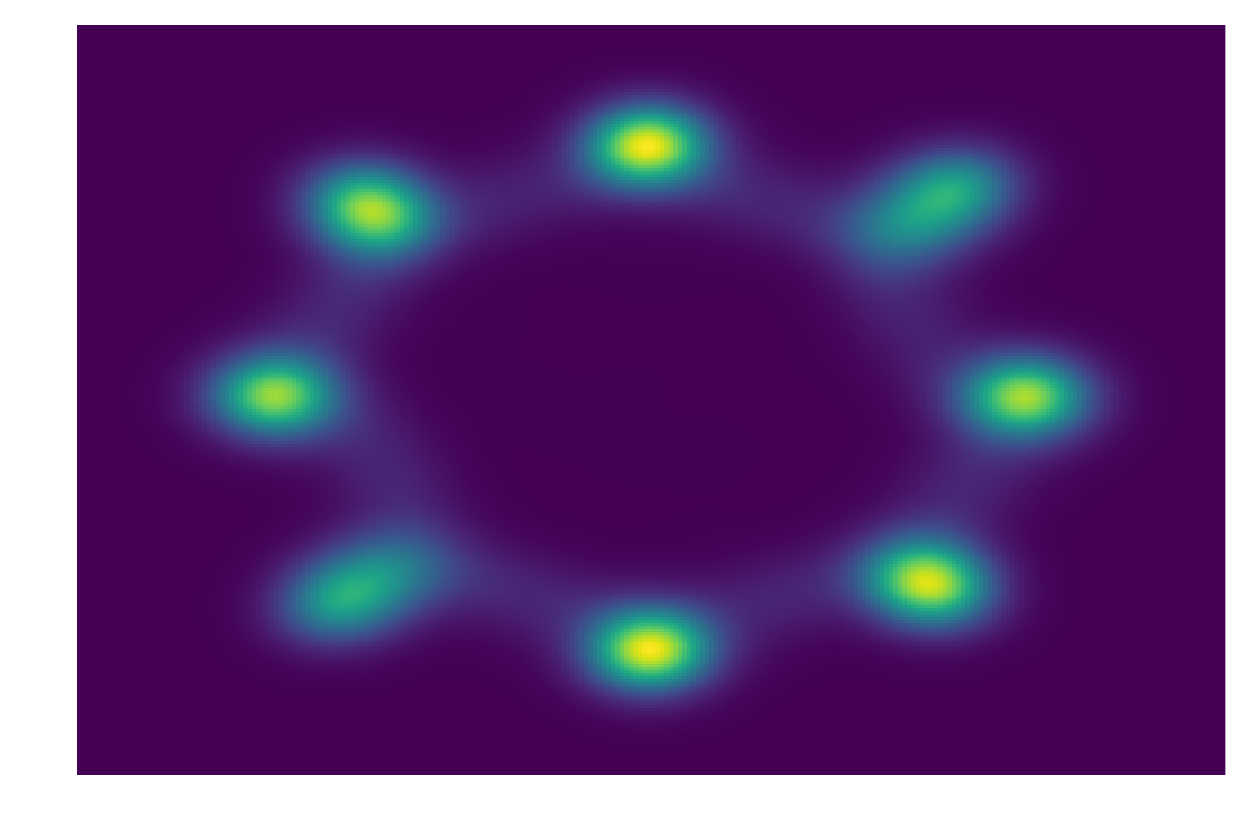}
    \\[2pt]

    \rotatebox{90}{\scriptsize\textbf{Rings}}
      & \includegraphics{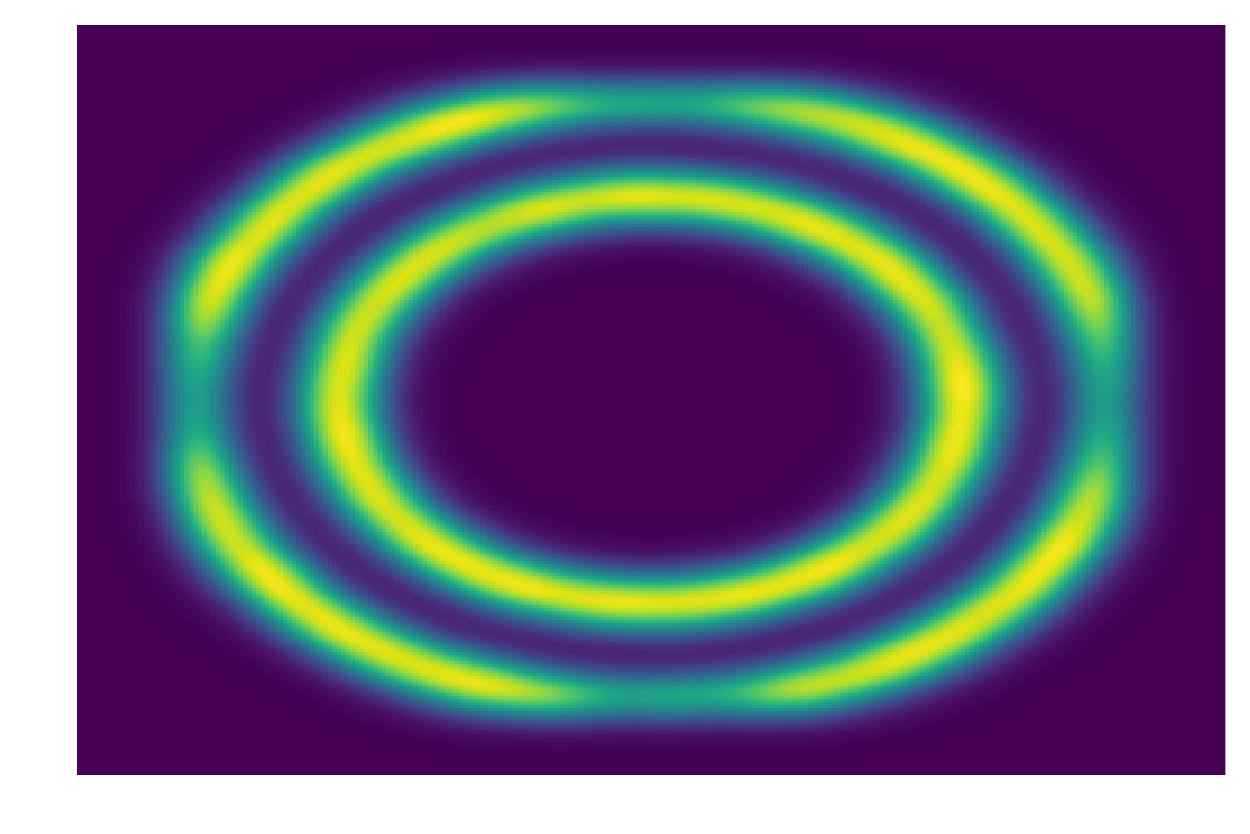}
      & \includegraphics{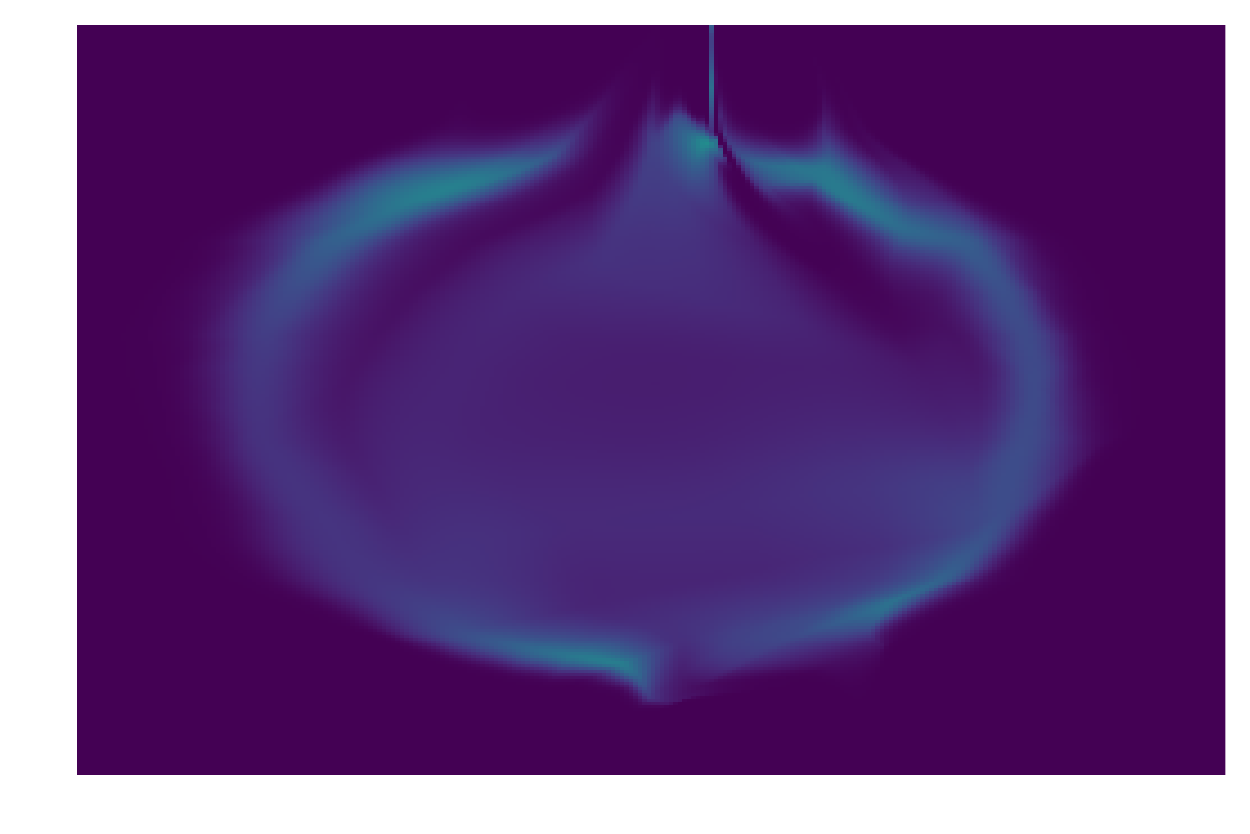}
      & \includegraphics{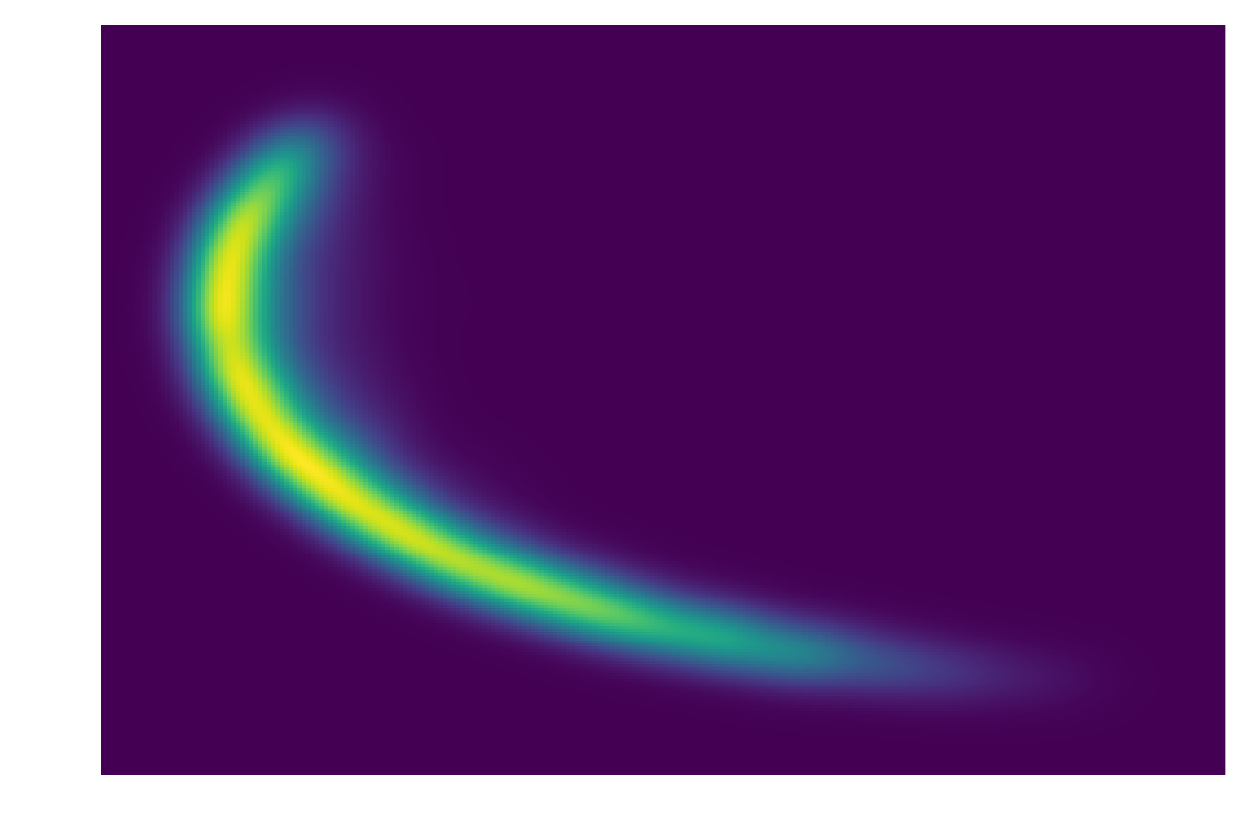}
      & \includegraphics{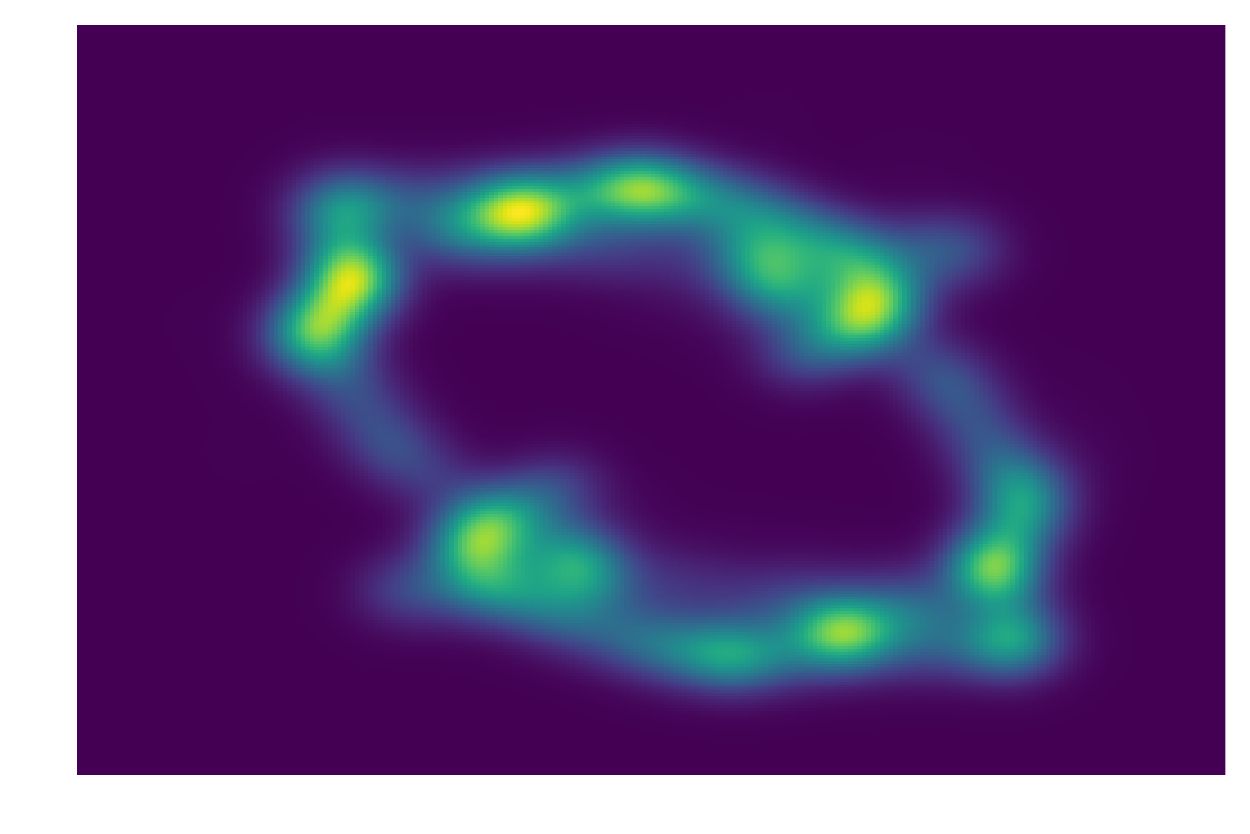}
      & \includegraphics{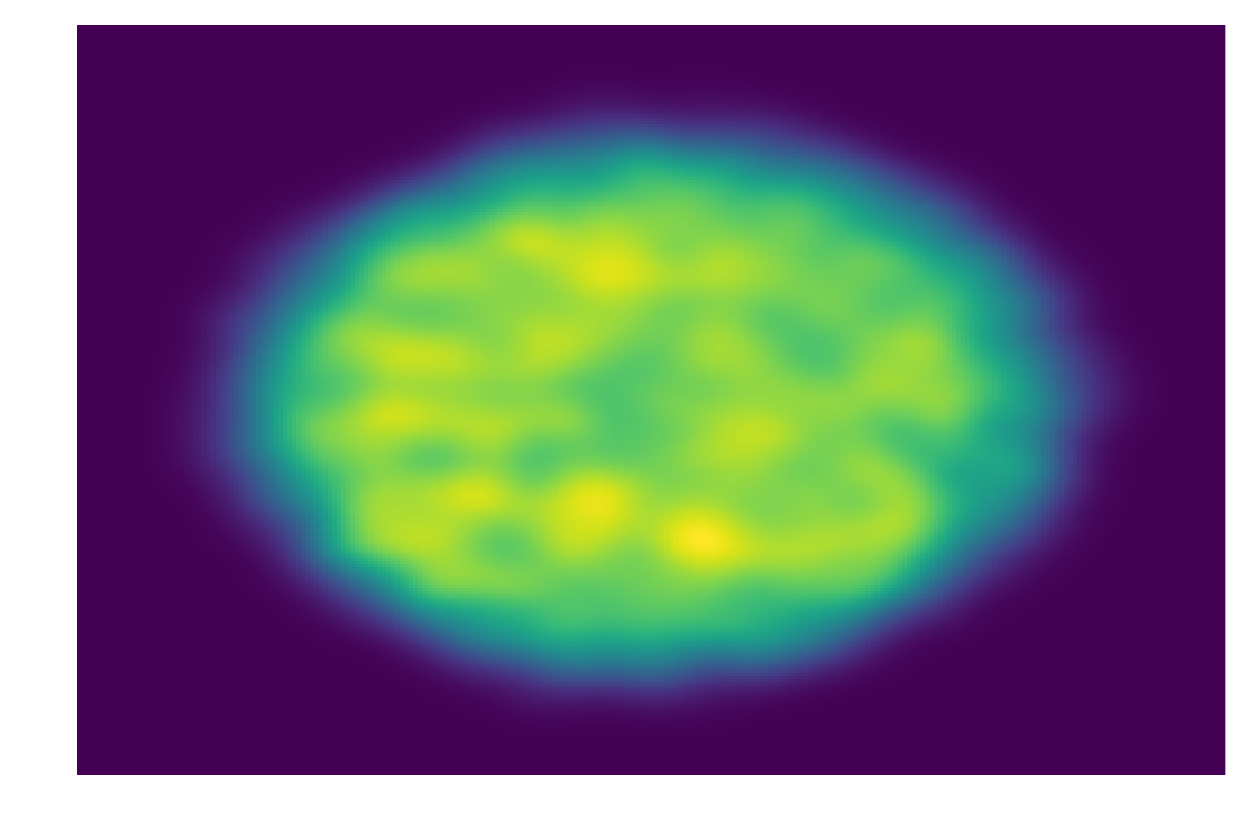}
      & \includegraphics{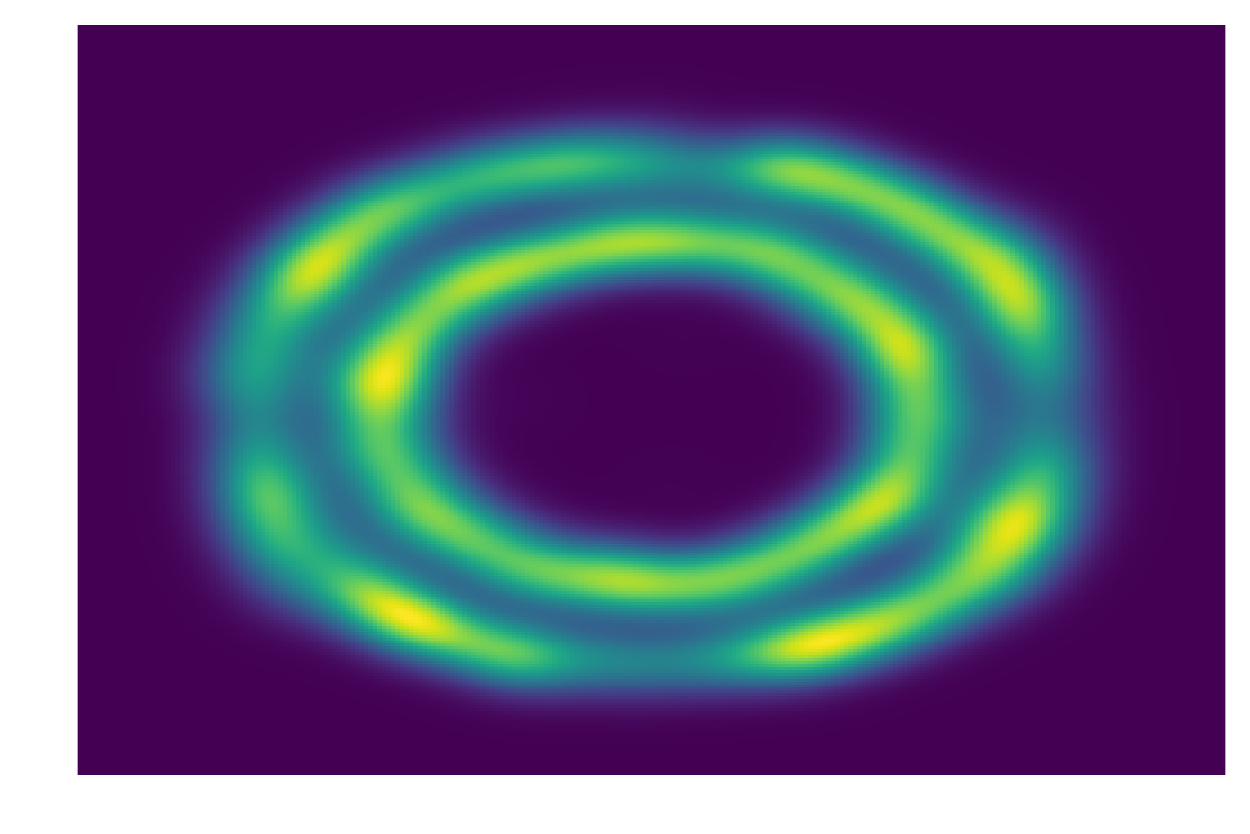}
    \\
    \bottomrule
  \end{tabular}
    \caption{\small Comparison of the performance of various generative methods—Real NVP, WGAN, GAN, ISL-slicing, and GAN pretrained with ISL to approximate three complex 2D distributions: the \texttt{Moon}, \texttt{Gauss}, and \texttt{Rings} datasets— referred to as Dual Moon, Circle Gaussian, and Two Rings, respectively. For a detailed description of the dataset, refer to \cite{Stimper2023}. The first column shows the target distributions, while the subsequent columns display the results generated by Real NVP, WGAN, GAN, ISL-slicing, and GAN pretrained with ISL, in that order.} \label{experiments in 2d distributions figures}
\end{figure}

\begin{table}[!h]
  \centering
  \setlength{\tabcolsep}{4pt}
  \rowcolors{2}{gray!15}{white}
  \sisetup{
    detect-weight = true,
    table-number-alignment = center,
    round-mode       = places,
    round-precision  = 2
  }
  \begin{tabular}{
    l
    S[table-format=1.2]
    S[table-format=1.2]
    S[table-format=1.2]
    S[table-format=1.2]
    S[table-format=1.2]
  }
    \toprule
    \textbf{Method / Dataset}
      & \textbf{Real NVP}
      & \textbf{GAN}
      & \textbf{WGAN}
      & \textbf{ISL}
      & \textbf{ISL+GAN} \\
    \midrule
    Dual moon
      & 1.77
      & 1.23
      & 1.02
      & 0.43
      & {\bfseries 0.30} \\
    Circle of Gaussians
      & 2.59
      & 2.24
      & 2.38
      & 1.61
      & {\bfseries 0.46} \\
    Two rings
      & 2.69
      & 1.46
      & 2.74
      & 0.56
      & {\bfseries 0.38} \\
    \bottomrule
  \end{tabular}
  \caption{\small Performance comparison of generative methods on 2D distributions. The table summarizes KL-divergence results for Real NVP, GAN, WGAN, and ISL on \texttt{Dual Moon}, \texttt{Circle of Gaussians}, and \texttt{Two Rings}. Lower values indicate better performance.}
\label{table: experiments in 2d distributions figures}
\end{table}

\subsection{ISL-pretrained GANs for robust mode coverage in 2D grids} \label{ISL-Pretrained GANs for Robust Mode Coverage in 2D Grids}

In this experiment, we explore how ISL-pretrained GANs improve mode coverage on the \texttt{2D-Ring} and \texttt{2D-Grid} datasets, benchmarks commonly used in generative model evaluation. The \texttt{2D-Ring} dataset consists of eight Gaussian distributions arranged in a circle, while the \texttt{2D-Grid} dataset has twenty-five Gaussians on a grid. We first pretrain with ISL-slicing to ensure comprehensive mode coverage, then fine-tune with a GAN. Performance was compared to other GANs using two metrics: the number of covered modes (\texttt{\#modes}) and the percentage of high-quality samples (\(\texttt{\%HQ}\)). A mode is covered if generated samples fall within three standard deviations of the Gaussian center, which also defines high-quality samples. The second metric represents their proportion among all generated samples.

We follow the experimental settings for GAN and WGAN as outlined by \cite{luo2024dyngan}. The generator is a 4-layer MLP with 128 units per layer and ReLU activation, while the discriminator is an MLP with 128 units, using ReLU except for a sigmoid in the final layer. We use a batch size of $128$, a 1:1 critic-to-generator update ratio, and learning rates of $10^{-4}$ for GAN and $10^{-5}$ for WGAN, both optimized using ADAM. ISL was run for $250$ epochs with \(K=10\), \(N=1000\), and $m=5$ random projections, capturing the full support of the distribution. The GAN is then trained for $30000$ epochs. The results presented in Table \ref{table:mode collapse exp} and Figure \ref{fig:combined_gan_comparison} compare our method with other techniques for addressing mode collapse in GANs. They demonstrate that our approach performs on par with more advanced methods, such as DynGAN (a semi-supervised technique, see \cite{luo2024dyngan}) and BourGAN (computationally intensive, see \cite{xiao2018bourgan}), while offering greater simplicity. 

Finally, we conduct a robustness experiment comparing the performance of a WGAN with suboptimal hyperparameters to the same WGAN after ISL pretraining. Both use the same generator and discriminator architectures described in Section \ref{Experiments on 2-D distributions}. The WGAN hyperparameters include a batch size of 1000, a critic-to-generator update ratio optimized over \(\{1\!:\!1,\,2\!:\!1,\,3\!:\!1,\,4\!:\!1,\,5\!:\!1\}\), and a learning rate of $10^{-5}$. For ISL, we use \(K=10\), \(N=1000\), $m=10$ random projections, and a learning rate of $10^{-3}$. Numerical results are displayed in Table \ref{table:comparison_wgan_isl_combined}. Additional experiments with batch sizes \(\{64, 128, 252, 512\}\) and optimized critic-to-generator ratios are shown in Table \ref{table:comparison_wgan_batchsize}. ISL-pretrained WGAN consistently outperforms standard WGAN across all ratios and batch sizes, detecting all 8 modes and producing high-quality samples. This shows that ISL pretraining significantly enhances stability and performance, leading to more robust GAN training. Once all modes are captured by ISL, a low learning rate allows the generator to improve high-quality metrics while maintaining full distribution coverage.

\begin{table}[h]
  \centering
  \rowcolors{2}{gray!15}{white}
  \sisetup{
    detect-weight = true,
    table-number-alignment = center,
    round-mode       = places,
    round-precision  = 1
  }
  \begin{tabular}{l *{4}{S}}
    \toprule
    \multirow{2}{*}{\textbf{Method}}
      & \multicolumn{2}{c}{\textbf{2D‐Ring}}
      & \multicolumn{2}{c}{\textbf{2D‐Grid}} \\
    \cmidrule(lr){2-3} \cmidrule(lr){4-5}
      & {\#modes} & {\%HQ}
      & {\#modes} & {\%HQ} \\
    \midrule
    GAN \citep{goodfellow2014generative}
      &  6.3 &  98.2
      & 17.1 &  92.5 \\
    VEEGAN \citep{srivastava2017veegan}
      &  {\bfseries 8.0} &  86.8
      & 24.4 &  74.2 \\
    Pointwise \citep{zhong2019rethinking}
      &  {\bfseries 8.0} &  87.5
      & {\bfseries 25.0} &  76.7 \\
    BourGAN \citep{xiao2018bourgan}
      &  {\bfseries 8.0} & {\bfseries 99.9}
      & {\bfseries 25.0} &  94.9 \\
    WGAN \citep{arjovsky2017wasserstein}
      &  7.7 &  86.4
      & 24.8 &  83.7 \\
    DynGAN \citep{luo2024dyngan}
      &  {\bfseries 8.0} &  99.5
      & {\bfseries 25.0} &  96.0 \\
    ISL + GAN
      &  {\bfseries 8.0} &  97.9
      & 24.6 & {\bfseries 96.8} \\
    ISL + WGAN
      &  {\bfseries 8.0} &  97.4
      & {\bfseries 25.0} &  96.0 \\
    \bottomrule
  \end{tabular}
    \caption{\small Quantitative results on synthetic 2D‐mode benchmarks. \#modes is the number of modes covered; \%HQ is the percentage of high‐quality samples (higher is better).}\label{table:mode collapse exp}
\end{table}

%\vspace{-0.7cm}
\begin{figure}[h]
    \centering
    \begin{subfigure}[t]{0.32\textwidth}
        \centering
        \includegraphics[width=\textwidth]{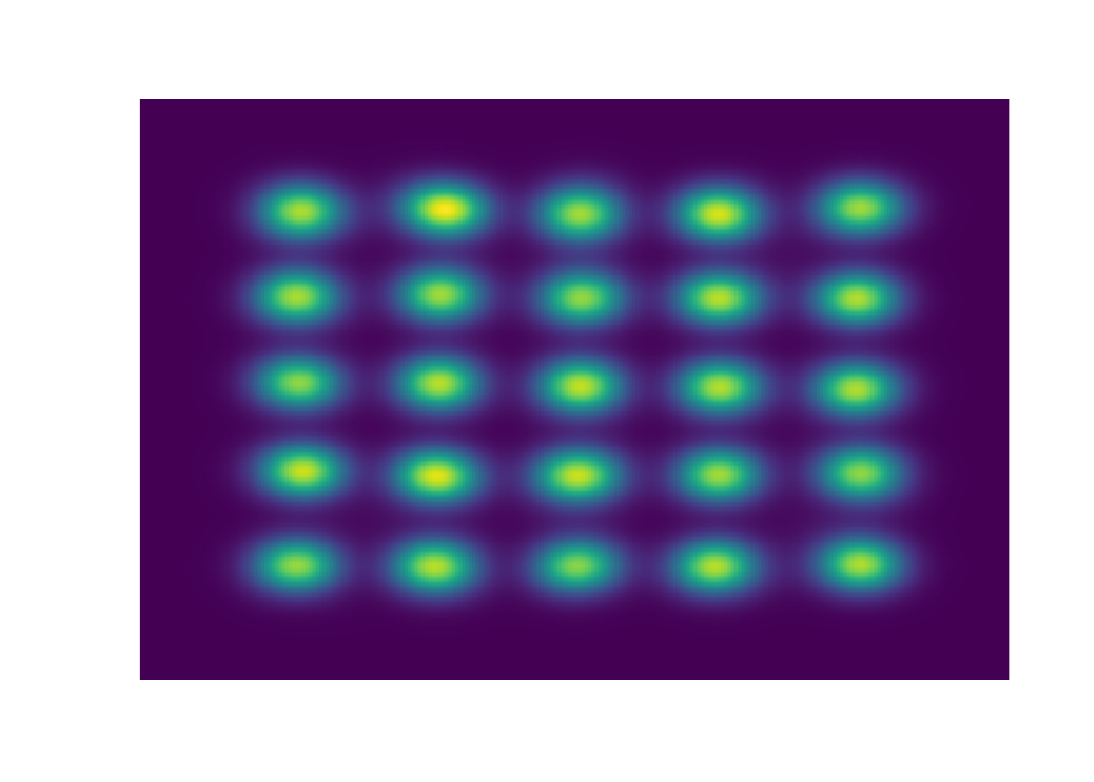}
        \caption{\small ISL + GAN, 25 modes dataset}
        \label{fig:covered_modes}
    \end{subfigure}
    \hfill
    \begin{subfigure}[t]{0.32\textwidth}
        \centering
        \includegraphics[width=\textwidth]{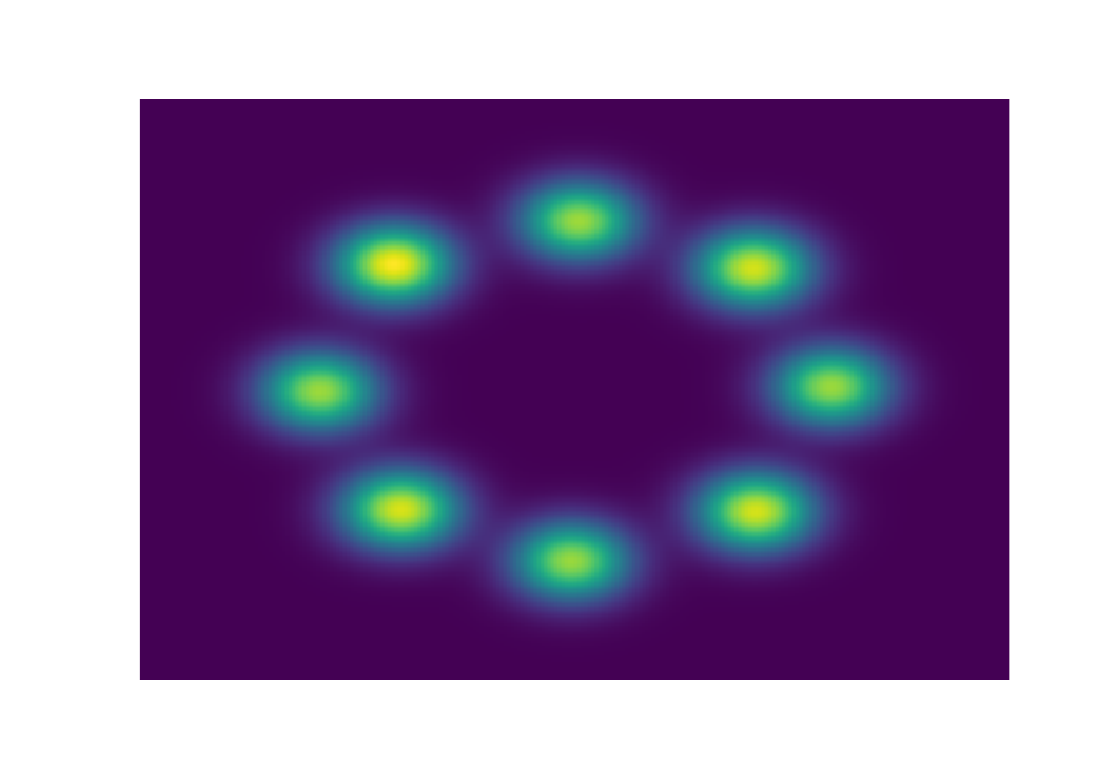}
        \caption{\small ISL + GAN, 8 modes dataset}
        \label{fig:high_quality_samples}
    \end{subfigure}
    \hfill
    \begin{subfigure}[t]{0.32\textwidth}
        \centering
        \includegraphics[width=\textwidth]{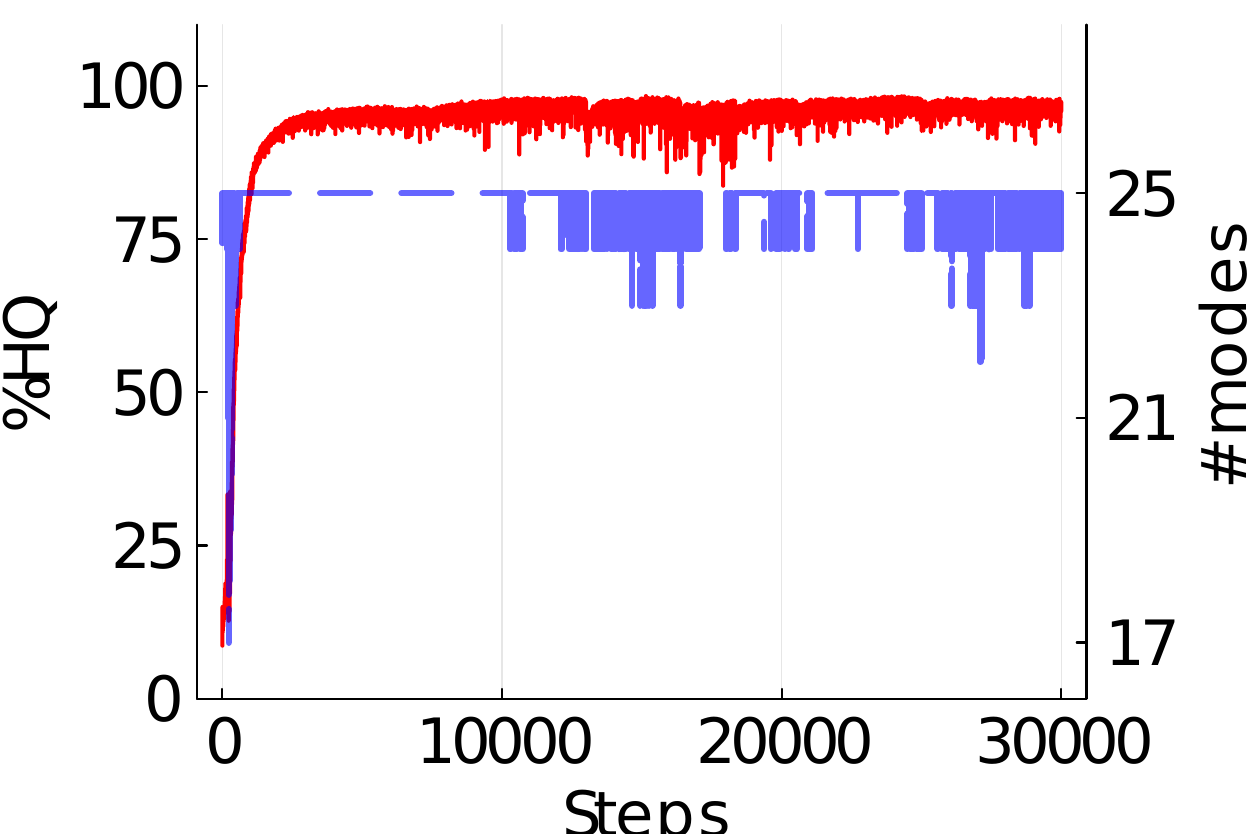}
        \caption{\small $25$ modes dataset, ISL + GAN}
        \label{fig:covered_modes_convergence_25}
    \end{subfigure}
    
    \caption{\small ISL + GAN on $25$ and $8$ modes datasets (plots (a) and (b), respectively) with \(K=10\), \(N=1000\), ISL learning rate of $10^{-3}$ for $250$ epochs, and GAN with a batch size of 128, 1:1 critic-generator ratio, and learning rate of $10^{-4}$ for $30000$ epochs. Plot (c) shows the coverage of the \texttt{2D-Grid} dataset, highlighting the number of modes covered (\texttt{\#modes}, blue) and high-quality samples (\texttt{\%HQ}, red) during GAN training after ISL pretraining.}
    \label{fig:combined_gan_comparison}
\end{figure}

\begin{table}[h]
  \centering
  \footnotesize
  \setlength{\tabcolsep}{1.4pt}
  \rowcolors{2}{gray!15}{white}
  \sisetup{
    detect-weight          = true,
    table-number-alignment = center,
    round-mode             = places,
    round-precision        = 2
  }

  % Scale the whole thing to 85% of the line width, centered
  \begin{adjustbox}{width=0.88\textwidth,center}
    % Left minipage
    \begin{minipage}[t]{0.48\textwidth}
      \centering
      \renewcommand{\arraystretch}{1.0}
      %\rowcolor{gray!30}
      \begin{tabular}{l *{3}{S[table-format=1.1]} *{3}{S[table-format=2.2]}}
      \rowcolor{gray!15}
        \toprule
        \multirow{2}{*}{\textbf{Critics}}
          & \multicolumn{3}{c}{\textbf{WGAN}}
          & \multicolumn{3}{c}{\textbf{ISL+WGAN}} \\
        \cmidrule(lr){2-4}\cmidrule(lr){5-7}
          & {\#modes} & {\%HQ} & {KL\,div}
          & {\#modes} & {\%HQ} & {KL\,div} \\
        \midrule
        1:1 & 3.7 & 40.99 & 33.99 & {\bfseries 8.0} & {\bfseries 97.74} & {\bfseries 0.51} \\
        1:2 & 7.0 &  0.07 & 59.58 & {\bfseries 8.0} & {\bfseries 98.43} & {\bfseries 0.47} \\
        1:3 & 8.0 &  0.63 & 57.05 & {\bfseries 8.0} & {\bfseries 97.39} & {\bfseries 0.55} \\
        1:4 & 8.0 &  0.44 & 59.18 & {\bfseries 8.0} & {\bfseries 98.07} & {\bfseries 0.78} \\
        1:5 & 8.0 &  0.19 & 56.83 & {\bfseries 8.0} & {\bfseries 98.24} & {\bfseries 0.72} \\
        \bottomrule
      \end{tabular}
      \caption{\small Comparison of WGAN and ISL + WGAN on the \texttt{2D-Ring} dataset.}\label{table:comparison_wgan_isl_combined}
    \end{minipage}%
    \hspace{1cm}
    % Right minipage
    \begin{minipage}[t]{0.48\textwidth}
      \centering
      \renewcommand{\arraystretch}{1.2}
      \begin{tabular}{l *{3}{S[table-format=1.1]} *{3}{S[table-format=2.2]}}
        \toprule
        \multirow{2}{*}{\textbf{Batch Size}}
          & \multicolumn{3}{c}{\textbf{WGAN}}
          & \multicolumn{3}{c}{\textbf{ISL+WGAN}} \\
        \cmidrule(lr){2-4}\cmidrule(lr){5-7}
          & {\#modes} & {\%HQ} & {KL\,div}
          & {\#modes} & {\%HQ} & {KL\,div} \\
        \midrule
        64  & 6.1 & 30.01 & 40.52 & {\bfseries 8.0} & {\bfseries 99.31} & {\bfseries 0.82} \\
        128 & 4.5 & 19.94 & 47.29 & {\bfseries 8.0} & {\bfseries 98.22} & {\bfseries 1.20} \\
        252 & 3.1 & 19.98 & 46.57 & {\bfseries 8.0} & {\bfseries 89.70} & {\bfseries 2.24} \\
        512 & 3.5 & 29.97 & 41.19 & {\bfseries 8.0} & {\bfseries 95.18} & {\bfseries 1.71} \\
        \bottomrule
      \end{tabular}
      \caption{\small Comparison of WGAN and ISL + WGAN with different batch sizes.}\label{table:comparison_wgan_batchsize}
    \end{minipage}
  \end{adjustbox}
\end{table}

\subsection{Addressing mode collapse on MNIST and FMNIST}

In this section, we assess the effectiveness of the ISL-slicing for generating high-dimensional images. We integrate the ISL-slicing loss into the deep convolutional GAN (DCGAN) generator’s training \citep{radford2015unsupervised} and evaluate its performance on the MNIST and Fashion-MNIST benchmarks. Following \cite{sajjadi2018assessing}, we characterize the quality and diversity of generated images by reporting precision (as a proxy for fidelity) and recall (as a proxy for diversity). We train each model for 40 epochs with a batch size of 128. For the pretrained variants, we first ran 20 epochs under the sliced dual-ISL objective (using 20 random projections) and then continued with 40 epochs of standard DCGAN training.

In Table \ref{table:real word data 1}, we compare our results to those of other implicit generative models. On MNIST, our straightforward ISL-based model matches the recall of five-discriminator GANs \citep{durugkar2016generative, choi2022mcl} while using only one-third the number of parameters. Moreover, pretraining the generator with sliced-ISL and then fine-tuning under the standard adversarial loss delivers state-of-the-art precision and recall.

On Fashion-MNIST, our model almost ties the recall of MCL-GAN’s and, although the recall of GMAN is higher, we outperform it in precision. This shows that despite relying on a simpler architecture, ISL can achieve competitive recall and precision across diverse image-generation benchmarks.

\begin{table}[ht]
  \centering
  \small
  \label{tab:quant-results}
  \rowcolors{2}{gray!15}{white}
  \setlength{\tabcolsep}{5pt}
  \begin{tabular}{
      l % Dataset
      l % Method
      S[table-format=2.2(2)] % F_{1/8}
      S[table-format=2.2(2)] % F_{8}
      S[table-format=2.2(2)] % Precision
      S[table-format=2.2(2)] % Recall
    }
    \toprule
    \multirow{2}{*}{\textbf{Dataset}}
      & \multirow{2}{*}{\textbf{Method}}
      & \multicolumn{2}{c}{\textbf{F-score}}
      & \multicolumn{2}{c}{\textbf{P\&R}} \\
    \cmidrule(lr){3-4} \cmidrule(lr){5-6}
      & 
      & {\(F_{1/8}\!\uparrow\)} 
      & {\(F_{8}\!\uparrow\)} 
      & {Precision\(\uparrow\)} 
      & {Recall\(\uparrow\)} \\
    \midrule
    \textbf{MNIST}
      & ISL (m=20)
      & ${85.00 \pm 0.32}$   & ${95.17 \pm 1.76}$
      & ${84.85 \pm 1.20}$   & ${95.35 \pm 1.39}$ \\
      & ISL (m=50)
      & ${85.69 \pm 0.29}$   & ${95.81 \pm 1.24}$
      & ${85.55 \pm 1.11}$   & ${96.23 \pm 1.98}$ \\
      & DCGAN
      & ${93.58 \pm 0.64}$   & ${75.66 \pm 1.46}$
      & ${93.85 \pm 1.45}$   & ${75.43 \pm 2.56}$ \\
      & ISL + DCGAN
      & ${93.58 \pm 0.84}$   & ${95.82 \pm 1.61}$
      & ${94.03 \pm 1.82}$   & ${96.68 \pm 2.42}$ \\
      & GMAN
      & ${97.60 \pm 0.70}$   & ${96.81 \pm 1.71}$
      & ${97.60 \pm 1.82}$   & ${96.80 \pm 2.42}$ \\
      & MCL-GAN
      & $\mathbf{97.71 \pm 0.19}$ & $\mathbf{98.49 \pm 1.57}$
      & $\mathbf{97.70 \pm 1.33}$ & $\mathbf{98.50 \pm 2.15}$ \\
    \addlinespace
    \addlinespace
    \textbf{FMNIST}
      & ISL (m=20)
      & ${81.84 \pm 0.11}$   & ${91.08 \pm 1.83}$
      & ${81.48 \pm 1.43}$   & ${91.49 \pm 2.15}$ \\
      & ISL (m=50)
      & ${83.90 \pm 0.09}$   & ${91.18 \pm 1.57}$
      & ${84.08 \pm 1.31}$   & ${92.92 \pm 1.23}$ \\
      & DCGAN
      & ${86.14 \pm 0.11}$   & ${88.92 \pm 1.51}$
      & ${86.60 \pm 1.58}$   & ${88.97 \pm 1.33}$ \\
      & ISL + DCGAN
      & ${91.43 \pm 0.19}$   & ${91.87 \pm 1.57}$
      & ${91.88 \pm 1.35}$   & ${92.42 \pm 1.47}$ \\
      & GMAN
      & ${90.97 \pm 0.09}$   & $\mathbf{95.43 \pm 1.12}$
      & ${90.90 \pm 1.33}$   & $\mathbf{95.50 \pm 2.25}$ \\
      & MCL-GAN
      & $\mathbf{97.62 \pm 0.09}$ & ${92.97 \pm 1.28}$
      & $\mathbf{97.70 \pm 1.33}$ & ${92.90 \pm 2.31}$ \\
    \bottomrule
  \end{tabular}\vspace{0.2cm}
    \caption{\small Quantitative results on MNIST and Fashion-MNIST (28×28), reporting $F_{1/8}$ and $F_{8}$ ($\beta$-weighted harmonic means of precision and recall), precision, and recall (mean±std, \%). We compare ISL (m=20, 50), standard DCGAN (with and without ISL pretraining), GMAN, and MCL-GAN. Bold entries mark the best score per column; higher is better.} \label{table:real word data 1}
\end{table}

Figure \ref{fig:digit_distribution} compares the class-frequency distributions produced by the sliced ISL model (trained for 40 epochs with $m=50$ random projections) against those generated by a conventional DCGAN. Our model generates all ten digit classes with nearly uniform frequency whereas the DCGAN displays marked class imbalances. To estimate these frequencies, we sampled $10,000$ images from each model and labeled them using a pretrained digit classifier. Under a one‐sample Kolmogorov–Smirnov test for uniformity on $10,000$ samples, the sliced-ISL model yielded a p-value $p = 0.062$, whereas the DCGAN produced $p = 0.742$—reflecting a much closer match to the ideal uniform distribution.

\begin{figure}[ht]
  \centering
  \resizebox{0.7\textwidth}{!}{%
    \begin{tikzpicture}
      \begin{axis}[
        ybar,
        bar width=8pt,
        xmin=-0.5,            % ← more space before the 0-bin
        xmax=9.5,           % ← same right padding as before
        ymin=0, ymax=45,
        grid=major,
        grid style={dashed,gray!30},
        xtick={0,1,2,3,4,5,6,7,8,9},
        xlabel={Digit},
        ylabel={Percentage (\%)},
        tick align=inside,
        nodes near coords,
        every node near coord/.append style={font=\scriptsize, yshift=1pt},
        cycle list={{blue!60!black, fill=blue!30!white},
                    {orange!80!black, fill=orange!40!white}},
        width=0.8\textwidth,
        height=0.5\textwidth,
        legend style={
          at={(0.95,0.95)},
          anchor=north east,
          draw=none,
          font=\small,
          /tikz/every odd column/.append style={column sep=5pt}
        },
        legend cell align=left
      ]
        % DCGAN bars (shift left by 3pt)
        \addplot+[bar shift=-3pt] coordinates {
          (0,6.7)  (1,40.0) (2,2.2)  (3,3.7)  (4,7.7)
          (5,2.1)  (6,3.3)  (7,14.7) (8,2.7)  (9,16.9)
        };
        % ISL+DCGAN bars (shift right by 3pt)
        \addplot+[bar shift=3pt] coordinates {
          (0,10.5) (1,12.5) (2,7.3)  (3,12.4) (4,10.3)
          (5,8.0)  (6,7.6)  (7,13.2) (8,7.4)  (9,10.9)
        };
        \draw[dashed, opacity=0.4] (rel axis cs:0,0.222) -- (rel axis cs:1,0.222);
        \legend{DCGAN, ISL+DCGAN}
      \end{axis}
    \end{tikzpicture}%
  }
  \caption{\small Digit-frequency comparison for DCGAN vs.\ ISL+DCGAN. The dashed line marks the ideal 10\% for each class.}
  \label{fig:digit_distribution}
\end{figure}

Figure \ref{fig:ablation studies on p and r metrics vs m} shows the precision and recall of the ISL model on MNIST in an ablation study with $m=20$ and $m=100$ random projections. With just $m=20$ projections, the model delivers robust performance; increasing to $m=100$ yields only marginal improvements in these metrics. Monte Carlo sampling error for the sliced-ISL estimator scales as $1/\sqrt{m}$, so initial increases in $m$ deliver significant gains but further increases produce only marginal improvements, explaining the observed performance plateau.

\begin{figure}[htb]
  \centering
  \begin{subfigure}[t]{0.48\textwidth}
    \centering
    \includegraphics[width=\linewidth, height=5cm]{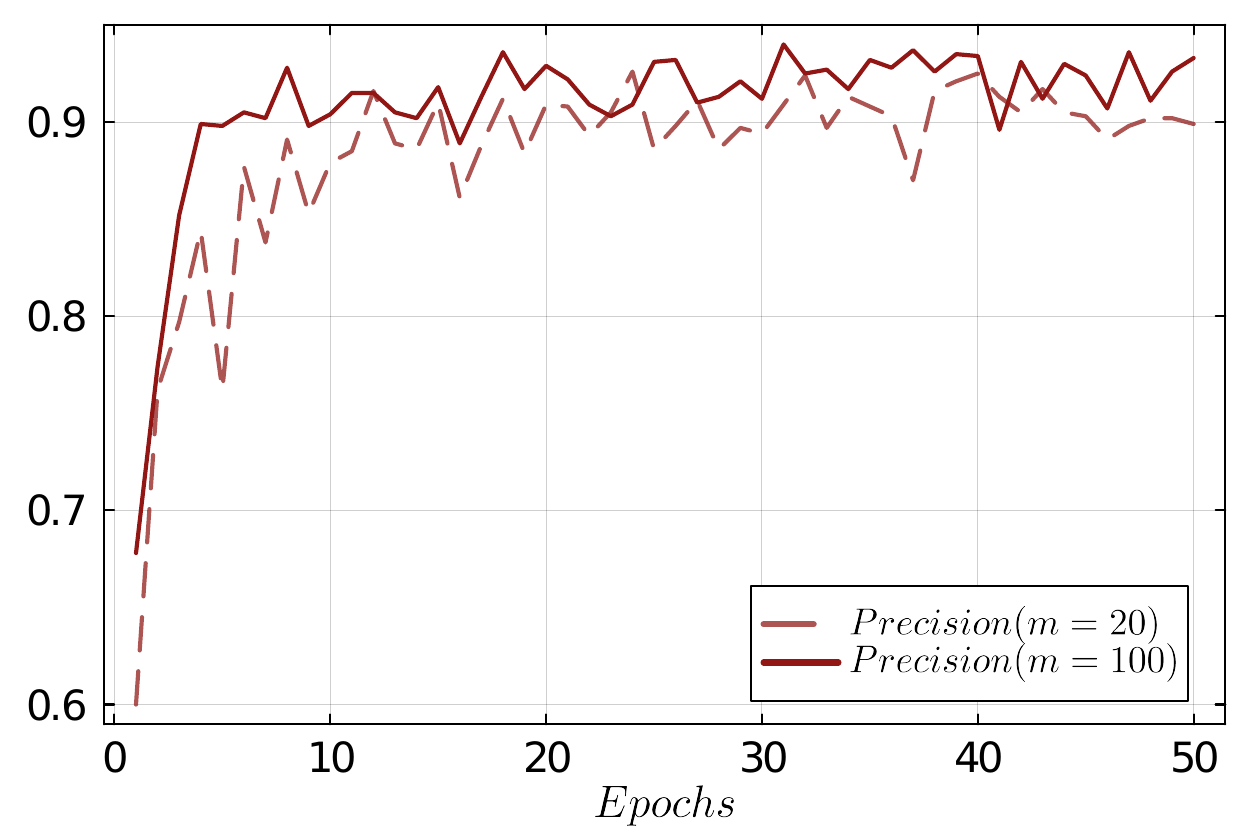}
    \label{fig:precision_m20_m100_MNIST}
  \end{subfigure}
  \hfill
  \begin{subfigure}[t]{0.48\textwidth}
    \centering
    \includegraphics[width=\linewidth, height=5cm]{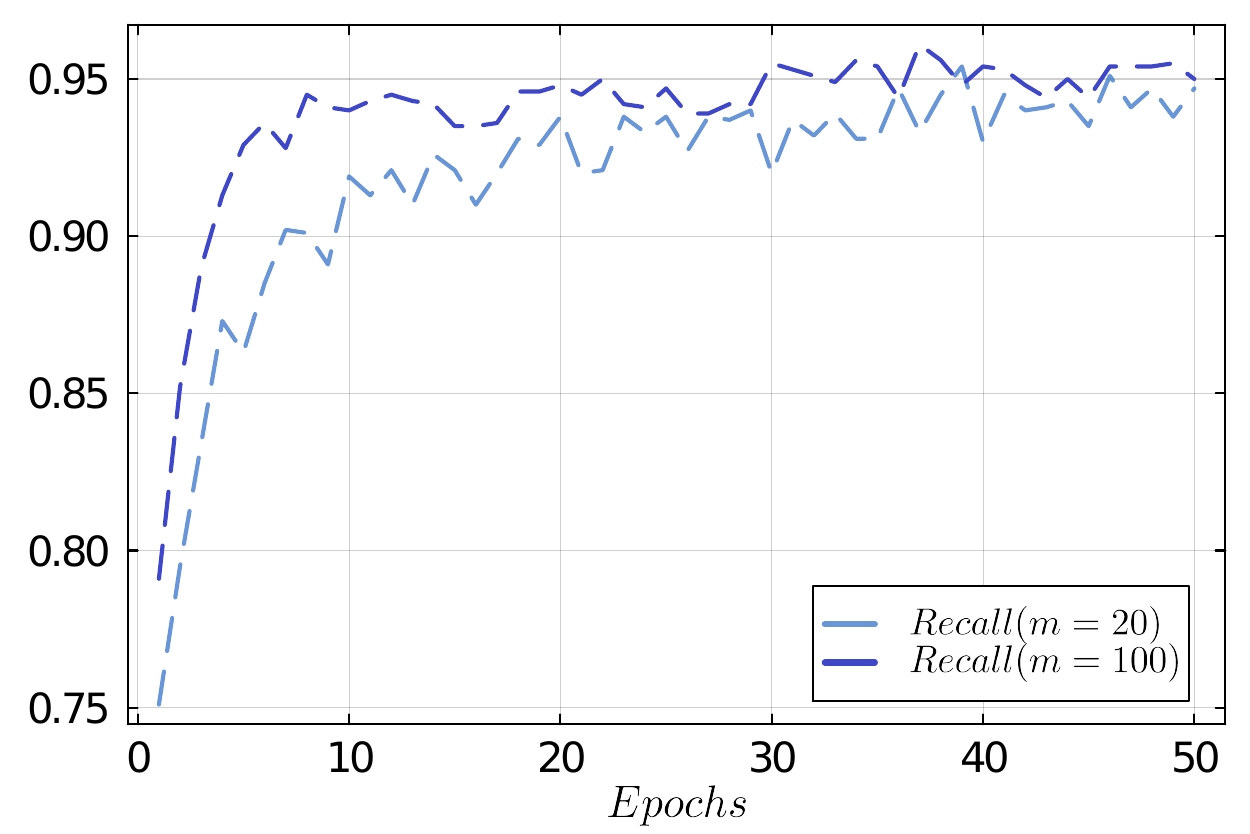}
    %\caption{}
    \label{fig:recall_m20_m100_MNIST}
  \end{subfigure}

  \vspace{0.5ex}
  \caption{\small Precision (left) and recall (right) on the MNIST test set after 50 training epochs, using $m=20$ and $m=100$ random projections.}
  \label{fig:ablation studies on p and r metrics vs m}
\end{figure}

Finally, Figure \ref{fig:both_figures} compares the sample diversity of a standard DCGAN Figure \ref{fig:figure1} against a DCGAN pretrained with our ISL method Figure \ref{fig:figure2}. In Figure \ref{fig:figure1}, the red boxes highlight multiple occurrences of the digit “1,” a clear sign of mode collapse where the generator repeatedly produces the same class. After pretraining with ISL (Figure \ref{fig:figure2}), the generator exhibits far fewer repeated “1”s, instead producing more diverse handwritten digits. This visual evidence demonstrates that ISL pretraining effectively mitigates mode collapse and encourages the generator to cover a broader range of the MNIST data distribution.  

\begin{figure}[ht]
  \centering
  \resizebox{1.0\textwidth}{!}{%
  \begin{subfigure}[b]{0.40\textwidth}
    \begin{overpic}[width=\textwidth]{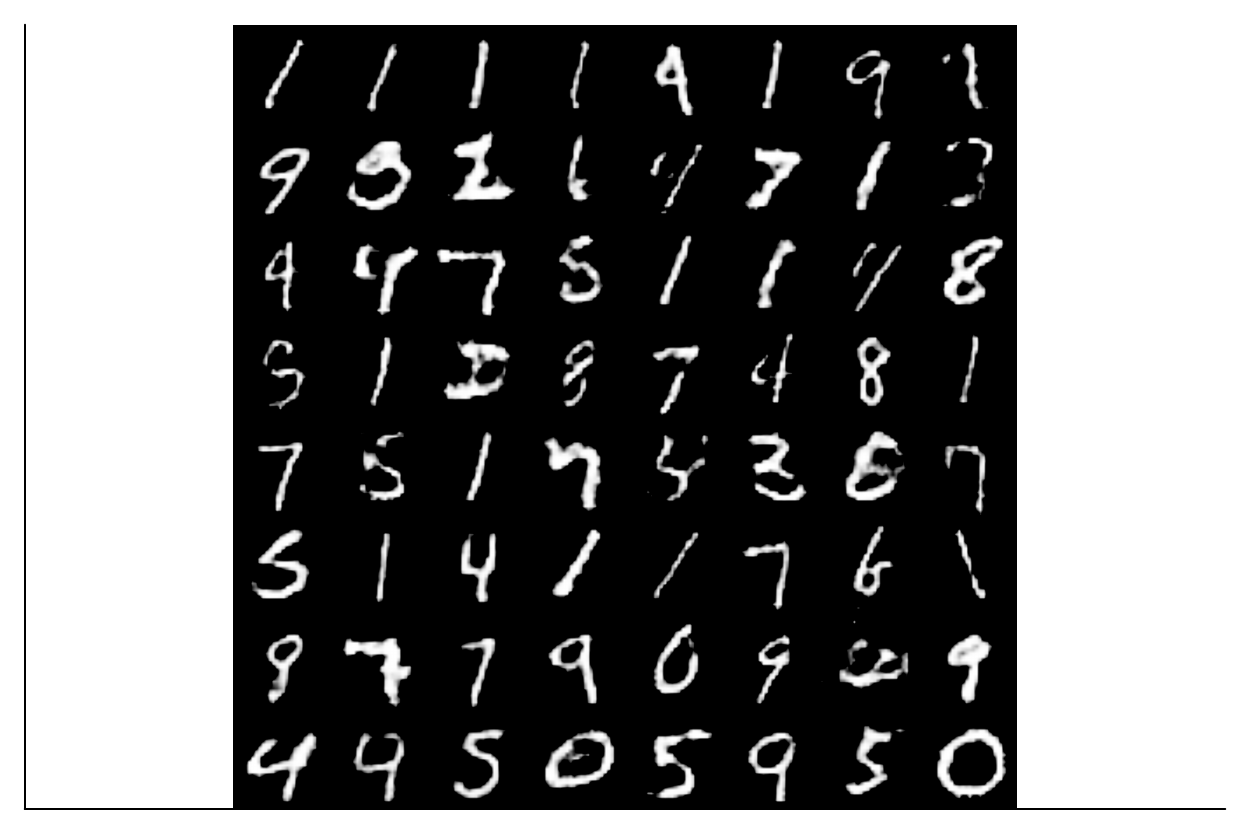}
    \linethickness{2pt}
      % \put(<x%>,<y%>){<LaTeX stuff>}
      % e.g. a red rectangle at 10% from left, 85% from bottom:
      \put(0,88){\color{red}\framebox(50,12){}}
      %\put(12,88){\color{red}\framebox(12,12){}}
      %\put(24,88){\color{red}\framebox(12,12){}}
      %\put(36,88){\color{red}\framebox(12,12){}}
      \put(63,88){\color{red}\framebox(12,12){}}
      \put(76,75){\color{red}\framebox(12,12){}}
      \put(50,62){\color{red}\framebox(25,12){}}
      %\put(63,62){\color{red}\framebox(12,12){}}
      \put(11,51){\color{red}\framebox(12,12){}}
      \put(88,50){\color{red}\framebox(12,12){}}
      \put(24,38){\color{red}\framebox(12,12){}}
      \put(11,25){\color{red}\framebox(12,12){}}
      \put(37,25){\color{red}\framebox(24,12){}}
      %\put(48,26){\color{red}\framebox(12,12){}}
      \put(88,26){\color{red}\framebox(12,12){}}
      % add one \put per “1” you want to highlight
    \end{overpic}
    \caption{DCGAN}
    \label{fig:figure1}
  \end{subfigure}
  \hfill
  \begin{subfigure}[b]{0.40\textwidth}
    \begin{overpic}[width=\textwidth]{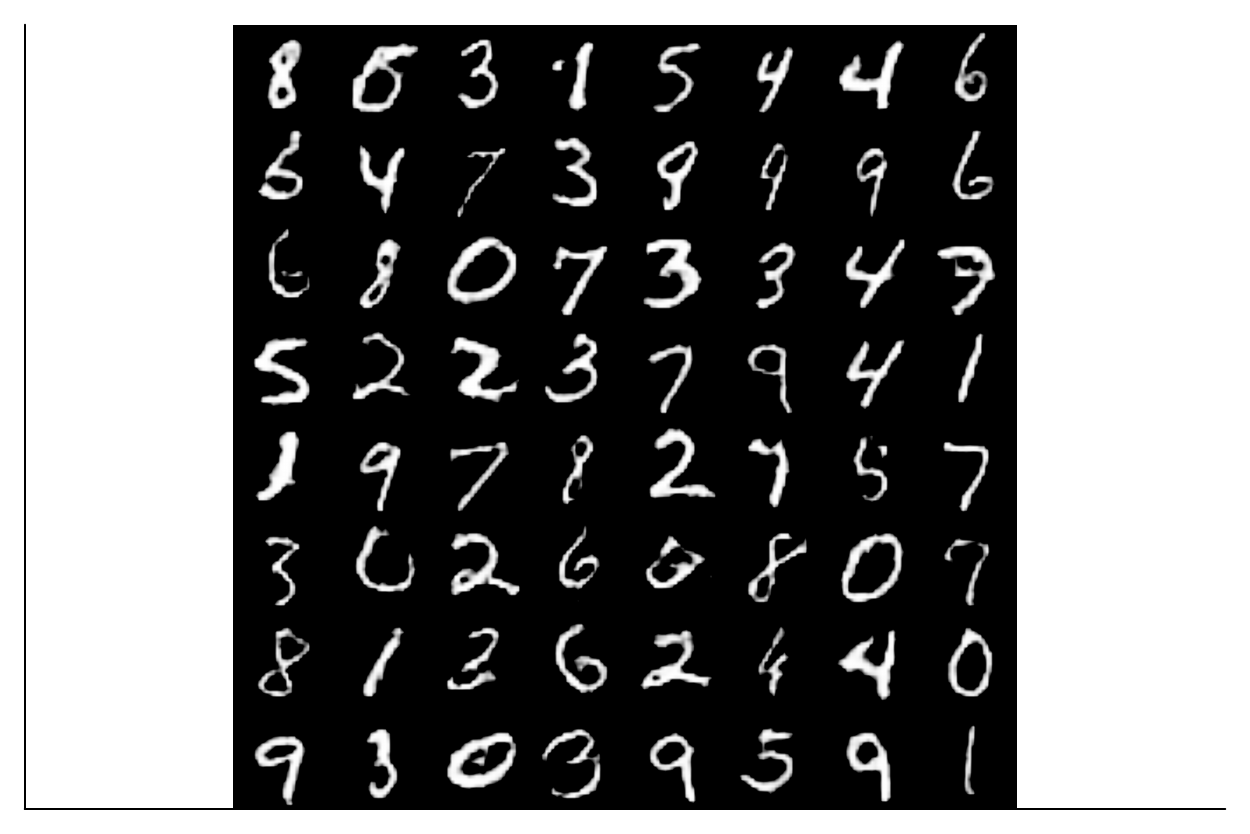}
     \linethickness{2pt}
      % highlight a “1” at e.g. (12%,88%)
      \put(88,49){\color{red}\framebox(12,12){}}
      \put(0,37){\color{red}\framebox(12,12){}}
      \put(12,13){\color{red}\framebox(12,12){}}
      \put(88,1){\color{red}\framebox(12,12){}}
    \end{overpic}
    \caption{DCGAN pretrained with ISL}
    \label{fig:figure2}
  \end{subfigure}
    }
  \caption{\small Generated samples from the MNIST dataset: plot (a) shows numbers generated by a DCGAN, while the plot (b) shows numbers generated by a DCGAN pretrained with ISL. Red squares around the repeated “1”’s.}
  \label{fig:both_figures}
\end{figure}

\subsection{Improving diversity on CelebA with ISL pretraining}

In this section, we evaluate ISL pretraining on the CelebA dataset under a unified 50-epoch regime. We first pretrain a DCGAN with ISL for 10 epochs (learning rate $10^{2}$, $K = 40$), then fine-tune it adversarially for 40 epochs (batch size 128, learning rate $2\times10^{-4}$). For a fair comparison, we trained every baseline using 50 epochs, batch size 128, and a grid search over learning rates $\{1\times10^{-2},\,1\times10^{-3},\,2\times10^{-4},\,1\times10^{-4}\}$. The baselines we evaluated were
\begin{itemize}
    \item Generative multi-adversarial networks (GMAN) \citep{durugkar2016generative}: uses multiple discriminators to improve mode coverage.
    \item Least-aquares DCGAN (LS-DCGAN) \citep{mao2017least}: swaps the usual cross-entropy loss for a least-squares objective.
    \item Wasserstein GAN with gradient penalty (W-DCGAN-GP) \citep{gulrajani2017improved}: enforces the 1-Lipschitz constraint via a gradient penalty.
    \item Spectral normalization DCGAN (SN-DCGAN) \citep{miyato2018spectral}: stabilizes training by normalizing discriminator weights.
    \item Dynamic GAN (DynGAN) \citep{luo2024dyngan}: detects collapsed modes during training and uses conditional generators to recover them.
\end{itemize}
From each model’s single run, we report the best recall, Fréchet Inception Distance (FID), and precision scores. Full details are provided in Table \ref{tab:celeba-results}.

%For a fair comparison, all baselines—generative multi-adversarial networks (GMAN) \citep{durugkar2016generative}, which employ multiple discriminators to improve mode coverage; least squares DCGAN (LS-DCGAN) \citep{mao2017least}, which replaces cross-entropy with a least-squares loss; Wasserstein GAN with gradient penalty (W-DCGAN-GP), which enforces the 1-Lipschitz constraint via a gradient penalty; spectral normalization DCGAN (SN-DCGAN) \citep{miyato2018spectral}, which normalizes discriminator weights; and dynamic generative adversarial network (DynGAN) \citep{luo2024dyngan}, which detects collapsed modes and trains conditional generators to recover them—were also trained for 50 epochs, with a batch size of 128 and we performed a grid search over learning rates $\{1\times10^{-2},\,1\times10^{-3},\,2\times10^{-4},\,1\times10^{-4}\}$. We report the single‐run results—selecting the best recall, Fréchet Inception Distance (FID) and precision scores—from each model evaluation. Full details are provided in Table \ref{tab:celeba-results}.

Remarkably, our ISL-pretrained DCGAN achieves the highest recall of all models, despite its relative simplicity. It even outperforms multi-discriminator frameworks such as GMAN—while using three times fewer parameters—and specialized techniques like DynGAN, which introduce semi-supervised overhead to recover collapsed modes. Moreover, in terms of both FID and precision, the ISL-pretrained network remains highly competitive, despite fitted only on 40 adversarial training epochs (10 fewer than every baseline). These results underscore the power of ISL pretraining to capture diverse image modes without extra architectural complexity.

\begin{table}[ht]
  \centering
  \small
  \rowcolors{2}{gray!15}{white}
  \setlength{\tabcolsep}{6pt}      % adjust column padding
  \renewcommand{\arraystretch}{1.2}% slightly taller rows
  \begin{tabular}{
      >{\raggedright\arraybackslash}p{1.5cm}  % Dataset
      >{\raggedright\arraybackslash}p{6.0cm}  % Method
      S[table-format=2.2(2)]                  % FID
      S[table-format=2.2(2)]                  % Precision
      S[table-format=2.2(2)]                  % Recall
    }
    \toprule
    \textbf{Dataset}
      & \textbf{Method}
      & {\(FID\downarrow\)}
      & {\(Precision\uparrow\)}
      & {\(Recall\uparrow\)} \\
    \midrule
    \textbf{CelebA}
      & ISL + DCGAN
      & {$31.64$}
      & {$0.887$}
      & {$\mathbf{0.954}$} \\
      & DCGAN \citep{radford2015unsupervised}
      & {$30.93$}
      & {$0.839$}
      & {$0.834$} \\
      & LS-DCGAN \citep{mao2017least}
      & {$\mathbf{22.99}$}
      & {$\mathbf{0.997}$}
      & {$0.324$} \\
      & W-DCGAN-GP \citep{gulrajani2017improved}
      & {$80.30$}
      & {$0.982$}
      & {$0.374$} \\
      & SN-DCGAN \citep{miyato2018spectral}
      & {$32.94$}
      & {$0.974$}
      & {$0.887$} \\
      & DynGAN \citep{luo2024dyngan}
      & {$48.06$}
      & {$0.955$}
      & {$0.718$} \\
      & GMAN \citep{durugkar2016generative}
      & {$31.66$}
      & {$0.873$}
      & {$0.888$} \\
    \bottomrule
  \end{tabular}
  \caption{\small Performance comparison on CelebA. We evaluate ISL‐pretrained DCGAN (ISL+DCGAN), standard DCGAN, Least Squares DCGAN (LS-DCGAN), Spectral Normalization DCGAN (SN-DCGAN), DynGAN, and Generative Multi-Adversarial Networks (GMAN) using Fréchet Inception Distance (FID↓), precision (↑), and recall (↑). Lower FID and higher precision/recall indicate better sample quality and diversity.} \label{tab:celeba-results}
\end{table}

Our two‐stage training pipeline (first pretraining the generator by minimizing ISL, then fine‐tuning it with standard adversarial updates) parallels the strategy of the sliced‐Wasserstein generator (SWG) from \citet{nadjahi2021fast}. In SWG, the authors begin by minimizing a Monte Carlo estimate of the sliced‐Wasserstein distance in feature space before introducing a discriminator (see \citet[Sec. 3.2]{nadjahi2021fast}). They motivate this design by noting that naive slicing in very high dimensions demands an impractically large number of random projections; adding a learned discriminator effectively reduces the problem’s dimensionality to the most discriminative subspace, thereby cutting the projection requirement and restoring strong FID performance. We build on exactly this insight: by replacing the Monte Carlo sliced‐Wasserstein objective with ISL during the initial phase, we still “warm up” the generator into a space close to the real‐data manifold, but with a discrepancy measure that more directly reduces mode collapse. Under identical hyperparameter settings and evaluation protocols, ISL pretraining achieves substantially higher recall on every benchmark, while matching the precision and FID of SWG. These results confirm that ISL is a more effective statistical divergence than sliced‐Wasserstein for preserving diversity without sacrificing sample fidelity. Table \ref{tab:celeba-results_sw} provides the full comparison, highlighting the superior recall of ISL alongside equivalent precision and FID.

\begin{table}[ht]
  \centering
  \scriptsize
  \rowcolors{2}{gray!15}{white}
  \setlength{\tabcolsep}{5.0pt}
  \renewcommand{\arraystretch}{1.4}
  \begin{tabular}{
      c
      S[table-format=2.2,table-number-alignment=center]
      S[table-format=1.2]
      S[table-format=1.2]
      S[table-format=2.2,table-number-alignment=center]
      S[table-format=1.2]
      S[table-format=1.2]
      S[table-format=2.2,table-number-alignment=center]
      S[table-format=1.2]
      S[table-format=1.2]
    }
    \toprule
    \multirow{2}{*}{\(\mathbf{m}\)}
      & \multicolumn{3}{c}{\textbf{ISL + DCGAN}}
      & \multicolumn{3}{c}{\textbf{SWG}}
      & \multicolumn{3}{c}{\textbf{SWG-2}} \\
    \cmidrule(lr){2-4} \cmidrule(lr){5-7} \cmidrule(lr){8-10}
      & {\(FID\downarrow\)}
      & {\(Precision\uparrow\)}
      & {\(Recall\uparrow\)}
      & {\(FID\downarrow\)}
      & {\(Precision\uparrow\)}
      & {\(Recall\uparrow\)}
      & {\(FID\downarrow\)}
      & {\(Precision\uparrow\)}
      & {\(Recall\uparrow\)} \\
    \midrule
    100      & $\mathbf{31.64}$ & $\mathbf{0.89}$ & $\mathbf{0.95}$ & 32.82 & 1.00 & 0.16 & 34.57 & 1.00 & 0.14 \\
    1\,000   & $\mathbf{30.41}$ & $\mathbf{0.91}$ & $\mathbf{0.94}$ & 37.27 & 0.98 & 0.68 & 35.83 & 1.00 & 0.56 \\
    10\,000  & $\mathbf{30.33}$ & $\mathbf{0.92}$ & $\mathbf{0.93}$ & 37.29 & 0.98 & 0.63 & 37.20 & 0.99 & 0.74 \\
    \bottomrule
  \end{tabular}
  \caption{\small Comparison for CelebA between ISL + DCGAN, the Sliced-Wasserstein Generator (SWG), and its Sliced-Wasserstein-2 variant (SWG-2) as a function of the number of random projections \(m\). Lower FID and higher precision/recall indicate better sample quality and diversity.}
  \label{tab:celeba-results_sw}
\end{table}

\section{Time series prediction}  \label{section Invariant Statistical Loss for Time Series Prediction}

Our approach applies to both univariate and multivariate time series. 
%For univariate cases, we use a two-layer neural network with input noise to generate a sequence of statistics, as described in Section \ref{preliminaries}, allowing direct application of ISL. More details follow. For multivariate time series, we can use either marginal distributions or random projections to adjust empirical statistics. 
Hereafter we briefly describe the methodology for both types for series and conclude the section with experiments comparing these techniques on various datasets.

%So far we have focused our discussion on 1D random variables, but as indicated in \cite{de2024training} our approach is amenable to be used on (scalar and multivariate) time series.

% While so far we have focused our discussion on 1D random variables, our method can be extended to scalar temporal random processes.

\subsection{Univariate time series prediction}

Let \( y[0], y[1], \ldots, y[T] \) represent a realization of a discrete-time random process. We assume that the process begins at \( t=0 \), \( Y[t] \) denotes the r.v. at time \( t \), and \( p_t = p(Y[t] | Y[0], \ldots, Y[t-1]) \) is the unknown conditional distribution. Given the sequence \( y[0], y[1], \dots, y[t-1] \), we aim to train an autoregressive conditional implicit generator network, \( g_\theta(z_t, \mathbf{h}[t]) \), to approximate \( p_t \). Here, \( z_t \) is a sequence of i.i.d. Gaussian r.v.s, and \( \mathbf{h}[t] \) is an embedding of the sequence \( y[0], \ldots, y[t-1] \) via a NN, such as a simple RNN connected to the generator (see Figure \ref{fig:my_tikz}). At time \( t \), the observation \( y[t] \) is fed into the RNN, compressing the history into a hidden state \( \mathbf{h}[t] \). The generator \( g_\theta(z, \mathbf{h}[t]) \) uses this hidden state and noise \( z \) to predict \( \tilde{y}[t+1] \). During testing, \( \tilde{y}[t+1] \) is fed back into the RNN for forecasting.

\tikzstyle{ts neural network style}=[rounded corners=4]
\tikzset{
    rnn cell/.style={rectangle, rounded corners=5, fill=blue!20, minimum width=1.5cm, minimum height=1.2cm}
}
\tikzset{
    gen cell/.style={rectangle, rounded corners=5, fill=red!20, minimum width=1.5cm, minimum height=1.2cm}
}
\begin{figure}[h] \label{architecture ts}
    \centering
    %\resizebox{0.7\textwidth}{!}{
        \begin{tikzpicture}[ts neural network style]
            % Draw RNN cell
            %\draw (0.8,3) rectangle (1.5,4);
            \node[rnn cell] at (1.4,3.5) {RNN};
            %\draw (1.1,2.0) rectangle (1.5,1.3);
            \node[gen cell] at (1.4,1.5) {$g_\theta$};
        
            %\draw (3.4,3) rectangle (4.1,4);
            \node[rnn cell] at (3.9,3.5) {RNN};
            %\draw (3.2,2.0) rectangle (3.6,1.3);
            \node[gen cell] at (3.9,1.5) {$g_\theta$};
        
            %\draw (6.3,3) rectangle (5.8,4);
            \node[rnn cell] at (6.5,3.5) {RNN};
            %\draw (6.3,2.0) rectangle (5.8,1.3);
            \node[gen cell] at (6.5,1.5) {$g_\theta$};
            %\node[gen cell] at (5.1,3.5) {GEN};
            
            % Input arrow for RNN
            \draw[->] (1.4,4.7) -- (1.4,4.2) node[midway, right] {$y[t]$};
            \draw[->] (1.4,2.8) -- (1.4,2.2) node[midway, right] {\footnotesize $\mathbf{h}[t]$};
            \draw[->] (1.4,0.8) -- (1.4,0.4) node[midway, right] {$\tilde{y}[t+1]$};
        
            \draw[->] (4.0,4.7) -- (4.0,4.2) node[midway, right] {$y[t+1]$};
            %\draw[->] (3,2.9) -- (3,2.4) node[midway, right] {$s_{t+1}$};
            %\draw[->] (3,1.2) -- (3,0.6) node[midway, right] {$\tilde{x}_{t+2}$};
        
            \draw[->] (6.5,4.7) -- (6.5,4.2) node[midway, right] {$y[t+2]$};
            \draw[->] (3.9,2.8) -- (3.9,2.2) node[midway, right] {\footnotesize $\mathbf{h}[t+1]$};
            \draw[->] (3.9,0.8) -- (3.9,0.4) node[midway, right] {$\tilde{y}[t+2]$};
        
            \draw[->] (6.5,2.8) -- (6.5,2.2) node[midway, right] {\footnotesize $\mathbf{h}[t+2]$};
            \draw[->] (6.5,0.8) -- (6.5,0.4) node[midway, right] {$\tilde{y}[t+3]$};

            % RNN state arrows
            \draw[->] (0.0,3.5) -- (0.6,3.5) node[midway, above] {\hspace{-4mm}\footnotesize $\mathbf{h}[t-1]$};
            \draw[->] (2.3,3.5) -- (3.0,3.5) node[midway, above] {\footnotesize $\mathbf{h}[t]$};
            \draw[->] (0.2,1.2) -- (0.6,1.2) node[midway, above] {\hspace{-1mm}\footnotesize $z_{t}$};
        
            \draw[->] (4.8,3.5) -- (5.6,3.5) node[midway, above] {\footnotesize $\mathbf{h}[t+1]$};
            \draw[->] (2.7,1.2) -- (3.1,1.2) node[midway, above] {\footnotesize $z_{t+1}$};
        
            \draw[->] (5.3,1.2) -- (5.7,1.2) node[midway, above] {\hspace{-1mm}\footnotesize $z_{t+2}$};
        
            \draw[->] (7.3,3.5) -- (8.0,3.5) node[midway, above] {\hspace{3mm}\footnotesize $\mathbf{h}[t+2]$};
        \end{tikzpicture}
%}
\caption{\small Conditional implicit generative model for time-series prediction.}\label{architecture ts}
\label{fig:my_tikz}
\end{figure}

As noted in \cite{de2024training}, all results from previous sections remain valid in this temporal setup. The sequence of observations \(\mathbf{y} = [y[0], y[1], \ldots, y[T]]\) is used to construct a sequence of statistics \(a_K[0], a_K[1], \ldots, a_K[T]\), whose empirical distribution should be approximately uniform if \(\tilde{p}_{t, \theta}\approx p_t\) for \(t=0, \ldots, T\), where $\tilde{p}_{t, \theta}$ is the pdf of the output r.v. $\tilde{y}[t]=g_{\theta}(z_{t}, \mathbf{h}[t])$. To build the ISL, we follow the same procedure described in Section \ref{preliminaries} to obtain a differentiable surrogate.

\subsection{Multivariate time series prediction}

For a multivariate time series the data has the form $\mathbf{y}=[\mathbf{y}[0], \ldots, \mathbf{y}[T]]$ where each element of the series is an $N$-dimensional vector, i.e., $\mathbf{y}[t]=[y_1[t], \ldots, y_N[t]]^{\top}$. A NN $g_{\theta}(\cdot, \mathbf{h}[t])$ can be trained using the same scheme as in Figure \ref{architecture ts} and an ISL loss function constructed from the $N$ marginals $p(y_{i}[t]|\mathbf{y}[0],\ldots, \mathbf{y}[t-1])$, as suggested in \cite{de2024training}, or using the ISL-slicing (Algorithm \ref{ISL Slicing Algorithm pseudocode}) method introduced in Section \ref{section Random Projections}.

\subsection{Experiments}

This subsection presents the results of multivariate long-sequence time-series forecasting on the ETTh2, ETTm1, and ETTm2 datasets using Autoformer \citep{wu2021autoformer}, Informer\citep{zhou2021informer}, LogTrans\citep{nie2022logtrans} and various ISL-based methods. The Electricity Transformer Temperature (ETT) dataset comprises four distinct subsets: two with hourly resolutions (ETTh) and two with 15-minute resolutions (ETTm), each containing seven features related to oil and load characteristics of electricity transformers collected between July 2016 and July 2018. For more details on the methods and datasets, refer to \cite{zeng2023transformers}. The evaluation spans multiple forecasting horizons (\(\tau\)) and is assessed using two key metrics: mean squared error (MSE) and mean absolute error (MAE). Detailed results are provided in Table \ref{table:combined_multivariate_time_series}.

For these experiments, we use a 1-layer RNN with 5 units, followed by batch normalization and a 2-layer MLP with 10 units per layer. The MLP has ReLU in the first layer, identity activation in the last, and 5\% dropout in the first layer. Despite this simple architecture, ISL outperforms state-of-the-art transformers. We also observe that the ISL-slicing method generally performs better than both the marginal fitting technique and other state-of-the-art methods, with improvements as the number of projections increases. Notably, the ISL models (with \(\approx\) $25$K parameters), using an RNN and a simple MLP, achieves better forecasting accuracy than many transformer models with millions of parameters \citep{zeng2023transformers}.

\begin{table}[ht]
  \centering
  \scriptsize
  \setlength{\tabcolsep}{2pt}
  \renewcommand{\arraystretch}{1.1}
  {%
    % local siunitx settings
    \sisetup{
      detect-weight          = true,
      table-number-alignment = center,
      round-mode             = places,
      round-precision        = 2
    }%

    \rowcolors{2}{gray!15}{white}%
    \begin{tabular}{@{} c c *{7}{S S} @{}}
      \toprule
      \textbf{DB} & \(\tau\)
        & \multicolumn{2}{c}{\textbf{Autoformer}}
        & \multicolumn{2}{c}{\textbf{Informer}}
        & \multicolumn{2}{c}{\textbf{LogTrans}}
        & \multicolumn{2}{c}{\textbf{ISL-M}}
        & \multicolumn{2}{c}{\textbf{ISL-S7}}
        & \multicolumn{2}{c}{\textbf{ISL-S10}}
        & \multicolumn{2}{c}{\textbf{ISL-S20}} \\
      \cmidrule(lr){3-4}\cmidrule(lr){5-6}\cmidrule(lr){7-8}%
      \cmidrule(lr){9-10}\cmidrule(lr){11-12}\cmidrule(lr){13-14}\cmidrule(lr){15-16}
      & 
        & {MSE} & {MAE}
        & {MSE} & {MAE}
        & {MSE} & {MAE}
        & {MSE} & {MAE}
        & {MSE} & {MAE}
        & {MSE} & {MAE}
        & {MSE} & {MAE} \\
      \midrule

      \multirow{4}{*}{\texttt{ETTh2}}
        &  96 
          & 0.36 & {\bfseries 0.40}   & 3.76 & 1.53   & 2.12 & 1.20 
          & 0.42 & 0.49   & 0.31 & 0.45   & 0.28 & 0.43   
          & {\bfseries 0.26} & 0.41   \\
       \texttt{ETTh2} & 192 
          & {\bfseries 0.46} & {\bfseries 0.45}   & 5.60 & 1.93   & 4.32 & 1.63 
          & 0.68 & 0.69   & 0.61 & 0.64   & 0.54 & 0.57   
          & 0.55 & 0.60   \\
        & 336 
          & 0.48 & 0.49   & 4.72 & 1.84   & 1.12 & 1.60 
          & 0.61 & 0.64   & 0.41 & 0.55   
          & {\bfseries 0.33} & {\bfseries 0.43}
          & 0.44 & 0.51   \\
        & 720 
          & {\bfseries 0.52} & {\bfseries 0.51}   & 3.65 & 1.63   & 3.19 & 1.54 
          & 0.60 & 0.62   & 0.75 & 0.68   
          & 0.56 & 0.57   
          & 0.65 & 0.64   \\

      \midrule

      \multirow{4}{*}{\texttt{ETTm1}}
        &  96 
          & 0.67 & 0.57   & 0.54 & 0.51   & 0.60 & 0.55 
          & 0.41 & 0.51   & 0.43 & 0.55   
          & 0.28 & 0.43   
          & {\bfseries 0.19} & {\bfseries 0.37}   \\
       \texttt{ETTm1}  & 192 
          & 0.80 & 0.67   & 0.56 & 0.54   & 0.84 & 0.70 
          & 0.57 & 0.62   & 0.67 & 0.65   
          & 0.35 & {\bfseries 0.36}   
          & {\bfseries 0.30} & 0.49   \\
        & 336 
          & 1.21 & 0.87   & 0.75 & 0.66   & 1.12 & 0.83 
          & 0.70 & 0.67   & 0.79 & 0.78   
          & 0.67 & 0.68   
          & {\bfseries 0.54} & {\bfseries 0.61}   \\
        & 720 
          & 1.17 & 0.82   & 0.91 & 0.72   & 1.15 & 0.82 
          & 0.77 &  {\bfseries 0.69}   & 0.92 & 0.73   
          & 0.87 & 0.77   
          & {\bfseries 0.76} & 0.74   \\

      \midrule

      \multirow{4}{*}{\texttt{ETTm2}}
        &  96 
          & 0.26 & {\bfseries 0.34}   & 0.37 & 0.45   & 0.77 & 0.64 
          & 0.28 & 0.44   & {\bfseries 0.21} & 0.38   
          & 0.22 & 0.37   
          & 0.22 & 0.36   \\
        \texttt{ETTm2} & 192 
          & 0.28 & 0.34   & 0.53 & 0.56   & 0.99 & 0.76 
          & 0.34 & 0.46   & 0.33 & 0.43   
          & 0.26 & 0.40   
          & {\bfseries 0.21} & {\bfseries 0.34}   \\
        & 336 
          & 0.34 & 0.37   & 1.36 & 0.89   & 1.33 & 0.87 
          & 0.31 & 0.46   & 0.37 & 0.47   
          & 0.31 & 0.45   
          & {\bfseries 0.15} & {\bfseries 0.31}   \\
        & 720 
          & 0.43 & 0.43   & 3.38 & 1.34   & 3.05 & 1.33 
          & 0.43 & 0.55   & 0.37 & 0.50   
          & 0.36 & 0.49   
          & {\bfseries 0.24} & {\bfseries 0.40}   \\

      \bottomrule
    \end{tabular}
  }% end local group
\caption{\small Forecasting results for multivariate time-series on ETTh2, ETTm1, and ETTm2 datasets using Autoformer, Informer, LogTrans and ISL. ISL-M denotes the marginal approach, and ISL-S$n$ is the ISL-slicing with \(n\) random projections.}
\label{table:combined_multivariate_time_series}
\end{table}

\section{Conclusions}  \label{section 7}

We have investigated the construction of loss functions based on rank statistics with an invariant distribution in order to train implicit generative models that can reproduce heavy-tailed and multivariate data distributions. As a result, we have introduced two novel methodologies, Pareto-ISL and ISL-slicing. Pareto-ISL is used to fit a NN with a Pareto-distributed random input. The resulting model can represent accurately both the central features and the tails of heavy-tailed data distributions. ISL-slicing can be used to train NN with a random input and a high-dimensional output. We have shown through simulations with several data sets (including time series data) that the proposed methods are competitive with (and often outperform) state of the art models. The computational cost of training generative models with ISL-based methods is usually low compared to that of state of the art schemes.

% Acknowledgements and Disclosure of Funding should go at the end, before appendices and references

\acks{This work has been partially supported by the Office of Naval Research (award N00014-22-1-2647) and Spain’s Agencia Estatal de Investigación (ref. PID2021-125159NB-I00 TYCHE and PID2021-123182OB-I00 EPiCENTER) funded by MCIN/\-AEI/10.13039/501100011033 and by “ERDF A way of making Europe". Also funded by Comunidad de Madrid IND2022/TIC-23550, IDEA-CM project (TEC-2024/COM-89) and the ELLIS Unit Madrid (European Laboratory for Learning and Intelligent Systems). Pablo M. Olmos was also supported by the 2024 Leonardo Grant for Scientific Research and Cultural Creation from the BBVA Foundation.}

% Manual newpage inserted to improve layout of sample file - not
% needed in general before appendices/bibliography.

\newpage

\appendix
\counterwithin{equation}{section}
\section{Proof of Theorem 2} \label{Proof of Theorem 2}

\begin{proof} %\underline{Proof of Theorem 2}
    %See \cite[Theorem 2]{de2024training}.
    For clarity, we introduce the following notation: for a real function \( f: \mathbb{R} \to \mathbb{R} \) and the pdf \( p \) of a univariate r.v. we define
    \[
        (f, p) := \int_{\mathbb{R}} f(x) p(x) \, dx.
    \]
    Let \( B_1(\mathbb{R}) := \{ f: \mathbb{R} \to \mathbb{R} \mid \sup_{x \in \mathbb{R}} |f(x)| \leq 1 \} \) be the set of real functions bounded by 1. We note that the assumption $\sup_{f \in B_1(\mathbb{R})} \left| (f, p) - (f, \tilde{p}) \right|$ in \cite[Theorem 2]{de2024training} is equivalent to the assumption of Theorem \ref{theorem4}. Indeed,
    \begin{align*}
        \norm{p - \tilde{p}}_{L^{1}(\mathbb{R})} = \int_{\mathbb{R}} \abs{p(x) - \tilde{p}(x)} \, dx &= \int_{\mathbb{R}} \operatorname{sgn}(p(x) - \tilde{p}(x)) (p(x) - \tilde{p}(x)) \, dx \\&= \sup_{f \in B_1(\mathbb{R})} \left| (f, p) - (f, \tilde{p}) \right|.
    \end{align*}
    The remainder of the proof follows directly from \cite[Theorem 2]{de2024training}.
\end{proof}

\section{Proof of Theorem 4} \label{Proof of Theorem 4}
\setcounter{theorem}{0}
\subsection{Preliminary results}

Let, \( \tilde{F}_{\theta}(y_0) \) represents the cdf of the transformed r.v. \( y = g_{\theta}(z) \), evaluated at \( y_0 \), where \( g_{\theta} \) is a function parameterized by \( \theta \). The r.v. $y$ has a pdf denoted $\tilde{p}_{\theta}$ and $z\sim p_{z}$.

Let $\tilde{y}_1, \ldots, \tilde{y}_k$ be $K$ iid draws from $\tilde{p}_{\theta}$. Using \( \tilde{F}_{\theta}(y_0) \), we define the probability of observing exactly \( n \) successes in \( K \) independent Bernoulli trials. The $i$-th trial is considered a success when $\tilde{y}_{i}\leq y_{0}$, hence the success probability is \( \tilde{F}_{\theta}(y_0) \). We write of the resulting binomial distribution as
%The pmf \( h_{n,\theta}(y_0) \), which follows a binomial distribution, is given by:
\begin{align}\label{eq1:h_n}
h_{n,\theta}(y_0) = \binom{K}{n} \left[ \tilde{F}_{\theta}(y_0) \right]^n \left[ 1 - \tilde{F}_{\theta}(y_0) \right]^{K - n}, \quad \text{for}\;\; n\in\{0, \ldots K\}.
\end{align}

Using \ref{eq1:h_n}, the pmf of the rank statistic $A_{K}=\abs{\left\{\tilde{y}\in\{\tilde{y}_{i}\}^{K}_{i=1} : \tilde{y}\leq y_{0}\right\}}$ when $y_{0}\sim \tilde{p}_{\theta}$ can be constructed as
%Finally, the discrete probability distribution \( \mathbb{Q}_{K, \tilde{p}_{\theta}}(n) \) is derived from the continuous pdf \( \tilde{p}_{\theta} \) parameterized by \( \theta \). For each \( n \in \{0, 1, \dots, K\} \), \( \mathbb{Q}_{K, \tilde{p}_{\theta}}(n) \) is defined as:
\begin{align}
\mathbb{Q}_{K, \tilde{p}_{\theta}}(n) = \int_{\mathbb{R}} h_{n,\theta}(y) \, \tilde{p}_{\theta}(y) \, dy, \quad \text{for}\;\; n\in\{0,\ldots, K\}.
\label{eqConstructQ}
\end{align}

\begin{lemma}\label{lemma1}
The cdf \( \tilde{F}_{\theta}(y_0) = \mathbb{P}(g_{\theta}(z) \leq y_0) \) is continuous in \( \theta \), for every fixed \( y_0 \in \mathbb{R} \).
\end{lemma}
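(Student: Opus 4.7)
The plan is to express $\tilde{F}_{\theta}(y_0)$ as an expectation over the latent variable $Z$ and then apply the dominated convergence theorem along an arbitrary sequence $\theta_n \to \theta_0$. Writing
\begin{align*}
\tilde{F}_{\theta}(y_0) \;=\; \mathbb{P}(g_{\theta}(Z)\le y_0) \;=\; \int_{\mathcal{Z}} \mathbbm{1}_{\{g_{\theta}(z)\le y_0\}}\, \di P_Z(z),
\end{align*}
where $P_Z$ denotes the law of $Z$, I would define $\phi_n(z):=\mathbbm{1}_{\{g_{\theta_n}(z)\le y_0\}}$ and $\phi_0(z):=\mathbbm{1}_{\{g_{\theta_0}(z)\le y_0\}}$, so the claim reduces to $\int \phi_n\,\di P_Z \to \int \phi_0\,\di P_Z$. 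Since $|\phi_n|\le 1$, the constant function $1$ provides an integrable dominating function, so the only thing to check is pointwise ($P_Z$-a.e.) convergence $\phi_n(z)\to \phi_0(z)$.

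For pointwise convergence, the hypothesis of Theorem \ref{theorem: props of d_k} (Part 1) gives that the map $\theta\mapsto g_{\theta}(z)$ is continuous for $P_Z$-almost every $z\in\mathcal{Z}$, hence $g_{\theta_n}(z)\to g_{\theta_0}(z)$ off a $P_Z$-null set $N_1$. The indicator $y\mapsto\mathbbm{1}_{\{y\le y_0\}}$ is continuous at every $y\ne y_0$, so $\phi_n(z)\to\phi_0(z)$ for every $z\notin N_1$ such that $g_{\theta_0}(z)\ne y_0$. The obstacle is therefore the boundary set
\begin{align*}
B \;:=\; \{z\in\mathcal{Z}:\; g_{\theta_0}(z)=y_0\}.
\end{align*}

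The key observation that resolves this obstacle is that $\tilde{p}_{\theta_0}$ is assumed to be a probability density function, so the induced distribution of $g_{\theta_0}(Z)$ is absolutely continuous with respect to the Lebesgue measure and in particular assigns zero mass to any single point. Consequently,
\begin{align*}
P_Z(B) \;=\; \mathbb{P}\bigl(g_{\theta_0}(Z)=y_0\bigr) \;=\; \int_{\{y_0\}}\tilde{p}_{\theta_0}(y)\,\di y \;=\; 0.
\end{align*}
Thus $\phi_n\to\phi_0$ outside the $P_Z$-null set $N_1\cup B$, and the dominated convergence theorem yields $\tilde{F}_{\theta_n}(y_0)\to\tilde{F}_{\theta_0}(y_0)$. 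Since $\theta_n\to\theta_0$ was arbitrary and $y_0$ was fixed, this establishes continuity of $\theta\mapsto\tilde{F}_{\theta}(y_0)$, completing the proof.
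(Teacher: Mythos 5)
Your proof is correct and follows essentially the same route as the paper: both represent $\tilde{F}_{\theta}(y_0)$ as an integral of an indicator over the latent noise and pass to the limit along $\theta_m\to\theta$ via a.e. pointwise convergence of the indicators plus dominated convergence. If anything, your treatment is slightly more careful than the paper's: you explicitly dispose of the boundary set $\{z: g_{\theta_0}(z)=y_0\}$ using the fact that $g_{\theta_0}(Z)$ admits a density (a point the paper's argument glosses over when asserting $\int_{A_m}dz\to 0$), and you dominate by the constant $1$ rather than implicitly assuming $p_z$ bounded.
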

\begin{proof}
    The cdf $\tilde{F}_{\theta}(y_0)$ can be expressed as
    \begin{align*}
        \tilde{F}_{\theta}(y_{0}) = \mathbb{P}(y\leq y_{0})= \mathbb{P}(g_{\theta}(z)\leq y_{0})=\int_{\mathbb{R}}\mathbb{I}_{S_{y_{0}, \theta}}(z)\, p_{z}(z) \,dz,
    \end{align*}
    where \( p_z \) is the pdf of the input noise, $S_{y_{0}, \theta}=\left\{u\in \mathbb{R}: g_{\theta}(u)\leq y_{0}\right\}$ and $\mathbb{I}_{S_{y_{0}, \theta}}(z)$ is the indicator function. We need to prove that for any sequence \( \{\theta_m\}_{m=1}^{\infty} \) such that \( \theta_m \to \theta \) as \( m \to \infty \), we have \( \lim_{m\to \infty}\left| \tilde{F}_{\theta_m}(y_0) - \tilde{F}_{\theta}(y_0) \right| = 0 \). We can write
    \begin{align*}
    \left| \tilde{F}_{\theta_m}(y_0) - \tilde{F}_{\theta}(y_0) \right| &\leq  \int_{\mathbb{R}} \left| \mathbb{I}_{S_{y_{0}, \theta_{m}}}(z) - \mathbb{I}_{S_{y_{0}, \theta}}(z) \right| p_z(z) \, dz.
    \end{align*}

    \noindent If we define the set $A_{m}=\left\{u\in \mathbb{R}: \left( g_{\theta}(u) \leq y_0 < g_{\theta_m}(u) \right) \text{ or } \left( g_{\theta_m}(u) \leq y_0 < g_{\theta}(u) \right)\right\}$ then is clear that 
    \begin{align*}
    \left| \mathbb{I}_{S_{y_{0}, \theta_{m}}}(z) - \mathbb{I}_{S_{y_{0}, \theta}}(z) \right|= \begin{cases}
    1 & \text{if } z \in A_{m}, \\
    0 & \text{otherwise}.
    \end{cases}
     \end{align*}
     Hence, we can write 
     \begin{align}\label{eq1:lemma1}
     \left| \tilde{F}_{{\theta}_{m}}(y_0) - \tilde{F}_{\theta}(y_0)\right| = \int_{\mathbb{R}} \mathbb{I}_{A_{m}}(z)\, p_{z}(z)\,dz = \int_{A_{m}} p_{z}(z)\,dz,
     \end{align}
     and there is a constant $C<\infty$ such that $\int_{A_{m}}p_{z}\,dz\leq C\int_{A_{m}}dz$. However $g_{\theta}(z)$ is continuous in $\theta$ for almost every $z\in \mathbb{R}$, i.e., $\lim_{m\to \infty}g_{\theta_{m}}(z)=g_{\theta}(z)$, which implies that $\lim_{m\to \infty}\int_{A_{m}}dz=0$. Therefore $\lim_{m\to \infty}\abs{\tilde{F}_{\theta_{m}}(y_0)-\tilde{F}_{\theta}(y_0)} \leq C\int_{A_{m}}dz = 0$.
\end{proof}

\begin{lemma} \label{lemma2}
    For each \( n \in \{0, 1, \dots, K\} \), there is a constant $C_{n,K}<\infty$ such that %the difference between \( h_{n,\theta}(y_0) \) and \( h_{n,\theta'}(y_0) \), where \( h_{n,\theta}(y_0) \) is the binomial pmf, is bounded as follows
    \begin{align*}
    \left| h_{n,\theta}(y_0) - h_{n,\theta'}(y_0) \right| \leq \binom{K}{n}\, C_{n,K}\, \left| \tilde{F}_{\theta}(y_0) - \tilde{F}_{\theta'}(y_0) \right|.
    \end{align*}
    %where \( C_n \) is a constant that depends on \( n \) and \( K \).
\end{lemma}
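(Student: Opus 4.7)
The plan is to reduce the lemma to a one-dimensional Lipschitz estimate on the scalar function $f_n(u) := u^n(1-u)^{K-n}$ for $u \in [0,1]$. First I would set $u := \tilde{F}_{\theta}(y_0)$ and $u' := \tilde{F}_{\theta'}(y_0)$; since both are values of a cdf, they lie in $[0,1]$. Writing $h_{n,\theta}(y_0) = \binom{K}{n} f_n(u)$ and analogously $h_{n,\theta'}(y_0) = \binom{K}{n} f_n(u')$, the inequality to be proved collapses to
\[
\left| f_n(u) - f_n(u') \right| \le C_{n,K}\, |u - u'|
\]
uniformly for $u, u' \in [0,1]$, for some constant $C_{n,K} < \infty$.

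Next, since $f_n$ is a polynomial and therefore $C^{\infty}$ on the compact interval $[0,1]$, the mean value theorem will deliver this estimate with $C_{n,K} = \sup_{\xi \in [0,1]} |f_n'(\xi)|$. I would compute
\[
f_n'(\xi) = n\,\xi^{n-1}(1-\xi)^{K-n} - (K-n)\,\xi^{n}(1-\xi)^{K-n-1},
\]
using the conventions that the first term vanishes when $n = 0$ and the second when $n = K$. Bounding $\xi, 1-\xi \in [0,1]$ and applying the triangle inequality then yields $|f_n'(\xi)| \le n + (K - n) = K$, so one may take $C_{n,K} = K$ (any finite bound suffices for the statement of the lemma).

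Multiplying back by $\binom{K}{n}$ and substituting $u = \tilde{F}_{\theta}(y_0)$, $u' = \tilde{F}_{\theta'}(y_0)$ finishes the argument. I do not anticipate any real obstacle here; the only mild care needed is the edge-case bookkeeping for $n \in \{0, K\}$, where one of the two terms in $f_n'$ is absent, and the observation that cdf values are confined to the unit interval, so no unbounded derivative behaviour can arise. The result will then plug directly into the proof of Theorem \ref{theorem: props of d_k} by combining it with Lemma \ref{lemma1} and the representation \eqref{eqConstructQ} of $\mathbb{Q}_{K, \tilde{p}_{\theta}}$.
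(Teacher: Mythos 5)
Your proposal is correct and follows essentially the same route as the paper: factor out $\binom{K}{n}$, apply the mean value theorem to $f(q)=q^{n}(1-q)^{K-n}$ on $[0,1]$, and bound its derivative there. The only difference is cosmetic—you make the constant explicit ($C_{n,K}=K$), whereas the paper merely invokes boundedness of the continuous derivative on the compact interval.
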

\begin{proof}
    We can bound the difference between \(h_{n,\theta}(y_0)\) and \(h_{n,\theta'}(y_0)\) using \ref{eq1:h_n} as 
    \begin{align*}
        &\left|h_{n,\theta}(y_0) - h_{n,\theta'}(y_0)\right| \leq \binom{K}{n} \left| \tilde{F}_{\theta}(y_0)^n (1 - \tilde{F}_{\theta}(y_0))^{K-n} - \tilde{F}_{\theta'}(y_0)^n (1 - \tilde{F}_{\theta'}(y_0))^{K-n} \right|.
    \end{align*}
    Let us define \( f(q) = q^n (1 - q)^{K-n} \), which is continuously differentiable for \( q \in [0,1] \). By the mean value theorem, for some value \( \tilde{F}_{\theta^*}(y_0) \in \left[\tilde{F}_{\theta}(y_0) \land \tilde{F}_{\theta'}(y_0), \tilde{F}_{\theta}(y_0)\lor\tilde{F}_{\theta'}(y_0)\right] \), we have
    \begin{align*}
        \left| f(\tilde{F}_{\theta}(y_0)) - f(\tilde{F}_{\theta'}(y_0)) \right| &\leq \left| f'(\tilde{F}_{\theta^{*}}(y_0)) \right| \left| \tilde{F}_{\theta}(y_0) - \tilde{F}_{\theta'}(y_0) \right|.
    \end{align*}
    Since \( f(q) = q^n (1 - q)^{K-n} \) is a polynomial in \( q \), its derivative \( f'(q) \) is continuous and bounded on the interval \( q \in [0, 1] \). Given that \( \tilde{F}_{\theta^*}(y_0) \in [0, 1] \) (as it is a cdf), there exists a constant \( C_n \) such that, $
    \left| f'(\tilde{F}_{\theta^*}(y_0)) \right| \leq C_n$ for any $\tilde{F}_{\theta^*}(y_0)$.
\end{proof}

\begin{lemma}\label{lemma3}
    Let \( \tilde{F}_{\theta}(y_0) \) and \( \tilde{F}_{\theta'}(y_0) \) be the cdfs of the transformed r.v.s \( y = g_{\theta}(z) \) and \( y' = g_{\theta'}(z) \), respectively, where \( g_{\theta}(z) \) is differentiable w.r.t. \( z \) and satisfies the Lipschitz condition \( \left| g_{\theta}(z) - g_{\theta'}(z) \right| \leq L_{\max} \|\theta - \theta'\| \) for some Lipschitz constant \(L_{\max}<\infty\), and there exists $m>0$ such that \( \inf_{(z,\theta)} \left| g_{\theta}'(z) \right| \geq m  \). Then
    \begin{align*}
    \left| \tilde{F}_{\theta}(y_0) - \tilde{F}_{\theta'}(y_0) \right| \leq L_1 \|\theta - \theta'\|,
    \end{align*}
    where \( L_1 = \|p_z\|_{L^\infty(\mathbb{R})} \, \dfrac{2 L_{\max}}{m} \) and \( p_z(z) \) is the pdf of the input variable \( z \).
\end{lemma}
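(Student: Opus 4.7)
\textbf{Proof proposal for Lemma \ref{lemma3}.} The plan is to reduce the difference of CDFs to the Lebesgue measure of a thin preimage set, and then exploit the uniform lower bound $|g_\theta'(z)|\geq m$ to control that measure via a change of variables.

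First I would write, as in the proof of Lemma \ref{lemma1} (see Eq.~(\ref{eq1:lemma1})),
\begin{align*}
|\tilde F_\theta(y_0)-\tilde F_{\theta'}(y_0)|
\;\leq\; \int_{A_{\theta,\theta'}} p_z(z)\,dz
\;\leq\; \|p_z\|_{L^\infty(\mathbb R)}\,\mathcal{L}(A_{\theta,\theta'}),
\end{align*}
where $A_{\theta,\theta'}=\{z\in\mathbb R:(g_\theta(z)\leq y_0<g_{\theta'}(z))\text{ or }(g_{\theta'}(z)\leq y_0<g_\theta(z))\}$ is the symmetric-difference set of the two sublevel sets $\{g_\theta\leq y_0\}$ and $\{g_{\theta'}\leq y_0\}$. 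This isolates the problem: I only need a Lebesgue-measure bound on $A_{\theta,\theta'}$.

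Next I would use the uniform Lipschitz hypothesis to squeeze $A_{\theta,\theta'}$ into a narrow band. If $z\in A_{\theta,\theta'}$, then $g_\theta(z)$ and $g_{\theta'}(z)$ lie on opposite sides of $y_0$, while $|g_\theta(z)-g_{\theta'}(z)|\leq L_{\max}\|\theta-\theta'\|$. Hence both values are within $L_{\max}\|\theta-\theta'\|$ of $y_0$, and in particular
\begin{align*}
A_{\theta,\theta'}\;\subseteq\;\bigl\{z\in\mathbb R:\;|g_\theta(z)-y_0|\leq L_{\max}\|\theta-\theta'\|\bigr\}
\;=\;g_\theta^{-1}\bigl([y_0-\delta,\,y_0+\delta]\bigr),
\end{align*}
where $\delta:=L_{\max}\|\theta-\theta'\|$. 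The gain over using the full symmetric difference is exactly the factor of $2$ that appears in the constant $L_1$.

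The core step, and the one I expect to be the main obstacle, is bounding $\mathcal{L}\!\left(g_\theta^{-1}([y_0-\delta,y_0+\delta])\right)$. Here I would argue that because $g_\theta'$ is a derivative satisfying $|g_\theta'(z)|\geq m>0$, Darboux's intermediate value property for derivatives forces $g_\theta'$ to have constant sign, so $g_\theta$ is strictly monotone on $\mathbb R$ and thus a global homeomorphism onto its image. The change-of-variables formula then yields
\begin{align*}
\mathcal{L}\!\left(g_\theta^{-1}([y_0-\delta,y_0+\delta])\right)
\;=\;\int_{y_0-\delta}^{y_0+\delta}\frac{dy}{|g_\theta'(g_\theta^{-1}(y))|}
\;\leq\;\frac{2\delta}{m}.
\end{align*}
Combining the three displays gives
\begin{align*}
|\tilde F_\theta(y_0)-\tilde F_{\theta'}(y_0)|
\;\leq\;\|p_z\|_{L^\infty(\mathbb R)}\cdot\frac{2\delta}{m}
\;=\;\|p_z\|_{L^\infty(\mathbb R)}\,\frac{2L_{\max}}{m}\,\|\theta-\theta'\|,
\end{align*}
which is exactly the claim with $L_1=\|p_z\|_{L^\infty(\mathbb R)}\cdot 2L_{\max}/m$. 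The delicate point worth double-checking is the global monotonicity of $g_\theta$: if one only assumes differentiability (not $C^1$), Darboux's theorem is needed to exclude sign changes of $g_\theta'$; if the image of $g_\theta$ is a proper subinterval of $\mathbb R$, the change-of-variables argument still applies since $g_\theta^{-1}([y_0-\delta,y_0+\delta])$ is either empty (bound is trivial) or an interval mapped diffeomorphically into $[y_0-\delta,y_0+\delta]$.
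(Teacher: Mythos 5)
Your proposal is correct and follows essentially the same route as the paper's proof: reduce the CDF difference to the measure of the symmetric-difference set $A$, squeeze $A$ into the band $\{|g_\theta(z)-y_0|\le L_{\max}\|\theta-\theta'\|\}$ via the Lipschitz hypothesis, and bound that band's Lebesgue measure by $2L_{\max}\|\theta-\theta'\|/m$ through the change of variables $y=g_\theta(z)$. Your appeal to Darboux's theorem to justify global monotonicity of $g_\theta$ is in fact a slightly more careful justification of the change-of-variables step than the paper's informal use of a ``local inverse.''
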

\begin{proof}
    Let $A=\left\{u\in \mathbb{R}: \left( g_{\theta}(u) \leq y_0 < g_{\theta'}(u) \right) \text{ or } \left( g_{\theta'}(u) \leq y_0 < g_{\theta}(u) \right)\right\}$. By the same argument as in the proof of Lemma \ref{lemma1} (see Eq. \ref{eq1:lemma1}) we have
    %By definition, the cdf $\tilde{F}_{\theta}(y_0)$ is given by
    %$\tilde{F}_{\theta}(y_0) = \int_{\mathbb{R}} \mathbb{I}_{\{ g_{\theta}(z) \leq y_0 \}}\, p_z(z) \, dz$,
    %where $p_z$ is the pdf of the input noise. The absolute difference between $\tilde{F}_{\theta}(y_0)$ and $\tilde{F}_{\theta'}(y_0)$ is then
    \begin{align}\label{eq1:lemma3}
    \left| \tilde{F}_{\theta}(y_0) - \tilde{F}_{\theta'}(y_0) \right|\leq \int_{A} p_z(z) \, dz.
    \end{align}
    %where \( A \) is the set where the indicator functions differ; that is, \( A \) comprises all \( z \) such that \( y_0 \) lies strictly between \( g_{\theta}(z) \) and \( g_{\theta'}(z) \). Formally, \( A \) can be characterized as \( A = \{ z \in \mathbb{R} \mid (g_{\theta}(z) \leq y_0 < g_{\theta'}(z)) \text{ or } (g_{\theta'}(z) \leq y_0 < g_{\theta}(z)) \}\).
    
    We now prove that $A\subseteq B= \left\{ u \in \mathbb{R} \ \bigg| \ \left| g_{\theta}(u) - y_0 \right| \leq L_{\max} \, \|\theta - \theta'\| \right\}$. For any $z \in A$, there are two possible cases to consider: either \(g_{\theta}(z) \leq y_0 < g_{\theta'}(z)\) or $g_{\theta'}(z) \leq y_0 < g_{\theta}(z)$. In the first case, we can see that \( |g_{\theta}(z) - y_0| \leq |g_{\theta}(z) - g_{\theta'}(z)| \leq L_{\max} \, \| \theta - \theta' \| \) by the Lipschitz assumption. An analogous argument holds for the second case.  Thus, we have shown that for any \( z \in A \), $ \left| g_{\theta}(z) - y_0 \right| \leq L_{\max} \, \|\theta - \theta'\|$ and, therefore, \( A \subseteq B \).

    Next, we estimate the Lebesgue measure $\mathcal{L}(B)$ of the set $B$. By assumption, $g_{\theta}(z)$ is differentiable w.r.t. $z$, with \(\inf_{(z, \theta)\in \mathcal{Z}\times \mathbb{R}^{d}}\left| g_{\theta}'(z) \right| \geq m > 0\). Hence, we can perform a change of variable \(y = g_{\theta}(z)\), and obtain \(z = g_{\theta}^{-1}(y)\), where $g^{-1}_{\theta}$ is the local inverse function. We note that $z\in B$ when  $y\in[ y_0 - \delta, y_0 + \delta ]$, where \(\delta = L_{\max} \, \|\theta - \theta'\|\). The Lebesgue measure of \(B\) can be upper bounded as
    \begin{align*}
        \mathcal{L}(B) = \int_{B} dz \leq \int_{y_0 - \delta}^{y_0 + \delta} \left| \frac{dz}{dy} \right| dy \leq \frac{1}{m} \int_{y_{0}-\delta}^{y_{0}+\delta}\, dy = \frac{2 \delta}{m} = \frac{2 L_{\max}}{m} \, \|\theta - \theta'\|,
    \end{align*}
    since \( \frac{dz}{dy} = \frac{1}{g_{\theta}'(z)} \) and, by assumption, \( \inf_{(z,\theta)} \left| g_{\theta}'(z) \right| \geq m \), hence
    $\left| \frac{dz}{dy} \right| = \frac{1}{\left| g_{\theta}'(z) \right|} \leq \frac{1}{m}
    $.

    \noindent Finally, we proceed to bound the absolute difference between the cdfs. In particular,
    \begin{align*}
    \left| \tilde{F}_{\theta}(y_0) - \tilde{F}_{\theta'}(y_0) \right| \leq \int_{A} p_z(z) \, dz \leq \int_{B} p_z(z) \, dz &\leq \norm{p_{z}}_{L^{\infty}(\mathbb{R})} \, \mathcal{L}(B) \leq L_1 \norm{\theta - \theta'},
    \end{align*}
    the first inequality follows from \ref{eq1:lemma3}, the second is given by $A\subseteq B$, and the last inequality follow from the upper bound on $\mathcal{L}(B)$ with $L_{1} = \norm{p_{z}}_{L^{\infty}(\mathbb{R})} \, \dfrac{2 L_{\max}}{m}$.
\end{proof}

\subsection{Proof of Theorem 4}

\begin{proof}\noindent\underline{Continuity (Part 1)}\\[3pt]

    To prove that \( d_{K}(p, \tilde{p}_{\theta}) \) is continuous w.r.t. \( \theta \), we need to show that $$\lim_{m\to\infty} \left| d_{K}(p, \tilde{p}_{\theta_m}) - d_{K}(p, \tilde{p}_{\theta}) \right| = 0$$ for any sequence $\{ \theta_m \}_{m=1}^\infty$ such that $\theta_m\to \theta$.
    
    We begin by examining the difference \( \left| d_{K}(p, \tilde{p}_{\theta_m}) - d_{K}(p, \tilde{p}_{\theta}) \right| \). Let us denote ${\bf q}_{K,\tilde p_\theta}=\left[ \mathbb{Q}_{K, \tilde{p}_\theta}(0), \ldots, \mathbb{Q}_{K, \tilde{p}_\theta}(K) \right]^\top$. We readily arrive at the bound
    \begin{align}
        \abs{d_{K}(p, \tilde{p}_{\theta_m}) - d_{K}(p, \tilde{p}_{\theta})} &= \frac{1}{K+1} \left|\left\|\frac{1}{K+1} \mathbf{1}_{K+1} - {\bf q}_{K, \tilde{p}_{\theta_m}}\right\|_{\ell_1} - \left\|\frac{1}{K+1}\mathbf{1}_{K+1} - {\bf q}_{K, \tilde{p}_{\theta}}\right\|_{\ell_1}\right| \nonumber\\
        &\leq \frac{1}{K+1} \left\|{\bf q}_{K, \tilde{p}_{\theta_m}} - {\bf q}_{K, \tilde{p}_{\theta}}\right\|_{\ell_1} \label{eqReverseTriangle}\\
        &\leq \dfrac{1}{K+1}\sum_{n=0}^{K} \int_{\mathbb{R}} \abs{h_{n,\theta_m}(y)-h_{n,\theta}(y)}\, \max\left\{\tilde{p}_{\theta_m}(y), \tilde{p}_{\theta}(y)\right\}\,dy,
        \label{eq1:Proof of Theorem 4 part 1}
    \end{align}
    where \eqref{eqReverseTriangle} follows from the reverse triangle inequality and \eqref{eq1:Proof of Theorem 4 part 1} is obtained from the construction of the pmf $\mathbb{Q}_{K,\tilde p_\theta}(n)$ in Eq. \eqref{eqConstructQ}. It is easy to see that 
    $$
    \left| h_{n,\theta_m}(y) - h_{n,\theta}(y) \right| \max\{\tilde{p}_{\theta_m}(y), \tilde{p}_{\theta}(y)\} \le 2\,( \tilde{p}_{\theta}(y) + \tilde{p}_{\theta_m}(y) ),
    $$
    hence the dominated convergence theorem yields
    \begin{align}
        \lim_{m\to \infty}\abs{d_{K}(p, \tilde{p}_{\theta_m})- d_{K}(p, \tilde{p}_{\theta})} \leq \sum_{n=0}^{K} \int_{\mathbb{R}} \lim_{m\to \infty} \abs{h_{n,\theta_m}(y)-h_{n,\theta}(y)}\, \max\left\{\tilde{p}_{\theta_m}(y), \tilde{p}_{\theta}(y)\right\}\,dy.
        \label{eqBoundDkLim}
    \end{align}
    However, \( h_{n,\theta_m}(y) \) is continuous in $\theta$ because it depends continuously on the cdf \( \tilde{F}_{\theta_{m}}(y) \) which, in turn, is continuous by Lemma \ref{lemma1}. Therefore, 
    \begin{equation}
    \lim_{m\to \infty} \abs{h_{n,\theta_m}(y)-h_{n,\theta}(y)} = 0
    \label{eqConvergenceH}
    \end{equation}
    for any sequence $\{ \theta_m \}_{m=1}^\infty$ such that $\theta_m\to \theta$. Combining \eqref{eqConvergenceH} with the inequality \eqref{eqBoundDkLim} yields $\lim_{m\to\infty} \abs{d_{K}(p, \tilde{p}_{\theta_m})- d_{K}(p, \tilde{p}_{\theta})} = 0$ whenever $\theta_m \to \theta$ and completes the proof of Part 1.
    \end{proof}

    \begin{proof}\noindent\underline{Differentiability (Part 2).}\\[3pt]

    By Rademacher's theorem (see \cite[Theorem 3.2]{evans2018measure}), if $d_K(p,\tilde p_\theta)$ is Lipschitz then it is differentiable almost everywhere. Hence, we aim to prove that $d_{K}(p, \tilde{p}_{\theta})$ is Lipschitz continuous w.r.t. $\theta$.

    Lemma \ref{lemma2} yields the upper bound
    \begin{align}
        \abs{h_{n,\theta}(y_0) - h_{n,\theta'}(y_0)}\leq \binom{K}{n}\, C_{n,K}\, \abs{\tilde{F}_{\theta}(y_0)-\tilde{F}_{\theta'}(y_0)},
        \label{eqPart2_x1}
    \end{align}
    where $C_{n,K}<\infty$ is a constant that depends on $n$ and $K$. Combining \eqref{eqPart2_x1} with Lemma \ref{lemma3} yields the Lipschitz continuity of $h_{n,\theta}(y_0)$, namely
    \begin{align}\label{eq1:part2 proof}
        \abs{h_{n,\theta}(y_0) - h_{n,\theta'}(y_0)}\leq  \binom{K}{n}\, C_{n,K}\, L_{1} \norm{\theta - \theta'},
    \end{align}
    where $L_1<\infty$ is a constant. Moreover, we have a bound for the pdf $\tilde p_\theta$ of the form
    \begin{equation}
    \sup_{\theta,y} \tilde{p}_{\theta}(y) = \sup_{\theta,y} \frac{p_z(g_{\theta}^{-1}(y))}{\left| g'_{\theta}\left( g_{\theta}^{-1}(y) \right) \right|} \leq \frac{\|p_z\|_{L^\infty(\mathbb{R})}}{m},
    \label{eqPart2_x2}
    \end{equation}
    where $g_\theta^{-1}(y)$ is the local inverse of $g_\theta(y)$ and we have used the assumption $\inf_{z,\theta} \abs{g_\theta'(z)} \ge m > 0$. Substituting the upper bounds \eqref{eq1:part2 proof} and \eqref{eqPart2_x2} back into \eqref{eq1:Proof of Theorem 4 part 1} (with $\theta_m=\theta'$) yields
    \begin{equation}
    \left|d_{K}(p, \tilde{p}_{\theta}) - d_{K}(p, \tilde{p}_{\theta'})\right| \leq 
    \dfrac{1}{K+1}\dfrac{\norm{p_{z}}_{L^{\infty}(\mathbb{R})}}{m} \, \sum_{n=0}^{K}\binom{K}{n} \, C_{n,K} L_1 \| \theta - \theta' \|.
    \nonumber
    \end{equation}
    Finally, we note that $\sum_{n=0}^{K} \binom{K}{n} = 2^K$ to obtain
    \begin{align*}
        \left| d_{K}(p, \tilde{p}_{\theta}) - d_{K}(p, \tilde{p}_{\theta'}) \right| \leq \frac{\|p_z\|_{L^\infty(\mathbb{R})}}{m\, (K+1)} \, 2^K \left(\max_{0 \leq n \leq K} C_{n,K} \right) L_1 \|\theta - \theta'\|.
    \end{align*}
    and complete the proof.
\end{proof}

\section{Experimental results} \label{Appendix Experimental results}

\subsection{Comparison of the surrogate and theoretical loss functions} \label{subsection: Comparison of Surrogate and Theoretical Loss Performance}

Table \ref{Learning1D table} compares the error rates between the surrogate and theoretical losses. The metrics displayed include the \(L_1\) and \(L_{\infty}\) norms of the values obtained from the surrogate loss and the theoretical loss at each epoch, along with their respective percentage errors. 
% The setup includes Gaussian noise \(\mathcal{N}(0,1)\), $1000$ training epochs, \(K=10\), and a sample size of \(N=1000\), averaged over 100 experiments. 
In this setup the noise is drawn from a standard Gaussian, \(\mathcal{N}(0,1)\), the number of training epochs is $1000$, \(K=10\), the sample size is \(N=1000\), and the results are averaged over 100 trials.
The final three rows in the table describe mixture models with equal component weights: Model 1 is a mixture of $\mathcal{N}(5, 2)$ and $\mathcal{N}(-1, 1)$, Model 2 is a mixture of $\mathcal{N}(5, 2)$, $\mathcal{N}(-1, 1)$, and $\mathcal{N}(-10, 3)$, and Model 3 is a mixture of $\mathcal{N}(-5, 2)$ and $\text{Pareto}(5, 1)$.

\begin{table*}[!htbp]
  \centering
  \begingroup
    % Only for this table:
    \sisetup{
      separate-uncertainty   = true,
      uncertainty-separator  = \pm,
      table-number-alignment = center,
      group-digits           = false
    }
    \scriptsize
    \setlength{\tabcolsep}{3pt}
    \rowcolors{2}{gray!15}{white}

    \begin{adjustbox}{width=0.8\textwidth,center}
      \begin{tabular}{
        l
        l
        S[table-format=1.6(6)]
        S[table-format=1.6(6)]
        S[table-format=1.6]
        S[table-format=1.6]
      }
        \rowcolor{gray!30}
        \toprule
        & \textbf{Target}
        & {$L_1$}
        & {$L_{\infty}$}
        & {\% error~$L_1$}
        & {\% error~$L_{\infty}$}
        \\
        \midrule
        & $\mathcal{N}(4,2)$ 
          & 0.005307(1505) & 0.254805(34452) 
          & 0.019625 & 0.930903 \\
        & $\mathcal{U}(-2,2)$  
          & 0.015454(1450) & 0.244568(28237) 
          & 0.015454 & 0.734944 \\
        & Cauchy(1,2)      
          & 0.005192(1329) & 0.242361(28494) 
          & 0.015636 & 0.729832 \\
        & Pareto(1,1)       
          & 0.003290(4527) & 0.137377(183803)
          & 0.000547 & 0.022837 \\
        & Model$_1$        
          & 0.004701(2198) & 0.175290(25778) 
          & 0.003925 & 1.471159 \\
        & Model$_2$        
          & 0.004991(1895) & 0.173241(30297) 
          & 0.033870 & 0.117551 \\
        & Model$_3$        
          & 0.009817(2167) & 0.348440(68293) 
          & 0.016706 & 0.592932 \\
        \bottomrule
      \end{tabular}
    \end{adjustbox}
  \endgroup

  \caption{\small Comparison of error between the surrogate and theoretical losses. Noise is
    $\sim\mathcal{N}(0,1)$, $K=10$, $\text{epochs}=1000$, and $N=1000$. Entries are mean~$\pm$~std over 100 trials; last two columns show percentage error.}\label{Learning1D table}
\end{table*}

Figure \ref{Comparison error Surrogate Loss vs Theoretical figures} illustrates the surrogate loss versus the theoretical one across several distributions, presented in log-scale to highlight performance differences.

\begin{figure}[H]
    \centering
    \begin{subfigure}[b]{0.32\linewidth} % Adjusted to fit four images in one row
        \centering
        \includegraphics[width=\linewidth]{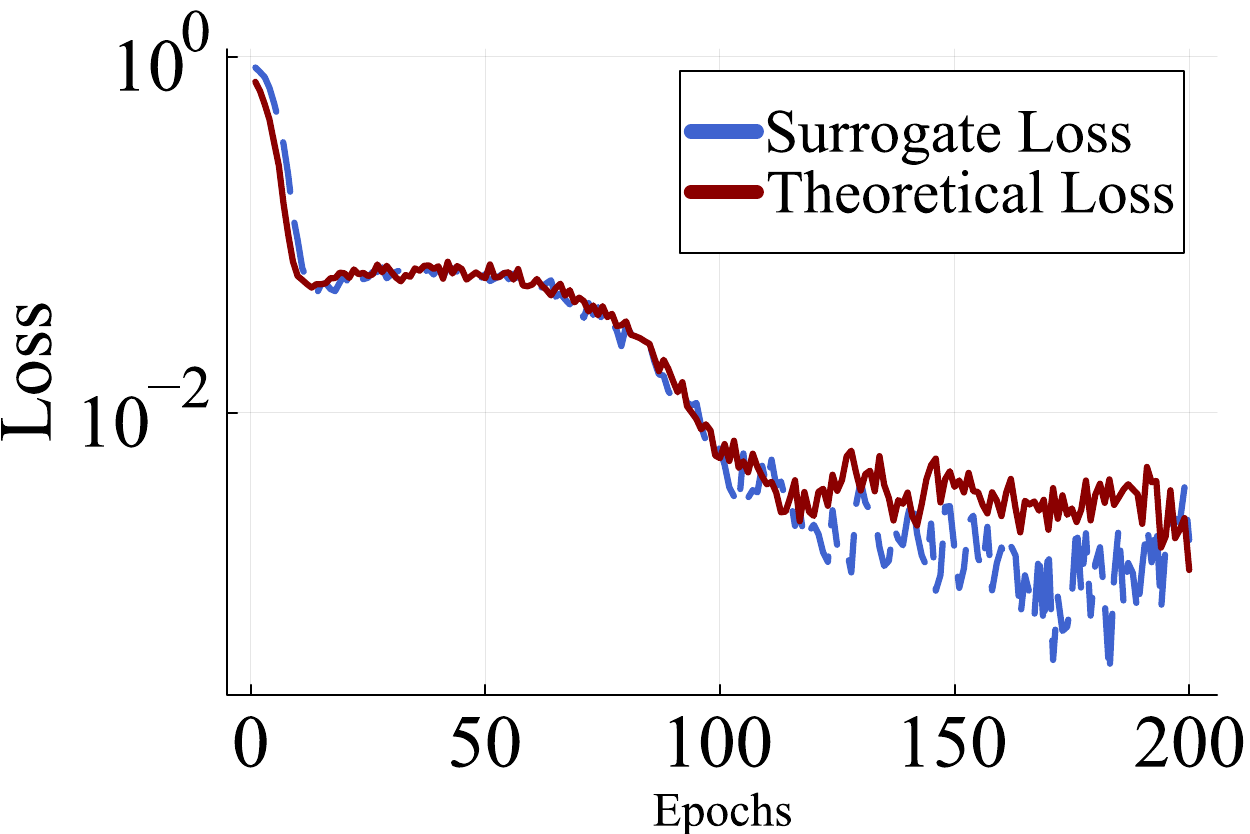}
        \caption{$\normdist{4}{2}$}
    \end{subfigure}
    \hfill
    \begin{subfigure}[b]{0.32\linewidth} % Adjusted to fit four images in one row
        \centering
        \includegraphics[width=\linewidth]{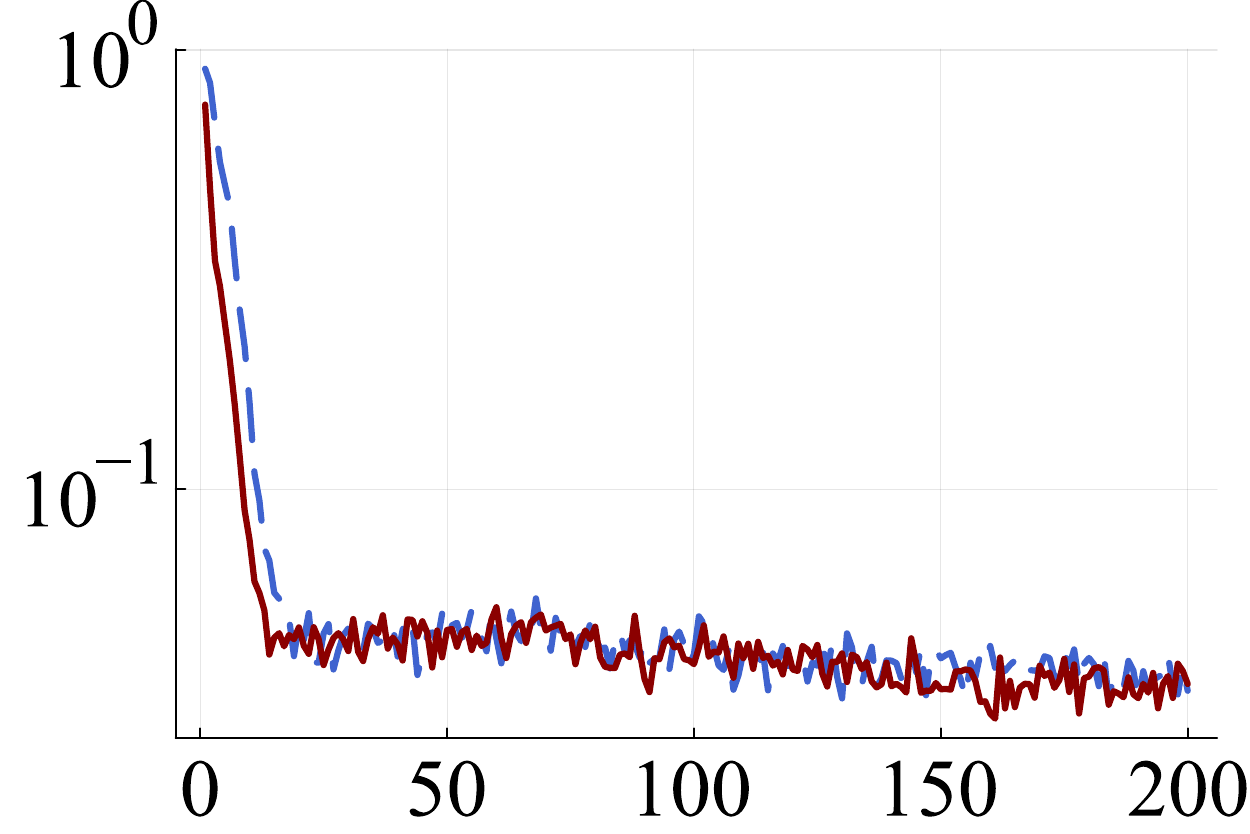}
        \caption{Pareto$(1,1)$}
    \end{subfigure}
    \hfill
    \begin{subfigure}[b]{0.32\linewidth} % Adjusted to fit four images in one row
        \centering
        \includegraphics[width=\linewidth]{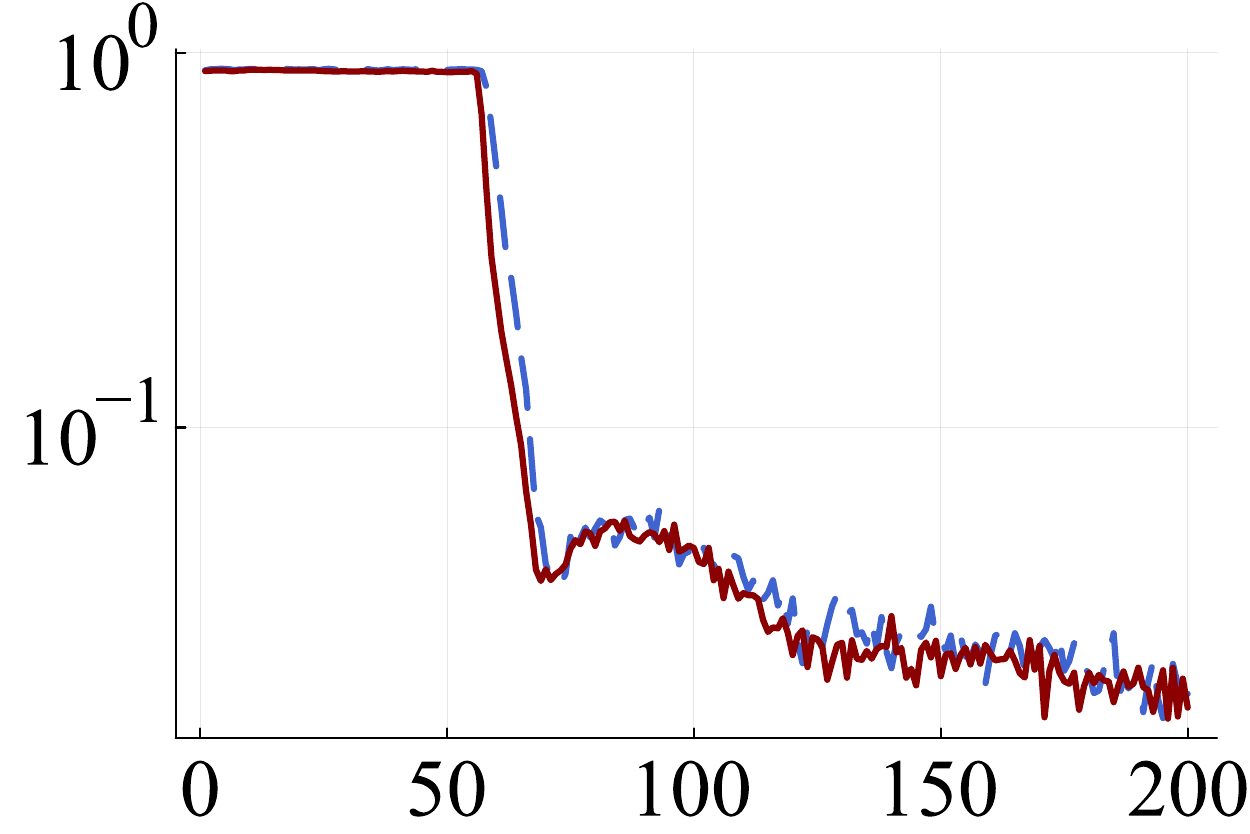}
        \caption{$\text{Model}_3$}
    \end{subfigure}
    %\hfill
    %\begin{subfigure}[b]{0.24\linewidth} % Adjusted to fit four images in one row
    %    \centering
    %    \includegraphics[width=\linewidth]{}
    %    \caption{$\text{Model}_2$}
    %\end{subfigure}
    
    \caption{\small Comparison of the surrogate loss and the theoretical one during training for different distributions. 
    % Hyperparameters: \(K=10\), \(N=1000\), and 200 epochs.
    Only the first 200 epochs are shown, and the scale in the vertical axis is logarithmic.
    % Results are shown in log scale.
    }
    \label{Comparison error Surrogate Loss vs Theoretical figures}
\end{figure}

In conclusion, the surrogate loss performs well for different target distributions, as shown in Table \ref{Learning1D table} and Figure \ref{Comparison error Surrogate Loss vs Theoretical figures}. The small \(L_1\) and \(L_{\infty}\) norms with minimal percentage errors indicate its close approximation of the theoretical loss. Even for complex distributions like the mixture models, it maintains low error rates, demonstrating robustness and reliability.

\subsection{Efficiency gains from progressive $K$ training vs. fixed $K$} \label{subsection: Efficiency Gains from Progressive $K$ Training vs. Fixed $K$}

In Table \ref{Results K progressive training}, we compare the outcomes of progressively increasing \( K \) during training against a fixed \( K \) value. After running 100 trials and averaging the results, we found that progressively increasing \( K \) reduces training time by up to 50\% without sacrificing accuracy. Additionally, the KSD values are comparable or slightly better, indicating that this strategy offers a more efficient balance between time savings and model performance.

\begin{table}[h]
  \centering
  \begingroup
    % Local siunitx settings
    \sisetup{
      separate-uncertainty   = true,
      uncertainty-separator  = \pm,
      table-number-alignment = center,
      group-digits           = false
    }
    \small
    \setlength{\tabcolsep}{4pt}

    \begin{adjustbox}{width=0.8\textwidth,center}
      \rowcolors{2}{gray!15}{white}
      \begin{tabular}{
        l
        S[table-format=1.5(4)]  % Progressive KSD
        S[table-format=4.0]     % Progressive Time (integer)
        S[table-format=1.5(4)]  % K=10 KSD
        S[table-format=5.0]     % K=10 Time
        S[table-format=2.2]     % Improvement %
      }
        \rowcolor{gray!30}\toprule
        & \multicolumn{2}{c}{\textbf{Progressive $K$}}
        & \multicolumn{2}{c}{\textbf{$K=10$}}
        & {\textbf{Time Imp.\,\%}} \\
        \cmidrule(lr){2-3}\cmidrule(lr){4-5}\cmidrule(lr){6-6}
        & {KSD} & {Time}
        & {KSD} & {Time}
        & {Time}\\
        \midrule
        $\mathcal{N}(4,2)$ 
          & 0.00831(268) & 4149
          & 0.00894(276) &  9016
          & 53.98 \\
        $\mathcal{U}(-2,2)$ 
          & 0.01431(158) & 8721
          & 0.01373(152) & 11745
          & 25.74 \\
        Cauchy(1,2)      
          & 0.01084(146) & 3700
          & 0.01190(161) &  8002
          & 53.76 \\
        Pareto(1,1)      
          & 0.08344(334) & 4978
          & 0.24238(17257) & 10800
          & 53.90 \\
        Model$_1$        
          & 0.01001(134) & 8000
          & 0.01175(336) & 11044
          & 27.56 \\
        Model$_2$        
          & 0.01067(270) & 11042
          & 0.00921(187) &  8039
          & 27.19 \\
        Model$_3$        
          & 0.18525(406) & 8700
          & 0.18053(4512) & 11207
          & 22.37 \\
        \bottomrule
      \end{tabular}
    \end{adjustbox}
  \endgroup
  \caption{\small Comparison of progressive‐$K$ training versus fixed $K=10$. Noise is
    $\sim\mathcal{N}(0,1)$, $K_{\max}=10$ (progressive only), epochs = 200, and $N=1000$.
    Times are in seconds; “KSD” entries are mean\,$\pm$\,std.}
  \label{Results K progressive training}
\end{table}

\subsection{Comparing ISL and Pareto-ISL for approximating heavy-tailed multi-dimensional distributions} \label{subsection: Multi-dimensional distributions heavy-tailed}

We define multidimensional distributions with heavy-tailed characteristics and train a generator to approximate them. Specifically, we introduce a joint distribution on \([X_0, X_1]\) with the following component definitions

%We now define some multi-dimensional distributions with heavy-tailed characteristics and attempt to train a generator to approximate them. First, we define a joint distribution on $[X_0, X_1]$ with components defined as follows:
\begin{equation*}
%\begin{cases}
X_0 = A + B, \qquad X_1 = \operatorname{sign}(A - B) \, |A - B|^{1/2},
%\end{cases}
\end{equation*}
where \( A \) and \( B \) are independent Cauchy r.vs. with location $0.5$ and scale $1.0$. Note that \( X_0 \) and \( X_1 \) have different tail indices (1 and 1/2, respectively) and are not independent.

We trained a Pareto-ISL model on this distribution using a NN with an input layer of 2 features, followed by three hidden layers of 256 neurons each with ReLU activations. The input was a mixture of GPD with tail indices of 1 and 0.5. As shown in Figure \ref{fig:Approximanting_Multi-dimensional_distributions}, the marginals and joint distributions closely match the target. We also compared these results to those obtained using ISL with multivariate normal input noise, characterized by a zero mean and identity covariance matrix.

\begin{figure}[h] \label{appendix:figure:isl-pareto vs isl multidim}
    \centering
    \begin{subfigure}[tb]{0.32\linewidth}
        \centering
        \includegraphics[width=\linewidth]{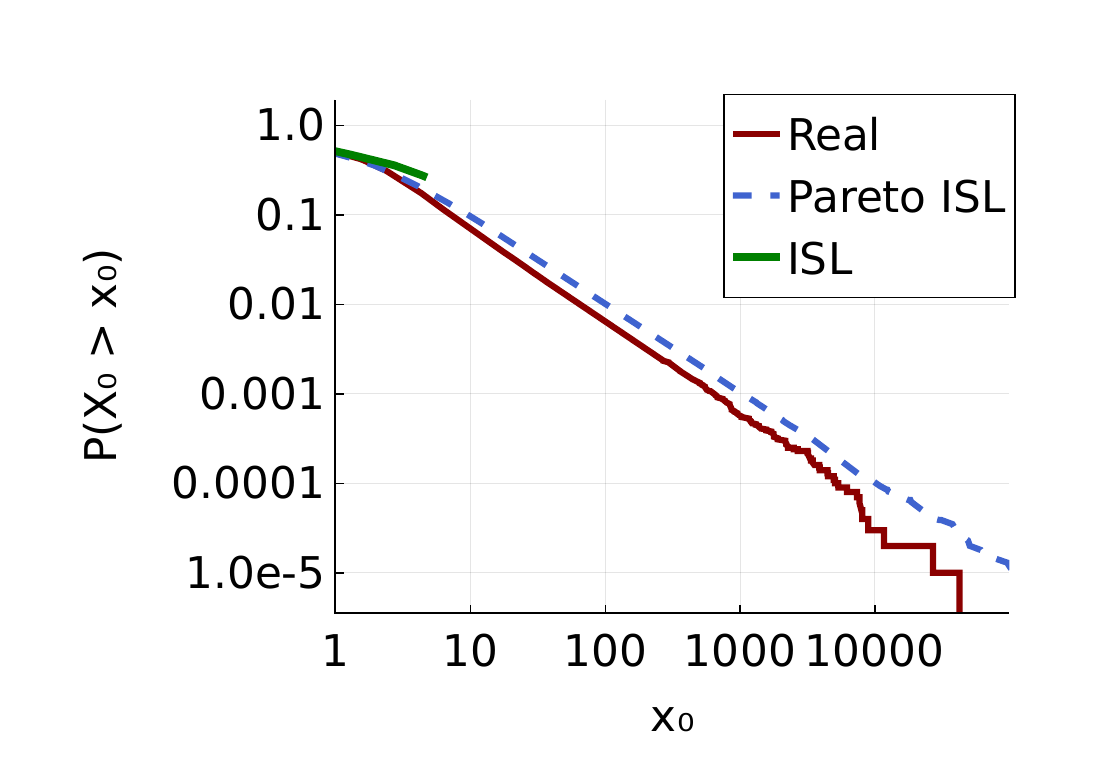}
        \subcaption{\small Marginal Distribution along \( X_0 \)}
        \label{fig:pareto_marginal_x0}
    \end{subfigure}
    \hfill
    \begin{subfigure}[tb]{0.32\linewidth}
        \centering
        \includegraphics[width=\linewidth]{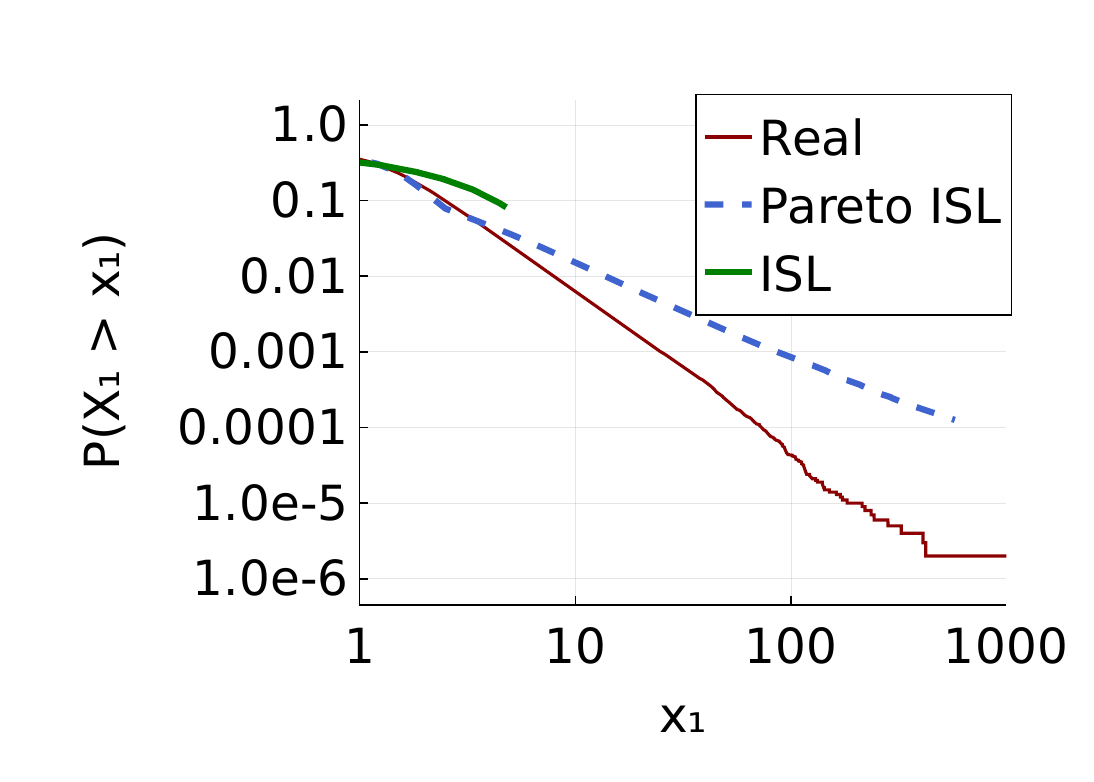}
        \subcaption{\small Marginal Distribution along \( X_1 \)}
        \label{fig:pareto_marginal_x1}
    \end{subfigure}
    \hfill
    \begin{subfigure}[tb]{0.32\linewidth}
        \centering
        \includegraphics[width=\linewidth]{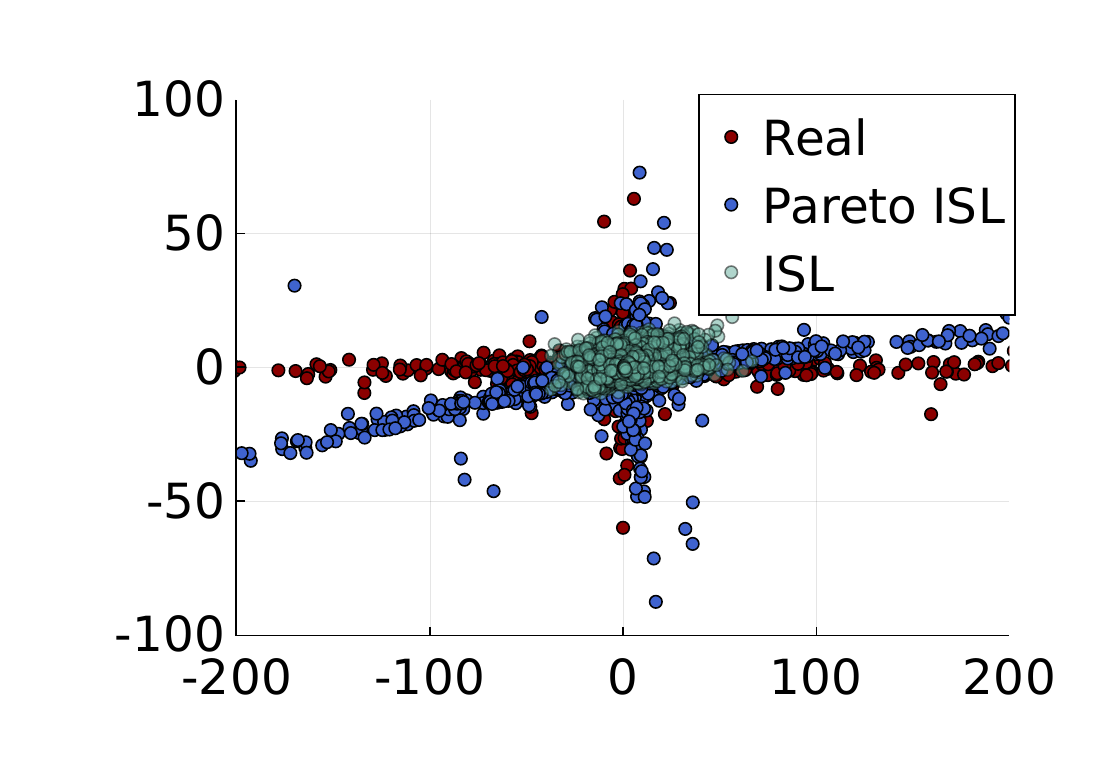}
        \subcaption{\small Joint Distribution Scatter Plot: Pareto-ISL vs. Gaussian Noise ISL}
        \label{fig:gaussian_noise_x1}
    \end{subfigure}
    \caption{\small Plots (a) and (b) show the marginal distributions along the \(X_0\) and \(X_1\) axes for the Pareto-ISL model and ISL with Gaussian noise. Plot (c) presents a scatter plot of 10000 samples from both models, illustrating the joint distribution between the two dimensions.}
    \label{fig:Approximanting_Multi-dimensional_distributions}
\end{figure}

\section{Experimental Setup}\label{Experimental Setup}
All experiments were performed on a MacBook Pro running macOS 13.2.1, equipped with an Apple M1 Pro CPU and 16 GB of RAM. When GPU acceleration was required, we used a single NVIDIA TITAN Xp with 12 GB of VRAM. Detailed hyperparameter settings for each experiment are provided in the corresponding sections.

\bibliography{sample}

\end{document}